\def\eqref#1{equation~\ref{#1}}
\def\1{\bm{1}}
\def\vmu{{\bm{\mu}}}
\def\ve{{\bm{e}}}
\def\vx{{\bm{x}}}
\def\vy{{\bm{y}}}
\def\mS{{\bm{S}}}
\def\mX{{\bm{X}}}
\DeclareMathAlphabet{\mathsfit}{\encodingdefault}{\sfdefault}{m}{sl}
\SetMathAlphabet{\mathsfit}{bold}{\encodingdefault}{\sfdefault}{bx}{n}
\DeclareMathOperator*{\argmax}{arg\,max}
\newcommand\Tstrut{\rule{0pt}{2.2ex}}
\journal{Artificial Intelligence}
\def\ps@pprintTitle{%
 \let\@oddhead\@empty
 \let\@evenhead\@empty
 \def\@oddfoot{\hbox{\textit{\small To appear in Artificial Intelligence Journal}}}%
 \let\@evenfoot\@oddfoot}
\begin{document}

\theoremstyle{plain}
\newtheorem{theorem}{Theorem}
\newtheorem{proposition}{Proposition}
\newtheorem{lemma}{Lemma}
\newtheorem{corollary}{Corollary}
\theoremstyle{definition}
\newtheorem{definition}{Definition}
\newtheorem{assumption}{Assumption}
\theoremstyle{remark}
\newtheorem{remark}{Remark}

\begin{frontmatter}



\title{Open-World Continual Learning: Unifying Novelty Detection and Continual Learning}

\author[label1]{Gyuhak Kim\fnref{labela}}
\fntext[labela]{Equal contribution}
\ead{gkim87@uic.edu}
\affiliation[label1]{organization={University of Illinois Chicago},
            addressline={851 S Morgan St},
            city={Chicago},
            state={Illinois},
            postcode={60607},
            country={United States}}

\author[label2]{Changnan Xiao\fnref{labela}}
\ead{changnanxiao@gmail.com}
\affiliation[label2]{organization={ByteDance},
            addressline={Building 24, Zone B, 1999 Yishan Road},
            city={Shanghai},
            postcode={201100},
            country={China}}

\author[label1]{Tatsuya Konishi\fnref{labelb}}
\fntext[labelb]{The work was done when this author was visiting Bing Liu's group at the University of Illinois Chicago. The original affiliation and address of this author are \textit{KDDI Research, Inc., 2-1-15 Ohara, Fujimino-shi, Saitama, 356-8502, Japan}}
\ead{tt-konishi@kddi.com}

\author[label1]{Zixuan Ke}
\ead{zke4@uic.edu}

\author[label1]{Bing Liu\corref{cor1}}
\ead{liub@uic.edu}
\cortext[cor1]{Corresponding Author}

\begin{abstract}
As AI agents are increasingly used in the real open world with unknowns or novelties, they need the ability to (1) recognize objects that (a) they have learned before and (b) detect items that they have never seen or learned, and (2) learn the new items incrementally to become more and more knowledgeable and powerful. (1) is called \textit{novelty detection} or \textit{out-of-distribution} (OOD) \textit{detection} and (2) is called \textit{class incremental learning} (CIL), which is a setting of \textit{continual learning} (CL). 
In existing research, OOD detection and CIL are regarded as two completely different problems. This paper first provides a theoretical proof that good OOD detection for each task within the set of learned tasks (called \textit{closed-world OOD detection}) is \textit{necessary} for successful CIL. We show this by decomposing CIL into two sub-problems: \textit{within-task prediction} (WP) and \textit{task-id prediction} (TP), and proving that TP is correlated with closed-world OOD detection. The \textit{key theoretical result} is that regardless of whether WP and OOD detection (or TP) are defined explicitly or implicitly by a CIL algorithm, good WP and good closed-world OOD detection are \textit{necessary} and \textit{sufficient} conditions for good CIL, which unifies novelty or OOD detection and continual learning (CIL, in particular). We call this traditional CIL the \textit{closed-world CIL} as it does not detect future OOD data in the open world. The paper then proves that the theory can be generalized or extended to \textit{open-world CIL}, which is the proposed \textit{open-world continual learning}, that can perform CIL in the open world and detect future or open-world OOD data. 
Based on the theoretical results, new CIL methods are also designed, which outperform strong baselines in CIL accuracy and in continual OOD detection by a large margin.
\end{abstract}



\begin{keyword}
Open world learning \sep continual learning \sep OOD detection
\end{keyword}



\end{frontmatter}


\section{Introduction}
\label{sec.intro}
The current dominant machine learning paradigm (ML) makes the~\textit{closed-world assumption}, which means that the classes of objects seen 
by the system in testing or deployment must have been seen during training~\citep{scholkopf1999support,scheirer2014probability,bendale2015towards,fei2016learning,liu2020learning}, i.e., there is nothing novel occurring during testing or deployment. This assumption is invalid in practice as the real environment is an \textbf{open world} that is full of unknowns or novel objects.
To make an AI agent thrive in the open world, it has to detect novelties and learn them incrementally to make the system more knowledgeable and adaptable over time. This process involves multiple activities, such as \textit{novelty/OOD detection}, \textit{novelty characterization}, \textit{adaption}, \textit{risk assessment}, and \textit{continual learning} of the detected novel items or objects~\citep{liu2023ai,liu2022self}.
Novelty detection, also called \textit{out-of-distribution} (OOD) \textit{detection}, 
aims to detect unseen objects that the agent has not learned. On detecting novel objects or situations, the agent has to respond or adapt its actions. But in order to adapt, it must first \textit{characterize} the novel object as without it, the agent would not know how to respond or adapt. For example, it may characterize a detected novel object as looking like a dog. Then, the agent may react like it would react to a dog. In the process, the agent also constantly assesses the risk of its actions. Finally, it also learns to recognize the new object incrementally so that it will not be surprised when it sees the same kind of object in the future. This incremental learning is called \textit{continual learning} (CL) or \textit{lifelong learning}~\citep{Chen2018lifelong,ke2022continual}. Note that before learning, the agent must obtain labeled training data, which can be collected by the agent through interaction with the environment or human users. This aspect is out of the scope of this paper. See \citep{liu2023ai,liu2022self} for details. 

This paper focuses only on the key learning aspects of the open world scenario: (1) OOD/novelty detection and (2) continual learning, more specifically \textit{class incremental learning} (CIL) (see the definition below). In the research community, (1) and (2) are regarded as two completely different problems, but this paper theoretically unifies them by proving that good OOD detection for each task within the set of learned tasks, which we call \textit{closed-world OOD detection}, is in fact \textit{necessary} for CIL. 
Below, we define the concepts of OOD detection and continual learning. 

\vspace{2mm}
\textit{\textbf{Out-of-distribution (OOD) detection}}: Given the training data
$\mathcal{D}=\{(x^i, y^i)_{i=1}^{n}\}$, where $n$ is the number of data samples, and $x^i \in \mathbf{X}$ is an input sample and $y^i \in \mathbf{Y}$ (the set of all class labels in $\mathcal{D}$) is its class label, our goal is to build a classifier $f : \mathbf{X} \rightarrow \mathbf{Y} \cup \{O\}$ that can detect test instances that do not belong to any classes in $\mathbf{Y}$ (called \textit{OOD detection}), which are assigned to the class $O$. $\mathbf{Y}$ is often called the \textit{in-distribution} (IND) \textit{classes}.
\vspace{2mm}

We also call this \textit{open-world OOD detection}. As we can see from the definition, an OOD detection algorithm can also classify test instances belonging to $\mathbf{Y}$ to their respective classes, which is called IND classification, although most OOD detection papers do not report the IND classification results.

Continual learning (CL) aims to incrementally learn a sequence of tasks. Each task consists of a set of classes to be learned together (the set may contain only a single class). Once a task is learned, its training data (at least a majority of it) is no longer accessible. Thus, unlike multitask learning, in learning a new task, CL will not be able to use the data of the previous tasks.  A major challenge of CL is \textit{catastrophic forgetting} (CF), which refers to the phenomenon that in learning a new task, the neural network model parameters need to be modified, which may corrupt the knowledge learned for previous tasks in the network and cause performance degradation for the previous tasks~\citep{McCloskey1989}.  Although many CL techniques have been proposed, they are mainly empirical. Limited theoretical work has been done on how to solve CL. This paper performs such a theoretical study about the necessary and sufficient conditions for effective CL. Two main CL settings have been extensively studied: \textit{class incremental learning} (CIL) and \textit{task incremental learning} (TIL)~\citep{van2019three}. In CIL, the learning process builds a single classifier for all tasks/classes  
learned so far. In testing, a test instance from any class may be presented for the model to classify. No prior task information (e.g., task-id) of the test instance is provided. Formally, CIL is defined as follows.  

\vspace{2mm}
\textit{\textbf{Class incremental learning}} (CIL). CIL learns a sequence of tasks, $1, 2, \cdots$. Let $T$ be the number of tasks that have been learned so far. Each task $k$ ($1 \le k \le T$) has a training dataset
$\mathcal{D}_k=\{(x_k^i, y_k^i)_{i=1}^{n_k}\}$, where $n_k$ is the number of data samples in task $k$, and $x_k^i \in \mathbf{X}$ is an input sample and $y_k^i \in \mathbf{Y}_k$ (the set of all classes of task $k$) is its class label. All $\mathbf{Y}_k$'s are disjoint ($\mathbf{Y}_k \cap \mathbf{Y}_{k'} = \emptyset,\, \forall k \neq k'$) and $\bigcup_{k=1}^T \mathbf{Y}_k = \mathbf{Y}$. 
The goal of CIL is to construct a single predictive function or classifier $f : \mathbf{X} \rightarrow \mathbf{Y}$ that can identify the class label $y$ of each given test instance $x$ from the $T$ tasks.
\vspace{2mm}

Based on CIL, we can also define the term \textit{close-world OOD detection}. 

\vspace{2mm}
\textit{\textbf{Closed-world OOD detection}}: \textit{Closed-world OOD detection} for a given task $k$ among the $T$ tasks that have been learned so far is \textit{OOD detection} regarding the classes of task $k$ as the IND classes and those of the other $T-1$ tasks as the OOD classes. 
\vspace{2mm}

From now on when we refer to \textit{OOD detection} on its own (which is \textit{open-world OOD detection}), we mean 
it is not limited to the $T$ learned tasks, as opposed to the \textit{closed-world OOD detection}. Clearly, (open-world) OOD detection encompasses closed-world OOD detection, but not vice versa.

Unlike CIL, each task in TIL is a separate or independent classification problem. {For example, one task could be to classify different breeds of dogs and another task could be to classify different types of animals (the tasks may not be disjoint).} One model is built for each task in a shared network. In testing, the task-id of each test instance is provided and the system uses only the specific model for the task (dog or animal classification) to classify the test instance. Formally, TIL is defined as follows.

\vspace{2mm}
\textit{\textbf{Task incremental learning}} (TIL). TIL learns a sequence of tasks, $1, 2, \cdots$. Let $T$ be the number of tasks that have been learned so far. Each task $k$ ($1 \le k \le T$) has a training dataset 
$\mathcal{D}_k=\{((x_k^i, k), y_k^i)_{i=1}^{n_k}\}$, 
where $n_k$ is the number of data samples in task $k \in \mathbf{T} = \{1, 2, ..., T\}$, and $x_k^i \in \mathbf{X}$ is an input sample and $y_k^i \in \mathbf{Y}_k \subset \mathbf{Y}$ is its class label.
The goal of TIL is to construct a predictor $f: \mathbf{X} \times \mathbf{T} \rightarrow \mathbf{Y}$ to identify the class label $y \in \mathbf{Y}_k$ for $(x, k)$ (the given test instance $x$ from task $k$). 
\vspace{2mm}

This paper focuses on CIL, which involves incrementally learning new or novel object classes—a key aspect of open-world learning. While the proposed methods are also applicable to TIL, we do not address it in this paper. TIL is generally simpler, and several existing techniques can achieve it without CF~\citep{Serra2018overcoming,supsup2020}. In contrast, CIL remains highly challenging due to the difficulty of Inter-task Class Separation (ICS), i.e., establishing decision boundaries between classes from the new task and those from previous tasks in learning the new task without accessing the training data of previous tasks.

\vspace{2mm}
\textbf{Problem Statement} (\textit{open-world continual learning}): \textit{Open-world continual learning} (OWCL) is defined as CIL with the OOD detection capability. We also call it \textit{open-world CIL} or CIL$^+$. At any time, the resulting open-world CIL model can classify test instances belonging to the classes in the $T$ tasks that have been learned so far to their respective classes and also detect OOD instances that do not belong to any of the learned classes so far. 
\vspace{2mm}

Note that OOD detection in CIL$^+$ is different from traditional OOD detection (which sees the full IND data together) because, in CIL$^+$, the model does not see all the IND data together. Instead, the IND data comes in a sequence of tasks incrementally, and in learning each task, the model does not see any data (or only a very small sample) of the old or previous tasks. 

\vspace{2mm}
\textbf{Main contributions:} This paper makes three main contributions. First, it theoretically proves the necessary and sufficient conditions for solving the CIL problem. A good closed-world OOD detection performance is one of the necessary conditions, which connects or unifies OOD detection and CIL. Since in this traditional CIL, the test instances are assumed to be from one of the $T$ tasks that have been learned, we call the existing CIL the \textit{closed-world CIL}. Second, we prove that the theory can naturally be generalized or extended to the \textit{open-world CIL}, which is the proposed \textit{open-world continual learning}. Open-world CIL can perform CIL in the open world and detect OOD test data that do not belong to any of the $T$ tasks learned so far. Third, based on the theory, several new CIL algorithms are designed, which are also able to detect novel (or OOD) instances for the open-world continual learning (OWCL) setting. Note that from here onward, when we do not explicitly say open-world CIL, CIL means the traditional CIL.

\vspace{2mm}
\textbf{Theory.} We conduct a theoretical study of CIL, which is applicable to any CIL classification model. Instead of focusing on the traditional PAC generalization bound~\citep{pentina2014pac} or neural tangent kernel (NTK)~\citep{karakida2022learning}, we focus on how to solve the CIL problem. We first decompose the CIL problem into two sub-problems in a probabilistic framework: \textit{Within-task Prediction} (WP) and \textit{Task-id Prediction} (TP). WP means that the prediction for a test instance is only made within the classes of the task to which the test instance belongs, which is basically the TIL problem. TP predicts the task-id. TP is needed because, in CIL, task-id is not provided at test time. This paper then proves based on cross-entropy loss that (i) the CIL performance is bounded by WP and TP performances, and (ii) TP and closed-world OOD detection performance bound each other. This paper further generalizes the result to open-world CIL (or CIL$^+$). These results unify CIL and OOD detection.

\vspace{+2mm}
\noindent
\textbf{\textit{Key theoretical results}}: Regardless of whether WP and TP or OOD detection are defined explicitly or implicitly by a closed-world or open-world CIL algorithm, (1) good WP and good TP or closed-world OOD detection are
\textit{necessary} and \textit{sufficient} conditions for good closed-world CIL performances and (2) good WP and good TP or open-world OOD detection are
\textit{necessary} and \textit{sufficient} conditions for good open-world CIL performances.\footnote{This result applies to both batch/offline and online/stream CIL and to CIL problems with blurry task boundaries which means that some training data of a task may come later together with a future task.} 
\vspace{2mm}

The intuition of the theory is simple because if a closed-world or open-world CIL model is perfect at detecting OOD samples for each task, 
which solves the ICS problem, then closed-world or open-world CIL is reduced to WP, which is the traditional single-task supervised learning for each task. Note that many OOD detection algorithms can also perform IND classification, which is WP. 

\vspace{2mm}
\textbf{New CIL Algorithms for OWCL.} The theoretical result provides principled guidance for solving the (closed-world or open-world) 
CIL problem. 
Based on the theory,  several new CIL methods are designed. (1) The first few methods integrate a TIL method and an OOD detection method, which outperform strong baselines in both the CIL and TIL settings by a large margin. This combination is attractive because TIL has achieved no forgetting, and we only need a strong OOD detection technique that can perform both IND prediction and OOD detection to learn each task to achieve strong CIL results. We do not propose a new OOD detection method as there are numerous such methods in the literature. We use two existing ones. (2) Another method is based on a pre-trained model and an OOD replay technique, which performs even better, outperforming existing baselines markedly in both CIL and OOD detection in the OWCL setting.  

\section{Related Work}
\label{sec.related}

Although a large number of algorithms have been proposed to solve the CIL problem, they are mainly empirical. Two papers have focused on studying the traditional PAC generalization bound~\citep{pentina2014pac} or NTK~\citep{karakida2022learning}, but they do not tell how to solve the CL problem. This paper focuses on how to solve the CIL problem. To the best of our knowledge, we are not aware of any work that has proposed a theory on how to solve CIL. Also, none of the existing work has connected CIL and OOD detection. Our work shows that a good CIL algorithm can naturally perform OOD detection in the open world. Below, we first survey four popular families of CL approaches, which are mainly for overcoming catastrophic forgetting (CF). We then discuss related works about open-world learning. 

\textbf{Regularization-based methods} prevent forgetting by restricting the learned parameters for previous tasks from being modified significantly by using a regularization term to penalize such changes \citep{Kirkpatrick2017overcoming,Zenke2017continual} or to regularize the learned representations or outputs so that they are not far from those of the previously learned network~\citep{Li2016LwF,Zhu_2021_CVPR_pass}.

\textbf{Replay-based methods}~\citep{Rebuffi2017,chaudhry2018efficient,castro2018end_eeil,NEURIPS2020_b704ea2c_derpp,yan2021dynamically} mitigate forgetting by saving a small amount of training data from previous tasks in a memory buffer and jointly train the network using the current data and the previous task data saved in the memory. Some methods in this family also study which samples in the memory should be used in replaying~\citep{NIPS2019_9357_mir} or which samples in the training data should be saved for later replaying \citep{Rebuffi2017,Liu_2020_CVPR}.

\textbf{Generative methods} construct a generative network to generate raw training data~\citep{Shin2017continual,ostapenko2019learning,ayub2021eec}. The generated data are used with the current task training data to jointly train the classification network. \cite{Zhu_2021_CVPR_pass} generates features instead of raw data. The generated samples in these methods are used to prevent forgetting in both the generative network and the classification network.

\textbf{Parameter-isolation methods}~\citep{Serra2018overcoming,supsup2020} train a set of task-specific parameters to effectively protect the important parameters of each task from being updated, which thus has almost no forgetting. A limitation of the approach is that the correct task-id of each test instance must be known to the system to select the corresponding task-specific parameters at inference. These methods are thus mainly used for \textit{task incremental learning} (TIL). Some CIL methods also used these methods~\citep{abati2020conditional,von2019continual,rajasegaran2020itaml,henning2021posterior} and they have separate mechanisms to predict the task-id (more on this below). However, their CIL performances are far below that of recent replay-based counterparts (see Sections~\ref{sec.experiment1} and~\ref{sec.experiment2} for details).
Two of our proposed CIL methods also use two parameter-isolation methods (HAT \citep{Serra2018overcoming} and SupSup~\citep{supsup2020}) for TIL as one of the components. 

Using a TIL method for CIL means that CIL is decomposed into WP and TP. Task-id prediction (TP) is the key challenge. For example, CCG~\citep{abati2020conditional} constructs an additional network to predict the task-id. 
iTAML~\citep{rajasegaran2020itaml} identifies the task-id of the test data in a batch. A serious limitation of this is that it requires the test data in a batch to belong to the same task. Our methods are different as they can predict for a single test instance at a time. HyperNet~\citep{von2019continual} and PR~\citep{henning2021posterior} propose an entropy-based task-id prediction method. SupSup~\citep{supsup2020} predicts task-id by finding the optimal superpositions at inference. However, these methods perform poorly because they either do not know that OOD detection is the key for accurate task-id prediction or their task models are not built for OOD detection. It is also important to note that our theory does not explicitly predict task-id. Instead, it uses the TP probability and WP probability for test prediction. 

Several papers have explicitly or implicitly indicated the use of OOD detection for task-id prediction in continual learning. For example, the CIL method in \cite{hu2021continual} is based on one-class classification, which is OOD detection with only a single class as the in-distribution (IND) class. 
In \cite{henning2021posterior}, the authors proposed an uncertainty-based OOD detection framework for task-id prediction. Two specific methods were presented. One uses \textit{entropy} to quantify the uncertainty (which has also been used in some other systems discussed above) and the other is called \textit{agree}, which selects the task that leads to the highest agreement in predictions across task models. There are also related works that did not explicitly make a connection between CIL and OOD detection, their methods implicitly imply it. For example, \cite{yan2021dynamically} uses a regularization similar to OOD detection, which employs the replay data from previous tasks as OOD samples. \cite{van2021class} proposed to train a VAE model for each class to be learned. It then estimates the likelihood $p(x|y)$ and uses the Bayes rule to predict the class ($y$) of each test instance ($x$). Our work makes a theoretical contribution by formally connecting CIL and OOD detection and proving that for a good CIL performance, a good OOD detection capability for each task is \textit{necessary}.

\textit{Open world learning} has been studied by many researchers~\citep{scholkopf1999support,scheirer2014probability,bendale2015towards,fei2016learning,liu2020learning,liu2022self}. However, the existing research mainly focused on novelty detection, also called \textit{open set recognition} or \textit{out-of-distribution} (OOD) \textit{detection}. 
Some researchers have also studied learning the novel objects after they are detected and manually labeled \citep{bendale2015towards,fei2016learning,xu2019open,jafarzadeh2020review}. However, none of them perform continual learning, which has additional challenges of catastrophic forgetting (CF) and inter-task class separation (ICS). Several researchers also studied other related tasks in addition to novelty detection, e.g., characterization of novelties and adaptation of novelties to maximize the performance task~\cite{loyall2022integrated,thai2022architecture}. Again, these works are not about continual learning. 
Excellent surveys of novelty detection or OOD detection and open-world learning can be found in~\citep{yang2021generalized,pang2021deep,parmar2021open,jafarzadeh2020review}. \cite{gummadi2022shels} did novelty detection and also continual learning, but its continual learning uses the regularization-based method. It is quite weak because it has serious forgetting. A position paper~\citep{langley2020open} recently presented some nice blue-sky ideas about open-world learning, but it does not propose or implement any algorithm. 

Our proposed algorithms are quite different. 
In training, based on our theory, we use two existing OOD detection methods to verify that our theory can guide us to design new and much more effective CIL algorithms. In testing, our OOD detection is in the open-world continual learning (OWCL) setting, which has been described in the introduction section.

Several researchers have studied \textit{novel class discovery} \citep{Han2019learning}, which is defined as discovering the hidden classes in the detected novel or OOD instances. Our work does not perform this function. We assume that the training data for each new task is given. Performing automatic class label discovery is still very challenging as in many cases, the class assignments can be subjective and are determined by human users. For example, for a dog, whether it should just be labeled as a dog or a specific breed of dog is a subjective decision and depends on applications. 

Some existing works have combined OOD detection and continual learning \cite{gummadi2022shels,he2022out,rios2022incdfm}. These papers use OOD score thresholds to determine OOD instances and also do continual learning afterward. However, their continual learning still assumes that the training data are given as it is hard to do real-time detection and learning. This is because, without human involvement, it is impossible to obtain novel class labels in general and verify the correctness of OOD detection results. Any error in OOD detection will propagate to the continual learning phase. \cite{liu2023ai,liu2022self} reported a continual learning chatbot that can detect novel user utterances that the system does not understand and chat with the user through its novelty characterization mechanism to get the ground truth. However, this system is based on saving new/novel utterances and performing matching and retrieval in subsequent chatting. \cite{loyall2022integrated} proposed an integrated online architecture that combines and extends probabilistic programming and planning to (1) detect novelty, (2) incrementally characterize the novelty, and (3) continually adapt its task-based reasoning to the evolving understanding of the novelty to maximize task performance. However, this work is not about continual learning. \cite{palash2023continuous} also reported a system for continuous emotion novelty detection. 

\section{A Theoretical Study on Solving CIL}\label{sec.theorem}
This section presents our theory for solving CIL, which also covers novelty or OOD detection. 
It first shows that the CIL performance improves if the within-task prediction (WP) performance and/or the task-id prediction (TP) performance improve, and then shows that TP and OOD detection bound each other, which indicates that CIL performance is controlled by WP and OOD detection. This connects CIL and OOD detection. After that, we study the necessary conditions for a good CIL model, which includes a good WP, and a good TP or OOD detection. In the first four sub-sections, we focus on the traditional CIL that is limited to the number of tasks $T$ that have been learned so far, which we also call \textit{closed-world CIL}. OOD detection in this context is also within the $T$ learned tasks and is called \textit{closed-world OOD detection} (see Section~\ref{sec.intro}). For simplicity in presentation, we will not add \textit{closed-world} before CIL or OOD detection below. In Section~\ref{sec.ow-CIL}, we generalize/extend the theory to open-world CIL or open-world continual learning, which will also detect OOD data that do not belong to any of the $T$ tasks learned so far. Table~\ref{Tab:acronyms} gives the list of acronyms used in the paper.

\begin{table}[h]
\centering
\caption{Acronyms used in the paper}
\resizebox{0.4\columnwidth}{!}{
\begin{tabular}{l c}
\toprule
CL & Continual learning \\
CIL & Class incremental learning \\
TIL & Task incremental learning \\
OOD & Out-of-distribution \\
IND & In-distribution \\
WP & Within-task prediction \\
TP & Task-id prediction \\
CF & Catastrophic forgetting \\
OWCL & Open-world continual learning \\
NTK & Neural tangent kernel \\
AUC & Area Under the ROC Curve \\
\bottomrule
\end{tabular}
}
\label{Tab:acronyms}
\vspace{-2mm}
\end{table}

\subsection{CIL Problem Decomposition} \label{sec:decomposition}
This sub-section first presents the assumptions made by CIL based on its definition and then proposes a decomposition of the CIL problem into two sub-problems. Assume that a CIL system has learned a sequence of $T$ tasks $\{(\mathbf{X}_k, \mathbf{Y}_k)\}_{k=1,\dots,T}$ so far, where $\mathbf{X}_{k}$ is the domain of task $k$ and $\mathbf{Y}_k$ are classes of task $k$ as $\mathbf{Y}_k = \{\mathbf{Y}_{k, j}\}$, 
where $j$ indicates the $j$th class in task $k$. 
Let $\mathbf{X}_{k, j}$ to be the domain of $j$th class of task $k$, where $\mathbf{X}_{k} = \bigcup_j \mathbf{X}_{k, j}$. 
For accuracy, we will use $x \in \mathbf{X}_{k, j}$ instead of $\mathbf{Y}_{k, j}$ in probabilistic analysis. 
Based on the definition of class incremental learning (CIL) (Section~\ref{sec.intro}), the following assumptions are implied,
\begin{assumption}
    The domains of classes of the same task are disjoint, i.e., $\mathbf{X}_{k, j} \cap \mathbf{X}_{k, j'} = \emptyset,\, \forall j \neq j'$.
\end{assumption}
\begin{assumption}
    The domains of tasks are disjoint, i.e., $\mathbf{X}_k \cap \mathbf{X}_{k'} = \emptyset,\, \forall k \neq k'$.
\end{assumption}
For any ground event $D$, the goal of a CIL problem is to learn $\mathbf{P}(x \in \mathbf{X}_{k, j} | D)$. This can be decomposed into two probabilities, \textit{within-task IND prediction} (WP) probability and \textit{task-id prediction} (TP) probability. WP probability is
$\mathbf{P} (x \in \mathbf{X}_{k, j} | x \in \mathbf{X}_{k}, D)$ and TP probability is $\mathbf{P}(x \in \mathbf{X}_k | D)$. 
We can rewrite the CIL problem using WP and TP based on the two assumptions, 
\begin{align}
     \mathbf{P}(x \in \mathbf{X}_{k_0, j_0} | D)
     &= \sum_{k=1,\dots,n} \mathbf{P} (x \in \mathbf{X}_{k, j_0} | x \in \mathbf{X}_k, D) \mathbf{P}(x \in \mathbf{X}_k | D) \label{eq:prob_sum} \\
     &= \mathbf{P} (x \in \mathbf{X}_{k_0, j_0} | x \in \mathbf{X}_{k_0}, D) \mathbf{P}(x \in \mathbf{X}_{k_0} | D) \label{eq:cil_in_til_and_tp}
\end{align}
where $k_0$ means a particular task and $j_0$ is a particular class in the task. 

Some remarks are in order about Eq.~\ref{eq:cil_in_til_and_tp} and our subsequent analysis to set the stage. 

\begin{remark}\label{remark1}
Eq.~\ref{eq:cil_in_til_and_tp} shows that if we can improve either the WP or TP performance, or both, we can improve the CIL performance.   
\end{remark}
\begin{remark} \label{remark:alg}
It is important to note that our theory is not concerned with the learning algorithm or training process. But we will propose some concrete CIL algorithms based on the theoretical result in the experiment section. 
\end{remark}
\begin{remark}\label{remark:disjoint}
We note that the CIL definition and the subsequent analysis apply to tasks with any number of classes (including only one class per task) and to online CIL where the training data for each task or class comes gradually in a data stream and may also cross task boundaries (blurry tasks~\citep{bang2021rainbow}) because our analysis is based on an already-built CIL model after training. Regarding blurry task boundaries, suppose dataset 1 has classes \{dog, cat, tiger\} and dataset 2 has classes \{dog, computer, car\}. We can define task 1 as \{dog, cat, tiger\} and task 2 as \{computer, car\}. The shared class \textit{dog} in dataset 2 is just additional training data of \textit{dog} appeared after task 1. 
\end{remark}
\begin{remark}\label{remark:meaningofeq2}
CIL = WP * TP in Eq.~\ref{eq:cil_in_til_and_tp} means that when we have WP and TP (defined either explicitly or implicitly by implementation), we can find a corresponding CIL model defined by WP * TP. Similarly, when we have a CIL model, we can find the corresponding underlying WP and TP defined by their probabilistic definitions.
\end{remark}
In the following sub-sections, we develop this further concretely to derive the sufficient and necessary conditions for solving the CIL problem in the context of cross-entropy loss as it is used in almost all supervised CIL systems.

\subsection{CIL Improves as WP and/or TP Improve}\label{sec:cil_improve_by_til_and_tp}

As stated in Remark~\ref{remark:alg} above, the study here is based on a \textit{trained CIL model} and not concerned with the algorithm used in training the model. We use cross-entropy as the performance measure of a trained model as it is the most popular loss function used in supervised CL. For experimental evaluation, we use \textit{accuracy} following CL papers. Denote the cross-entropy of two probability distributions $p$ and $q$ as 
\begin{align}
    H(p, q) \overset{def}{=} - \mathbb{E}_p [\log q] = - \sum_i p_i \log q_i.
\end{align}
For any $x \in \mathbf{X}$, let $y$ to be the CIL ground truth label of $x$, where $y_{k_0, j_0} = 1$ if $x \in \mathbf{X}_{k_0, j_0}$ otherwise $y_{k, j} = 0$, $\forall (k, j) \neq (k_0, j_0)$.
Let $\Tilde{y}$ be the WP ground truth label of $x$, where $\Tilde{y}_{k_0, j_0} = 1$ if $x \in \mathbf{X}_{k_0, j_0}$ otherwise $\Tilde{y}_{k_0, j} = 0$, $\forall j \neq j_0$.
Let $\Bar{y}$ be the TP ground truth label of $x$, where $\Bar{y}_{k_0} = 1$ if $x \in \mathbf{X}_{k_0}$ otherwise $\Bar{y}_k = 0$, $\forall k \neq k_0$.
Denote
\begin{align}
    H_{WP} (x) &= H(\Tilde{y}, \{\mathbf{P}(x \in \mathbf{X}_{k_0, j} | x \in \mathbf{X}_{k_0}, D)\}_{j}), \\
    H_{CIL} (x) &= H(y, \{\mathbf{P}(x \in \mathbf{X}_{k, j} | D)\}_{k, j}), \\
    H_{TP} (x) &= H(\Bar{y}, \{\mathbf{P}(x \in \mathbf{X}_k | D)\}_{k})
\end{align}
where $H_{WP}$, $H_{CIL}$, and $H_{TP}$ are the cross-entropy values of WP, CIL, and TP, respectively.
We now present our first theorem. The theorem connects CIL to WP and TP and suggests that by having a good WP or TP, the CIL performance improves as the upper bound for the CIL loss decreases.
\begin{theorem}
\label{thm:ce}
If $H_{TP}(x) \leq \delta$ and $H_{WP}(x) \leq \epsilon$, we have
$
H_{CIL} (x) \leq \epsilon + \delta.
$ 
\end{theorem}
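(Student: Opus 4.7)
The plan is to exploit the fact that the three target distributions $y$, $\tilde{y}$, and $\bar{y}$ are all one-hot vectors supported at the true index, so each cross-entropy reduces to a single negative log-probability term. Concretely, I would first observe that
\[
H_{CIL}(x) = -\log \mathbf{P}(x \in \mathbf{X}_{k_0, j_0} \mid D),
\]
\[
H_{WP}(x) = -\log \mathbf{P}(x \in \mathbf{X}_{k_0, j_0} \mid x \in \mathbf{X}_{k_0}, D),
\]
\[
H_{TP}(x) = -\log \mathbf{P}(x \in \mathbf{X}_{k_0} \mid D),
\]
because every other term in each sum is killed by a zero coefficient in the one-hot label.

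Next, I would invoke the decomposition already derived in Eq.~\ref{eq:cil_in_til_and_tp}, namely
\[
\mathbf{P}(x \in \mathbf{X}_{k_0, j_0} \mid D) = \mathbf{P}(x \in \mathbf{X}_{k_0, j_0} \mid x \in \mathbf{X}_{k_0}, D)\,\mathbf{P}(x \in \mathbf{X}_{k_0} \mid D),
\]
which is just the chain rule combined with the inclusion $\{x \in \mathbf{X}_{k_0, j_0}\} \subseteq \{x \in \mathbf{X}_{k_0}\}$ guaranteed by Assumption~2 (task disjointness makes the marginal sum in Eq.~\ref{eq:prob_sum} collapse to the single term $k = k_0$). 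Taking $-\log$ of both sides converts the product into the additive identity $H_{CIL}(x) = H_{WP}(x) + H_{TP}(x)$, and the hypothesized bounds $H_{WP}(x) \le \epsilon$ and $H_{TP}(x) \le \delta$ immediately give $H_{CIL}(x) \le \epsilon + \delta$.

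The derivation is therefore essentially a one-line algebraic manipulation once the one-hot collapse is noted, and I do not foresee a substantive obstacle. The only point where minor care is needed is verifying that the probabilistic decomposition applies to the specific coordinate $(k_0, j_0)$ rather than only as a marginal statement over $k$; this is automatic from Assumptions~1 and~2, but I would make it explicit so the reader sees why the one-hot reduction is consistent with Eq.~\ref{eq:cil_in_til_and_tp}. It is worth noting in passing that the bound is in fact an equality, which foreshadows the tight two-sided relationship the paper will need later when establishing that good WP and good TP are not only sufficient but also \emph{necessary} for good CIL.
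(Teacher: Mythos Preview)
Your proposal is correct and follows essentially the same approach as the paper's own proof: collapse each cross-entropy to a single $-\log$ term via the one-hot labels, apply the factorization of Eq.~\ref{eq:cil_in_til_and_tp}, and convert the product to a sum under $-\log$ to obtain the exact identity $H_{CIL}(x) = H_{WP}(x) + H_{TP}(x)$, from which the bound is immediate. Your remark that the inequality is in fact an equality is also explicit in the paper's derivation and is indeed reused later (e.g., in Corollary~\ref{cor:expectation} and Theorem~\ref{thm:cil_with_op_and_ood}).
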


The detailed proof is given in \ref{prf:ce}. This theorem holds regardless of whether WP and TP are trained together or separately.
When they are trained separately, if WP is fixed and we let  $\epsilon=H_{WP}(x)$,  $H_{CIL}(x) \leq H_{WP}(x) + \delta$, which means if TP is better, CIL is better. 
Similarly, if TP is fixed, we have $H_{CIL}(x) \leq \epsilon + H_{TP} (x)$.
When they are trained concurrently, there exists a functional relationship between $\epsilon$ and $\delta$ depending on implementation. 
But no matter what it is, when $\epsilon + \delta$ decreases, CIL gets better.

Theorem~\ref{thm:ce} holds for any $x \in \mathbf{X} = \bigcup_k \mathbf{X}_k$ that satisfies $H_{TP} (x) \leq \delta$ or $H_{WP} (x) \leq \epsilon$. To measure the overall performance under expectation, we present the following corollary.
\begin{corollary}\label{cor:expectation}
Let $U(\mathbf{X})$ represent the uniform distribution on $\mathbf{X}$. i) If $\mathbb{E}_{x \sim U(\mathbf{X})} [H_{TP}(x)] \leq \delta$, then $\mathbb{E}_{x\sim U(\mathbf{X})} [H_{CIL} (x)] \leq \mathbb{E}_{x \sim U(\mathbf{X})} [H_{WP} (x)] + \delta$. Similarly, ii) $\mathbb{E}_{x \sim U(\mathbf{X})} [H_{WP} (x)] \leq \epsilon$, then $\mathbb{E}_{x\sim U(\mathbf{X})} [H_{CIL} (x)] \leq \epsilon + \mathbb{E}_{x \sim U(\mathbf{X})} [H_{TP}(x)]$.
\end{corollary}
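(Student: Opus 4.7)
The plan is to upgrade the pointwise bound of Theorem~\ref{thm:ce} into an expectation-level bound via linearity. The essential trick is that the constants $\epsilon$ and $\delta$ in Theorem~\ref{thm:ce} can be chosen as functions of $x$; once that is noticed, the corollary reduces to a one-line application of linearity of expectation.

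First, I would fix an arbitrary $x \in \mathbf{X}$ and apply Theorem~\ref{thm:ce} with the instance-dependent choices $\epsilon := H_{WP}(x)$ and $\delta := H_{TP}(x)$. Both hypotheses of the theorem are then satisfied trivially, and the conclusion yields the sharp pointwise inequality
\[
H_{CIL}(x) \;\leq\; H_{WP}(x) + H_{TP}(x).
\]
As a sanity check, this is exactly what falls out of the chain-rule decomposition in Eq.~\ref{eq:cil_in_til_and_tp} together with the one-hot form of the ground-truth vectors $y$, $\tilde{y}$, $\bar{y}$: only the term at the true coordinate survives in each of the three cross-entropies, and $-\log \mathbf{P}(x \in \mathbf{X}_{k_0, j_0} \mid D) = -\log \mathbf{P}(x \in \mathbf{X}_{k_0, j_0} \mid x \in \mathbf{X}_{k_0}, D) - \log \mathbf{P}(x \in \mathbf{X}_{k_0} \mid D)$, so the pointwise relation is in fact an equality. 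Either way (black-box or by reopening the theorem's proof) the same pointwise bound is available.

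Second, I would take the expectation of both sides under $x \sim U(\mathbf{X})$. Since cross-entropy is nonnegative there is no integrability concern, and linearity of expectation gives
\[
\mathbb{E}_{x \sim U(\mathbf{X})}[H_{CIL}(x)] \;\leq\; \mathbb{E}_{x \sim U(\mathbf{X})}[H_{WP}(x)] + \mathbb{E}_{x \sim U(\mathbf{X})}[H_{TP}(x)].
\]
For part (i), substituting the hypothesis $\mathbb{E}_{x \sim U(\mathbf{X})}[H_{TP}(x)] \leq \delta$ into the second summand produces the stated bound; for part (ii), substituting $\mathbb{E}_{x \sim U(\mathbf{X})}[H_{WP}(x)] \leq \epsilon$ into the first summand does the same.

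The step I expect to be the main conceptual obstacle — it is not a technical one — is recognizing that Theorem~\ref{thm:ce} should \textbf{not} be invoked with a single uniform pair $(\epsilon,\delta)$ intended to bound $H_{WP}$ and $H_{TP}$ everywhere on $\mathbf{X}$. Such a reading would force a detour through a Markov-type argument on the tail sets $\{x : H_{TP}(x) > t\}$ and would yield a strictly weaker conclusion, because the corollary's hypothesis is only an expectation-level (not uniform pointwise) bound. The fix is to make $\epsilon$ and $\delta$ \emph{$x$-dependent}, collapsing the two bounds of Theorem~\ref{thm:ce} into a single pointwise additive inequality, after which linearity of expectation closes the argument.
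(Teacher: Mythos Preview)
Your proposal is correct and matches the paper's proof essentially line for line: the paper also extracts the pointwise identity $H_{CIL}(x) = H_{WP}(x) + H_{TP}(x)$ from the proof of Theorem~\ref{thm:ce}, takes expectations on both sides, and then substitutes the respective hypotheses. Your additional remark that one can alternatively invoke Theorem~\ref{thm:ce} as a black box with $x$-dependent $(\epsilon,\delta)$ is a harmless (and correct) repackaging of the same step.
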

The proof is given in \ref{prf:expectation}. The corollary is a direct extension of Theorem~\ref{thm:ce} in expectation. The implication is that given TP performance, CIL is positively related to WP. The better the WP is, the better the CIL is as the upper bound of the CIL loss decreases. Similarly, given WP performance, a better TP performance results in a better CIL performance. Due to the positive relation, we can improve CIL by improving either WP or TP using their respective methods developed in each area.

\subsection{Task Prediction (TP) to OOD Detection} 

Building on Eq.~\ref{eq:cil_in_til_and_tp}, we have studied the relationship of CIL, WP, and TP in Theorem \ref{thm:ce}. We now connect TP and OOD detection. They are shown to be dominated by each other to a constant factor.

We again use cross-entropy $H$ to measure the performance of TP and OOD detection of a trained network as in Section \ref{sec:cil_improve_by_til_and_tp}. 
To build the connection between $H_{TP}(x)$ and OOD detection of each task, we first define the notations of OOD detection. We use $\mathbf{P}'_k (x \in \mathbf{X}_k | D)$ to represent the probability distribution predicted by the $k$th task's OOD detector. Notice that the task prediction (TP) probability distribution $\mathbf{P} (x \in \mathbf{X}_k | D)$ is a categorical distribution over $T$ tasks, while the OOD detection probability distribution $\mathbf{P}'_k (x \in \mathbf{X}_k | D)$ is a Bernoulli distribution. For any $x \in \mathbf{X}$, define 
\begin{align}
    H_{OOD, k} (x) = \left\{ 
    \begin{aligned}
    H(1, \mathbf{P}'_k (x \in \mathbf{X}_k | D)) =& - \log \mathbf{P}'_k (x \in \mathbf{X}_k | D),& &x \in \mathbf{X}_k, \\
    H(0, \mathbf{P}'_k (x \in \mathbf{X}_k | D)) =& - \log \mathbf{P}'_k (x \in \mathbf{X} \backslash \mathbf{X}_k | D),& &x \in \mathbf{X} \backslash \mathbf{X}_k.
    \end{aligned}
    \right.
\label{eq:h_ood}
\end{align}
Note that the OOD detection here is the \textit{closed-world OOD detection}. But for presentation simplicity, we still use OOD detection below. In CIL, the term OOD detection probability for a task can be defined using the output values corresponding to the classes of the task. Some examples of the function are the sigmoid of maximum logit value and the maximum softmax probability after re-scaling to 0 to 1. 
{It is also possible to define the OOD detector directly as a function of tasks instead of a function of the output values of all classes of tasks, i.e. Mahalanobis distance.} The following theorem shows that TP and OOD detection bound each other. 
\begin{theorem}
\label{thm:tp_ood}
i) If $H_{TP} (x) \leq \delta$, let $\mathbf{P}'_k (x \in \mathbf{X}_k | D) = \mathbf{P} (x \in \mathbf{X}_k | D)$, then $H_{OOD, k} (x) \leq \delta, \forall\, k = 1, \dots, T$. ii) If $H_{OOD, k} (x) \leq \delta_k, k=1,\dots,T$, let $\mathbf{P} (x \in \mathbf{X}_k | D) = \frac{\mathbf{P}_k' (x \in \mathbf{X}_k |D)}{\sum_k \mathbf{P}_k' (x \in \mathbf{X}_k |D)}$, then $H_{TP} (x) \leq (\sum_k \mathbf{1}_{x \in \mathbf{X}_k} e^{\delta_{k}}) (\sum_k 1 - e^{-\delta_k})$, where $\mathbf{1}_{x \in \mathbf{X}_k}$ is an indicator function.
\end{theorem}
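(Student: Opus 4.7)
The plan is to prove both directions by translating the cross-entropy bounds into pointwise probability inequalities and exploiting the structure of the categorical (TP) versus Bernoulli (OOD) distributions. By Assumption~2, every $x$ lies in exactly one true task, say $\mathbf{X}_{k_0}$, so $\bar{y}$ is one-hot at $k_0$ and $H_{TP}(x) = -\log \mathbf{P}(x \in \mathbf{X}_{k_0} \mid D)$; similarly, the two branches of Eq.~\ref{eq:h_ood} amount to a lower bound on $\mathbf{P}'_k(x \in \mathbf{X}_k \mid D)$ when $k = k_0$ and an upper bound on $\mathbf{P}'_k(x \in \mathbf{X}_k \mid D)$ when $k \neq k_0$.

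For part (i), the bound $H_{TP}(x) \leq \delta$ gives $\mathbf{P}(x \in \mathbf{X}_{k_0} \mid D) \geq e^{-\delta}$, and since the TP outputs form a categorical distribution summing to $1$, we also get $\mathbf{P}(x \in \mathbf{X}_k \mid D) \leq 1 - e^{-\delta}$ for every $k \neq k_0$. Substituting $\mathbf{P}'_k = \mathbf{P}(\,\cdot\, \in \mathbf{X}_k \mid D)$ and splitting into the two cases of Eq.~\ref{eq:h_ood}: when $k = k_0$ we obtain $H_{OOD,k}(x) = -\log \mathbf{P}'_{k_0}(x \in \mathbf{X}_{k_0} \mid D) \leq \delta$; when $k \neq k_0$ disjointness gives $x \notin \mathbf{X}_k$, so $H_{OOD,k}(x) = -\log(1 - \mathbf{P}'_k(x \in \mathbf{X}_k \mid D)) \leq -\log e^{-\delta} = \delta$. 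Both cases yield the claim.

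For part (ii), I would first convert the OOD bounds into $\mathbf{P}'_{k_0}(x \in \mathbf{X}_{k_0} \mid D) \geq e^{-\delta_{k_0}}$ and $\mathbf{P}'_k(x \in \mathbf{X}_k \mid D) \leq 1 - e^{-\delta_k}$ for $k \neq k_0$ (using disjointness for the latter). Then I would compute the reciprocal of the TP probability under the proposed normalization:
\[
\frac{1}{\mathbf{P}(x \in \mathbf{X}_{k_0} \mid D)} \;=\; 1 + \sum_{k \neq k_0} \frac{\mathbf{P}'_k(x \in \mathbf{X}_k \mid D)}{\mathbf{P}'_{k_0}(x \in \mathbf{X}_{k_0} \mid D)} \;\leq\; 1 + e^{\delta_{k_0}} \sum_{k \neq k_0}\bigl(1 - e^{-\delta_k}\bigr).
\]
Taking logarithms and applying the elementary inequality $\log(1+y) \leq y$ for $y \geq 0$ yields $H_{TP}(x) \leq e^{\delta_{k_0}} \sum_{k \neq k_0}(1 - e^{-\delta_k})$. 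Enlarging the sum to all $k$ (the extra term $1 - e^{-\delta_{k_0}}$ is non-negative) and observing that $\sum_k \mathbf{1}_{x \in \mathbf{X}_k} e^{\delta_k} = e^{\delta_{k_0}}$ by Assumption~2 produces exactly the stated bound.

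The main obstacle is getting the order of operations right in part (ii): the naive route of taking logarithms first and bounding $\log \sum_k \mathbf{P}'_k(x \in \mathbf{X}_k \mid D)$ produces only the weaker additive bound $\delta_{k_0} + \sum_k (1 - e^{-\delta_k})$. To obtain the multiplicative form $e^{\delta_{k_0}}\sum_k(1-e^{-\delta_k})$ one must form the ratio inside the logarithm so that the factor $e^{-\delta_{k_0}}$ cancels cleanly, and only then linearize with $\log(1+y) \leq y$. Assumption~2 does subtle but essential work throughout: it is what collapses $\sum_k \mathbf{1}_{x \in \mathbf{X}_k} e^{\delta_k}$ to the single value $e^{\delta_{k_0}}$ and what allows the Bernoulli reading of $H_{OOD,k}$ for $k \neq k_0$ to use the complement probability $1 - \mathbf{P}'_k(x \in \mathbf{X}_k \mid D)$ without ambiguity.
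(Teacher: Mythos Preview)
Your proposal is correct and follows essentially the same route as the paper: in part~(i) you convert $H_{TP}(x)\le\delta$ into $\mathbf{P}(x\in\mathbf{X}_{k_0}\mid D)\ge e^{-\delta}$ and use the categorical constraint exactly as the paper does, and in part~(ii) you derive the same probability inequalities $\mathbf{P}'_{k_0}\ge e^{-\delta_{k_0}}$, $\mathbf{P}'_k\le 1-e^{-\delta_k}$ and then apply $\log(1+y)\le y$. The only cosmetic difference is bookkeeping in part~(ii): you form the reciprocal $1/\mathbf{P}(x\in\mathbf{X}_{k_0}\mid D)=1+\sum_{k\ne k_0}\mathbf{P}'_k/\mathbf{P}'_{k_0}$ and enlarge the sum at the end, whereas the paper bounds $\mathbf{P}(x\in\mathbf{X}_{k_0}\mid D)$ from below and absorbs the extra $1-e^{-\delta_{k_0}}$ term via the identity $1=e^{-\delta_{k_0}}+(1-e^{-\delta_{k_0}})$; both arrive at the identical final bound.
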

See \ref{prf:tp_ood} for the proof. 
As we use cross-entropy, the lower the bound, the better the performance is. The first statement (i) says that the OOD detection performance improves if the TP performance gets better (i.e., lower $\delta$). Similarly, the second statement (ii) says that the TP performance improves if the OOD detection performance on each task improves (i.e., lower $\delta_k$). Besides, since $(\sum_k \mathbf{1}_{x \in \mathbf{X}_k} e^{\delta_{k}}) (\sum_k 1 - e^{-\delta_k})$ converges to $0$ as $\delta_k$'s converge to $0$ in order of $O(|\sum_k \delta_k|)$, we further know that $H_{TP}$ and $\sum_k H_{OOD,k}$ are equivalent in quantity up to a constant factor.

For the traditional CIL, Theorem \ref{thm:ce} studied how CIL is related to WP and TP. Theorem \ref{thm:tp_ood} showed that TP and OOD detection bound each other. Now we explicitly give the upper bound of CIL in relation to WP and OOD detection of each task. The detailed proof can be found in \ref{prf:cil_with_op_and_ood}. 
\begin{theorem}
\label{thm:cil_with_op_and_ood}
If $H_{OOD, k}(x) \leq \delta_k,\, k = 1, \dots, T$ and $H_{WP}(x) \leq \epsilon$, we have
$$
H_{CIL} (x) \leq \epsilon + (\sum_k \mathbf{1}_{x \in \mathbf{X}_k} e^{\delta_{k}}) (\sum_k 1 - e^{-\delta_k}),$$ 
where $\mathbf{1}_{x \in \mathbf{X}_k}$ is an indicator function.
\end{theorem}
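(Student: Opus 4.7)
The plan is to chain the two preceding theorems: first convert the per-task OOD bounds into a bound on $H_{TP}(x)$ via Theorem~\ref{thm:tp_ood}(ii), and then combine that with the given WP bound using Theorem~\ref{thm:ce}. Both ingredients are already in hand, so the argument is essentially a composition of existing results rather than a new derivation.

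Concretely, I would start by invoking Theorem~\ref{thm:tp_ood}(ii). Given the assumption $H_{OOD,k}(x) \leq \delta_k$ for each $k = 1, \dots, T$, I define the TP distribution implicitly through the normalization prescribed there, namely $\mathbf{P}(x \in \mathbf{X}_k \mid D) = \mathbf{P}'_k(x \in \mathbf{X}_k \mid D) \big/ \sum_{k'} \mathbf{P}'_{k'}(x \in \mathbf{X}_{k'} \mid D)$. Theorem~\ref{thm:tp_ood}(ii) immediately yields
$$H_{TP}(x) \leq \Bigl(\sum_k \mathbf{1}_{x \in \mathbf{X}_k} e^{\delta_k}\Bigr)\Bigl(\sum_k 1 - e^{-\delta_k}\Bigr).$$
Next, I would apply Theorem~\ref{thm:ce} with $\delta$ set to this quantity. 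Because $H_{WP}(x) \leq \epsilon$ by hypothesis, the additive bound $H_{CIL}(x) \leq H_{WP}(x) + H_{TP}(x)$ stated there delivers
$$H_{CIL}(x) \leq \epsilon + \Bigl(\sum_k \mathbf{1}_{x \in \mathbf{X}_k} e^{\delta_k}\Bigr)\Bigl(\sum_k 1 - e^{-\delta_k}\Bigr),$$
which is exactly the claim.

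There is no serious obstacle, since the substantive work has already been carried out in Theorems~\ref{thm:ce} and~\ref{thm:tp_ood}. The only point that requires a small amount of care is to verify that the CIL distribution formed from WP and TP via the decomposition in Eq.~\ref{eq:cil_in_til_and_tp} is the same one on which $H_{CIL}(x)$ is evaluated when we feed the TP bound into Theorem~\ref{thm:ce}; this is immediate because all three quantities $H_{CIL}$, $H_{WP}$, and $H_{TP}$ were defined in Section~\ref{sec:cil_improve_by_til_and_tp} under the common probabilistic model induced by that very decomposition. Thus the theorem follows by straightforward substitution, and its practical reading is that good closed-world OOD detectors on every task, together with good WP, suffice to control CIL loss up to the indicated factor.
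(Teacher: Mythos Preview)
Your proposal is correct and matches the paper's own proof essentially verbatim: the paper also derives $H_{CIL}(x) = H_{WP}(x) + H_{TP}(x)$ via the decomposition underlying Theorem~\ref{thm:ce}, bounds $H_{WP}(x)$ by $\epsilon$, and then applies Theorem~\ref{thm:tp_ood}(ii) to bound $H_{TP}(x)$ by $(\sum_k \mathbf{1}_{x \in \mathbf{X}_k} e^{\delta_k})(\sum_k 1 - e^{-\delta_k})$.
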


\subsection{Necessary Conditions for Improving CIL}

In Theorem \ref{thm:ce}, we showed that good performances of WP and TP are sufficient to guarantee a good performance of CIL.
In Theorem \ref{thm:cil_with_op_and_ood}, we showed that good performances of WP and  OOD detection are sufficient to guarantee a good performance of CIL. Again, for simplicity, OOD detection here refers to the closed-world OOD detection.
For completeness, we study the necessary conditions of a well-performed CIL in this sub-section.

\begin{theorem}
\label{thm:necessary_condition}
If $H_{CIL} (x) \leq \eta$, then there exist
i) a WP, s.t. $H_{WP} (x) \leq \eta$, 
ii) a TP, s.t. $H_{TP} (x) \leq \eta$, and
iii) an OOD detector for each task, s.t. $H_{OOD, k} \leq \eta,\, k = 1, \dots, T$. 
\end{theorem}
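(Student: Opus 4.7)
The plan is constructive: starting from the CIL probability distribution $\{\mathbf{P}(x\in\mathbf{X}_{k,j}\mid D)\}_{k,j}$ that witnesses $H_{CIL}(x)\le\eta$, I will explicitly exhibit WP, TP, and per-task OOD distributions whose cross-entropies each satisfy the required $\le\eta$ bound. Let $(k_0,j_0)$ denote the ground-truth task/class pair for $x$. Then $H_{CIL}(x)\le\eta$ is simply the statement $\mathbf{P}(x\in\mathbf{X}_{k_0,j_0}\mid D)\ge e^{-\eta}$, which will be the single numerical fact used throughout.

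\emph{Step 1 (WP and TP via marginalization).} I will define the induced task marginals $\mathbf{P}(x\in\mathbf{X}_k\mid D)\overset{def}{=}\sum_j \mathbf{P}(x\in\mathbf{X}_{k,j}\mid D)$ and the induced conditional distributions $\mathbf{P}(x\in\mathbf{X}_{k,j}\mid x\in\mathbf{X}_k,D)\overset{def}{=}\mathbf{P}(x\in\mathbf{X}_{k,j}\mid D)/\mathbf{P}(x\in\mathbf{X}_k\mid D)$ by Bayes' rule. For TP, since the true task marginal dominates the true class probability, $\mathbf{P}(x\in\mathbf{X}_{k_0}\mid D)\ge\mathbf{P}(x\in\mathbf{X}_{k_0,j_0}\mid D)\ge e^{-\eta}$, so taking negative logs yields $H_{TP}(x)\le\eta$. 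For WP, the conditional probability at the truth equals $\mathbf{P}(x\in\mathbf{X}_{k_0,j_0}\mid D)/\mathbf{P}(x\in\mathbf{X}_{k_0}\mid D)$, and since the denominator is $\le 1$, this ratio is $\ge\mathbf{P}(x\in\mathbf{X}_{k_0,j_0}\mid D)\ge e^{-\eta}$; again taking logs gives $H_{WP}(x)\le\eta$.

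\emph{Step 2 (per-task OOD detectors).} I will set $\mathbf{P}'_k(x\in\mathbf{X}_k\mid D)\overset{def}{=}\mathbf{P}(x\in\mathbf{X}_k\mid D)$ using the task marginal from Step 1. Two cases arise in~\eqref{eq:h_ood}. When $k=k_0$ (so $x\in\mathbf{X}_k$), $H_{OOD,k_0}(x)=-\log\mathbf{P}(x\in\mathbf{X}_{k_0}\mid D)=H_{TP}(x)|_{\text{true task term}}\le\eta$ exactly as in Step~1. When $k\neq k_0$ (so $x\in\mathbf{X}\setminus\mathbf{X}_k$), I need $\mathbf{P}'_k(x\in\mathbf{X}\setminus\mathbf{X}_k\mid D)\ge e^{-\eta}$, i.e.\ $\mathbf{P}(x\in\mathbf{X}_k\mid D)\le 1-e^{-\eta}$. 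Using that the CIL distribution normalizes to $1$ and that all joint events are disjoint by Assumptions~1 and~2, one has $\mathbf{P}(x\in\mathbf{X}_k\mid D)\le 1-\mathbf{P}(x\in\mathbf{X}_{k_0,j_0}\mid D)\le 1-e^{-\eta}$, and the desired bound follows.

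\emph{Anticipated obstacle.} The WP and TP parts are essentially one-line manipulations of Bayes' rule together with the monotonicity $-\log$. The only place that needs a bit of care is the OOD case for wrong tasks ($k\neq k_0$), because the bound $H_{OOD,k}(x)\le\eta$ there does \emph{not} follow from a simple marginal/conditional inequality — it requires using that the total CIL mass is $1$ so that high mass on the truth leaves little mass for any wrong task. If the CIL model's outputs are not exactly normalized (e.g.\ if the excerpt's framework allows $\sum_{k,j}\mathbf{P}(x\in\mathbf{X}_{k,j}\mid D)<1$), I would first renormalize; since renormalization can only increase the true-class probability, the bound $\mathbf{P}(x\in\mathbf{X}_{k_0,j_0}\mid D)\ge e^{-\eta}$ is preserved and the argument goes through verbatim. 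No other step is expected to be delicate, so the proof should be short.
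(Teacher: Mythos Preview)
Your proposal is correct and follows essentially the same constructive approach as the paper: define the TP as the task marginal $\sum_j \mathbf{P}(x\in\mathbf{X}_{k,j}\mid D)$, set each OOD detector equal to this marginal, and bound all three quantities using $\mathbf{P}(x\in\mathbf{X}_{k_0,j_0}\mid D)\ge e^{-\eta}$. The only cosmetic differences are that the paper defines WP by directly copying the CIL probability (rather than your Bayes conditional, which is arguably cleaner since it is normalized) and handles the OOD case for $k\neq k_0$ by invoking Theorem~\ref{thm:tp_ood}(i) instead of inlining the one-line mass argument as you do.
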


The detailed proof is given in \ref{prf:necessary_condition}. This theorem 
tells that if a good CIL model is trained, then a good WP, a good TP, and a good OOD detector for each task are always implied. 
More importantly, by transforming Theorem \ref{thm:necessary_condition} into its contraposition, we have the following statements:
If for any WP, $H_{WP} (x) > \eta$, then $H_{CIL} (x) > \eta$.
If for any TP, $H_{TP} (x) > \eta$, then $H_{CIL} (x) > \eta$.
If for any OOD detector, $H_{OOD, k} (x) > \eta,\, k=1,\dots,T$, then $H_{CIL} (x) > \eta$.
Regardless of whether WP and TP (or OOD detection) are defined explicitly or implicitly by a CIL algorithm, 
the existence of a good WP and the existence of a good TP or 
OOD detection are necessary conditions for good CIL performance. Note that the OOD detection here is closed-world OOD detection.

\begin{remark}
It is important to note again that our study in this section is based on a CIL model that has already been built. In other words, our study tells the CIL designers what should be achieved in the final model. Clearly, one would also like to know how to design a strong CIL model based on the theoretical results, which also considers catastrophic forgetting (CF). One effective method is to make use of a strong existing TIL algorithm, which can already achieve no or little forgetting (CF), and combine it with a strong OOD detection algorithm (as mentioned earlier, most OOD detection methods can also perform WP). Thus, any improved method from the OOD detection community can be applied to CIL to produce improved CIL systems (see Sections~\ref{sec.betterOOD} and \ref{sec.HAT+CSI}).
\end{remark}

\subsection{Generalization to Open-World Continual Learning}
\label{sec.ow-CIL}
As mentioned at the beginning of this section, the first four subsections focused on the traditional closed-world CIL. This subsection generalizes or extends the theory to the open-world CIL, denoted by CIL$^+$. CIL$^+$ is CIL with an additional pseudo-task on top of the $T$ learned tasks representing OOD detection beyond the $T$ tasks, which we call the \textit{OOD task} with a single pseudo-class (called \textit{OOD class}) as we cannot predict the unseen class of an OOD sample because it is unknown. In this context, OOD detection is referred to as \textit{open-world OOD detection}. For simplicity, we will continue using the term OOD detection.

We first note that Eq.~\ref{eq:cil_in_til_and_tp} still applies because CIL$^+$ only adds a new OOD task with one OOD class.  Theorem~\ref{thm:ce} for the closed-world CIL can be extended to the open-world CIL (CIL$^+$) by replacing $H_{TP}$ with $H_{TP^+}$, and $H_{CIL}$ with $H_{CIL^+}$. $H_{WP}$ stays the same as the WP definition has no change in CIL$^+$. 
The proof is trivially identical to the proof of Theorem~\ref{thm:ce}. 
The key extension is to Theorem~\ref{thm:tp_ood} of the traditional \textit{closed-world} CIL so that test samples that do not belong to any of the $T$ already-learned tasks (i.e., OOD to the $T$ tasks) can also be detected. 

Theorem~\ref{thm:tp_ood} can be generalized to CIL$^+$ by changing the closed-world TP to \textit{open-world TP}, denoted by TP$^+$, which must now predict the additional OOD task.

We denote $\mathbf{X}^+$ as the open-world OOD domain beyond $\mathbf{X}$. For any $x \in \mathbf{X} \cup \mathbf{X}^+$, define 
\begin{align}
 H_{CIL^+} (x) &= H(y, \{\mathbf{P}(x \in \mathbf{X}_{k, j} | D)\}_{k, j} \cup \{\mathbf{P}(x \in \mathbf{X^+} | D)\}), \\
 H_{TP^+} (x) &= H(\Bar{y}, \{\mathbf{P}(x \in \mathbf{X}_k | D)\}_{k} \cup \{\mathbf{P}(x \in \mathbf{X^+} | D)\}). 
\end{align}
For any $x \in \mathbf{X} \cup \mathbf{X}^+$, define 
\begin{align}
    H_{OOD^+, k} (x) = \left\{
    \begin{aligned}
    H(1, \mathbf{P}'_k (x \in \mathbf{X}_k | D)) =& - \log \mathbf{P}'_k (x \in \mathbf{X}_k | D), \\ &~~~~~~~~~~~~~~~~~~~~~~~~x \in \mathbf{X}_k, \\
    H(0, \mathbf{P}'_k (x \in \mathbf{X}_k | D)) =& - \log \mathbf{P}'_k (x \in (\mathbf{X} \cup \mathbf{X}^+) \backslash \mathbf{X}_k | D), \\ &~~~~~~~~~~~~~~~~x \in (\mathbf{X} \cup \mathbf{X}^+) \backslash \mathbf{X}_k. \\
    \end{aligned}
    \right.
\end{align} 
where OOD$^+$ denotes the \textit{open-world OOD detection}. 

It is clear that open-world OOD detection implies closed-world OOD detection, but the reverse is not true. Since the classification in the closed-world CIL is limited to the $T$ tasks learned so far, it cannot derive open-world OOD detection but only closed-world OOD detection. Thus, only closed-world OOD detection is \textit{necessary} for the traditional closed-world CIL.

We now generalize Theorem~\ref{thm:tp_ood} to 
the open-world CIL (CIL$^+$) setting with the following Corollary. The proof is given in~\ref{prf:tp_ood_to_open}.

\vspace{+2mm}
\begin{corollary}
\label{thm:tp_ood_to_open}
i) If $H_{TP^+} (x) \leq \delta$, let $\mathbf{P}'_k (x \in \mathbf{X}_k | D) = \mathbf{P} (x \in \mathbf{X}_k | D)$, then $H_{OOD^+, k} (x) \leq \delta, \forall\, k = 1, \dots, T$. 
ii) If $H_{OOD^+, k} (x) \leq \delta_k, k=1,\dots,T$, let 
$\mathbf{P} (x \in \mathbf{X}_k | D) = \frac{\mathbf{P}'_{k} (x \in \mathbf{X}_{k} |D)}{\sum_{k} \mathbf{P}'_{k} (x \in \mathbf{X}_{k} |D) + \prod_{k} (1 - \mathbf{P}'_{k} (x \in \mathbf{X}_{k} |D))}$
and 
$\mathbf{P} (x \in \mathbf{X}^+ | D) = \frac{\prod_{k} (1 - \mathbf{P}'_{k} (x \in \mathbf{X}_{k} |D))}{\sum_{k} \mathbf{P}'_{k} (x \in \mathbf{X}_{k} |D) + \prod_{k} (1 - \mathbf{P}'_{k} (x \in \mathbf{X}_{k} |D))}$,
then $H_{TP^+} (x) \leq \max ( (\sum_k \mathbf{1}_{x \in \mathbf{X}_k} e^{\delta_{k}}) (\sum_k (1 + \mathbf{1}_{x \in \mathbf{X}_k})(1 - e^{-\delta_k})), \prod_{k} e^{\delta_{k}} \sum_{k} 1 - e^{-\delta_{k}})$,
where $\mathbf{1}_{x \in \mathbf{X}_k}$ is an indicator function.
\end{corollary}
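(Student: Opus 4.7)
\textbf{Proof proposal for Corollary~\ref{thm:tp_ood_to_open}.} The plan is to mirror the two directions of Theorem~\ref{thm:tp_ood} while carefully handling the additional pseudo-task $\mathbf{X}^+$, which changes both the set over which TP$^+$ normalizes and the definition of $H_{OOD^+,k}$ on points in $\mathbf{X}^+$. Part~(i) should follow by a short case split on whether $x\in\mathbf{X}_k$; part~(ii) requires more care because the suggested assignment for $\mathbf{P}(x\in\mathbf{X}_k\mid D)$ and $\mathbf{P}(x\in\mathbf{X}^+\mid D)$ renormalizes the OOD scores by $Z:=\sum_k \mathbf{P}'_k(x\in\mathbf{X}_k\mid D)+\prod_k(1-\mathbf{P}'_k(x\in\mathbf{X}_k\mid D))$, and the product term behaves very differently on IND versus open-world OOD inputs.

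For~(i), note that the TP$^+$ ground truth $\bar y$ is one-hot over the augmented index set $\{1,\dots,T,+\}$, so $H_{TP^+}(x)\le\delta$ is equivalent to the probability assigned to the true task being $\ge e^{-\delta}$. Fix any $k$. If $x\in\mathbf{X}_k$, then the TP$^+$ target coincides with the OOD$^+$ target, and the assignment $\mathbf{P}'_k(x\in\mathbf{X}_k\mid D)=\mathbf{P}(x\in\mathbf{X}_k\mid D)$ yields $H_{OOD^+,k}(x)=-\log\mathbf{P}(x\in\mathbf{X}_k\mid D)=H_{TP^+}(x)\le\delta$. If $x\notin\mathbf{X}_k$ (whether the truth is another $\mathbf{X}_{k'}$ or $\mathbf{X}^+$), the TP$^+$ target puts mass $\ge e^{-\delta}$ on that true component, which forces $\mathbf{P}(x\in\mathbf{X}_k\mid D)\le 1-e^{-\delta}$, hence $H_{OOD^+,k}(x)=-\log(1-\mathbf{P}'_k(x\in\mathbf{X}_k\mid D))\le\delta$.

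For~(ii), let $p_j:=\mathbf{P}'_j(x\in\mathbf{X}_j\mid D)$ and translate each hypothesis $H_{OOD^+,j}(x)\le\delta_j$ into the sharp inequality $p_j\ge e^{-\delta_j}$ if $x\in\mathbf{X}_j$ and $p_j\le 1-e^{-\delta_j}$ otherwise. I would then split on the location of $x$. If $x\in\mathbf{X}_k$ for some (unique) $k$, the truth for TP$^+$ is $\mathbf{X}_k$, and $H_{TP^+}(x)=-\log(p_k/Z)$; I would bound $Z$ from above by $p_k+\sum_{j\ne k}(1-e^{-\delta_j})+\prod_{j\ne k}(1-e^{-\delta_j})\cdot(1-p_k)$, then use $-\log p_k\le\delta_k$ plus $\log(1+u)\le u$ to push the denominator bound into the target expression $(\sum_j \mathbf{1}_{x\in\mathbf{X}_j}e^{\delta_j})(\sum_j(1+\mathbf{1}_{x\in\mathbf{X}_j})(1-e^{-\delta_j}))$. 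If $x\in\mathbf{X}^+$, the TP$^+$ truth is $\mathbf{X}^+$ and the numerator is the product $\prod_j(1-p_j)\ge\prod_j e^{-\delta_j}$; bounding $Z\le\sum_j(1-e^{-\delta_j})+1$ and again invoking $\log(1+u)\le u$ produces the second argument of the $\max$, namely $\prod_j e^{\delta_j}\sum_j(1-e^{-\delta_j})$. Taking the $\max$ over the two cases gives the stated bound uniformly in $x\in\mathbf{X}\cup\mathbf{X}^+$.

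The main obstacle will be the bookkeeping around $Z$: the product term $\prod_j(1-p_j)$ provides a helpful lower bound on the numerator when $x\in\mathbf{X}^+$ but only a weak and case-dependent upper bound on the denominator when $x\in\mathbf{X}_k$, and the two cases must be controlled by expressions that combine into the exact $\max$ form. I expect most of the work to consist of verifying that the linearization $\log(1+u)\le u$ yields precisely the right combination of $e^{\delta_j}$ and $(1-e^{-\delta_j})$ factors (rather than a loose constant), and in checking the edge case $T=1$ and the case when some $\delta_j=0$ so that the indicator-weighted sums collapse correctly; the underlying structure, however, is exactly the Bernoulli-versus-categorical conversion already used in Theorem~\ref{thm:tp_ood}, augmented by one extra ``all-OOD'' outcome represented by $\prod_j(1-p_j)$.
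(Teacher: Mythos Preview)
Your overall strategy---case-split on whether $x$ lies in some $\mathbf{X}_{k_0}$ or in $\mathbf{X}^+$, translate the OOD hypotheses into $p_{k_0}\ge e^{-\delta_{k_0}}$ and $p_j\le 1-e^{-\delta_j}$ for $j\ne k_0$, bound the normalizer, and finish with $\log(1+u)\le u$---is exactly the paper's route. Part~(i) is fine (and in fact slightly more complete than the paper's write-up, since you explicitly cover $x\in\mathbf{X}^+$).

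There is, however, a concrete error in your IND bound for $Z$. You propose
\[
Z \;\le\; p_k + \sum_{j\ne k}(1-e^{-\delta_j}) + \Bigl(\prod_{j\ne k}(1-e^{-\delta_j})\Bigr)(1-p_k),
\]
which implicitly uses $\prod_{j\ne k}(1-p_j)\le \prod_{j\ne k}(1-e^{-\delta_j})$. But for $j\ne k$ the hypothesis gives a \emph{lower} bound $1-p_j\ge e^{-\delta_j}$, not an upper bound; the only general upper bound is $1-p_j\le 1$. A quick counterexample: $T=2$, $k=1$, $\delta_1=\delta_2=\log 2$, $p_1=1/2$, $p_2=0$ gives $\prod_j(1-p_j)=1/2$ while your bound is $1/4$. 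The paper instead uses the one factor you \emph{can} control from above, namely $1-p_{k_0}\le 1-e^{-\delta_{k_0}}$, and bounds all other factors by $1$, obtaining $\prod_j(1-p_j)\le 1-e^{-\delta_{k_0}}$ and hence
\[
Z \;\le\; 1 + \sum_{j\ne k_0}(1-e^{-\delta_j}) + (1-e^{-\delta_{k_0}})
  \;=\; e^{-\delta_{k_0}} + \sum_j(1-e^{-\delta_j}) + (1-e^{-\delta_{k_0}}).
\]
Dividing and applying $\log(1+u)\le u$ then yields exactly $e^{\delta_{k_0}}\bigl((1-e^{-\delta_{k_0}})+\sum_j(1-e^{-\delta_j})\bigr)$, which is the indicator expression you want.

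For the $x\in\mathbf{X}^+$ branch, bounding $Z\le 1+\sum_j(1-e^{-\delta_j})$ and then separating $\log Z-\log\prod_j(1-p_j)$ does not land on $\prod_j e^{\delta_j}\sum_j(1-e^{-\delta_j})$. The cleaner move (and the paper's) is to first write
\[
H_{TP^+}(x)=\log\!\Bigl(1+\tfrac{\sum_j p_j}{\prod_j(1-p_j)}\Bigr)\le \frac{\sum_j p_j}{\prod_j(1-p_j)}
\]
and only then insert the bounds $\sum_j p_j\le\sum_j(1-e^{-\delta_j})$ and $\prod_j(1-p_j)\ge\prod_j e^{-\delta_j}$; this gives the second argument of the $\max$ directly. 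With these two fixes, your plan goes through verbatim.
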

\vspace{+2mm}

For Corollary~\ref{thm:tp_ood_to_open}, we have that $(\sum_k \mathbf{1}_{x \in \mathbf{X}_k} e^{\delta_{k}}) (\sum_k (1 + \mathbf{1}_{x \in \mathbf{X}_k})(1 - e^{-\delta_k}))$ converges to $0$ as $\delta_k$'s converges to $0$ in order of $O(|\sum_k \delta_k| + \max_k |\delta_k|) = O(|\sum_k \delta_k|) $, and $\prod_{k} e^{\delta_{k}} \sum_{k} 1 - e^{-\delta_{k}}$ converges to $0$ in order of $O(|\sum_k \delta_k|)$.
Therefore, we know that $H_{TP^+}$ and $\sum_k H_{OOD^+,k}$ are equivalent in quantity up to a constant factor.

We can extend Theorem~\ref{thm:cil_with_op_and_ood} to the open-world CIL$^+$ using Corollary~\ref{thm:tp_ood_to_open}. By substituting $H_{OOD,k}$  with $H_{OOD^+,k}$ and $H_{CIL}$ with $H_{CIL^+}$, we obtain a new upper bound for the open-world CIL, $\epsilon + \max ( (\sum_k \mathbf{1}_{x \in \mathbf{X}_k} e^{\delta_{k}}) (\sum_k (1 + \mathbf{1}_{x \in \mathbf{X}_k})(1 - e^{-\delta_k})), \prod_{k} e^{\delta_{k}} \sum_{k} 1 - e^{-\delta_{k}})$. The proof is trivially identical to the original proof of Theorem~\ref{thm:cil_with_op_and_ood}.

We can establish the same theorem as Theorem~\ref{thm:necessary_condition} for CIL$^+$ by replacing $H_{OOD,k}$ with $H_{OOD^+,k}$ and $H_{CIL}$ with $H_{CIL^+}$. Again, the proof is trivially identical to the original proof of Theorem~\ref{thm:necessary_condition}.

The new theorems establish that a good TP$^+$ or OOD$^+$ (open-world OOD detection) and a good WP are necessary and sufficient for a good CIL$^+$.

\section{Proposed Approach 1: Combining TIL and OOD Detection} \label{sec.clom}

{Based on the above theoretical result, we have designed two approaches to solving CIL that employ OOD detection methods, more precisely \textit{open-world OOD detection} methods. Although theoretically speaking, open-world OOD detection implies closed-world OOD detection, 
in practical applications, 
we often do not need to distinguish whether an OOD detection method is a closed-world or an open-world method as they usually can be used for either closed-world or open-world CIL. We just want them to be as accurate as possible for the applications. 

This section presents the first approach, which combines a task incremental learning (TIL) method and an OOD detection method. The approach does not save any training data from previous tasks. The OOD detection method here is an open-world method as it does not use any information from the other tasks learned in the CIL process. The next section presents the second approach, which is based on replay and needs to save some training data from previous tasks.\footnote{Note that this paper focuses on establishing a theoretical connection between novelty (or OOD) detection and class incremental learning (CIL). Our experiments show the validity of the theory. We also report the OOD detection results using AUC but this paper does not focus on the problem of real-time decision-making and learning using OOD detection to detect each novel instance, acquire its class label, and incrementally learn it. The reason is that this will involve setting an OOD score threshold to decide each OOD instance and interacting with human users to acquire the class label to learn. 
Such user interactions wouldn't give the system a large number of labeled training data. Then the highly challenging few-shot continual learning is required. We leave this to future work.} The OOD detection method used there is a closed-world OOD detection method as it treats the replay data from previous tasks as the OOD data in the model building, but this method can also be used for open-world CIL.

\begin{figure*}
\centering
\subfigure[]{\includegraphics[width=67mm]{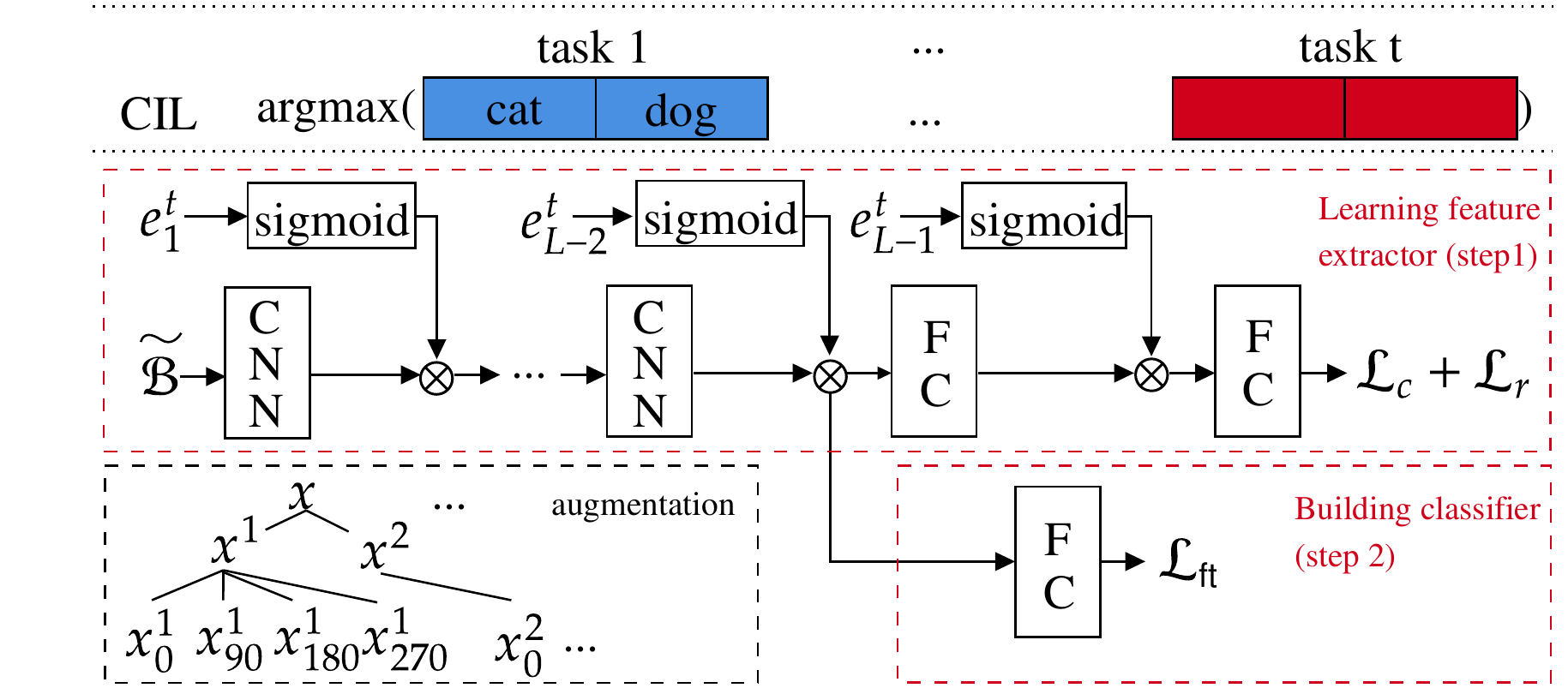}}
\subfigure[]{\includegraphics[width=67mm]{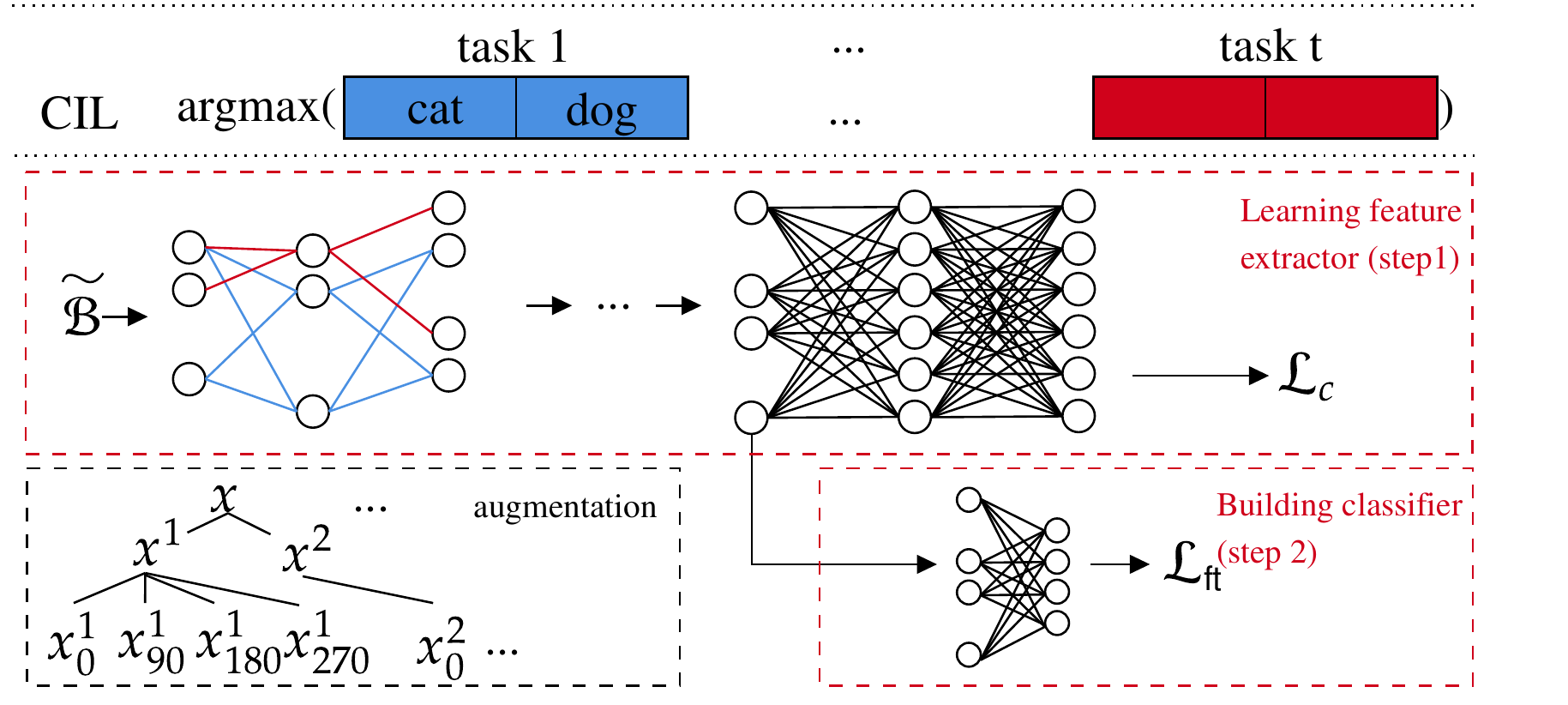}}
\caption{
Overview of prediction and training framework of HAT+CSI and Sup+CSI. \textbf{(a)} HAT+CSI: The CIL prediction is made by argmax over the concatenated output from each task. The training of each task uses CSI. That is, the training batch is augmented to give different views of the samples for contrastive training. The training consists of two steps following CSI. The first step learns the feature extractor by 
using the hard attention algorithm~\citep{Serra2018overcoming}, which applies task embeddings to find hard masks at each layer. Then given the learned feature representations, it fine-tunes the classifier in step 2. \textbf{(b)} Sup+CSI: The CIL prediction is also made by taking argmax over the concatenated output values from each task as HAT+CSI. The model training for each task is similar to HAT+CSI except that it uses 
the Edge Popup algorithm of SupSup~\citep{ramanujan2020s} for finding a sparse network for each task. The sparse networks are indicated by edges of different colors in the diagram. The second step fine-tunes the classifier only with the fixed feature extractor.
}
\label{fig.diagrams}
\end{figure*}

\subsection{Combining a TIL Method and an OOD Detection Method}

As mentioned earlier, several existing TIL methods can overcome CF. This proposed approach basically leverages the CF prevention ability in two TIL methods (HAT~\citep{Serra2018overcoming} and SupSup (Sup)~\citep{supsup2020}) and replaces their task learning methods with an OOD detection technique, called CSI~\citep{tack2020csi}, which can perform both within-task or IND prediction (WP) and OOD instance detection. Below, we first introduce the two TIL methods, HAT and SupSup, and the OOD detection method, CSI. The combinations give two new CIL methods, HAT+CSI and Sup+CSI. None of these methods needs to save any data from previous tasks. 

Figure~\ref{fig.diagrams} shows the overall training frameworks of HAT+CSI and Sup+CSI. Note that both HAT and Sup are multi-head methods (one head for each task) designed for task incremental learning (TIL).

\subsubsection{HAT: Hard Attention Masks} \label{sec.hat}
To prevent forgetting the trained OOD detection model $f^{k} \circ h^k$ for each task $k$ in subsequent task learning, 
the hard attention mask (HAT)~\citep{Serra2018overcoming} for TIL is employed (which prevents forgetting in the feature extractor). Specifically, 
in learning a task, a set of embeddings is trained to protect the important neurons so that the corresponding parameters are not interfered with by subsequent tasks. The importance of a neuron is measured by the 0-1 pseudo-step function, where 0 indicates not important and 1 indicates important (and thus protected).

The hard attention mask is an output of sigmoid function $u$ with a hyper-parameter $s$
\begin{align}
    a_{l}^{k} = u(s e_{l}^{k}), \label{eq.smax}
\end{align}
where
$e_{l}^{k}$ is a learnable embedding at layer $l$ of task $k$. Since the step function is not differentiable, a sigmoid function with a large $s$ is used to approximate it. Sigmoid is approximately a 0-1 step function with a large $s$. The attention is multiplied to the output $h_{l} = \text{ReLU}(W_l h_{l-1} + b_l)$ of layer $l$,
\begin{align}
    h'_{l} = a_{l}^{k} \otimes h_{l}
\end{align}
The $j$th element $a_{j, l}^{k}$ in the attention mask blocks (or unblocks) the information flow from neuron $j$ at layer $l$ if its value is $0$ (or $1$). 
With 0 value of $a_{j, l}^{k}$, the corresponding parameters in $W_{l}$ and $b_{l}$
can be freely changed as the output values $h'_{l}$ are not affected. The neurons with non-zero mask values are necessary to perform the task and thus need protection from catastrophic forgetting.

We modify the gradients of parameters that are important in performing the previous tasks $(1, \cdots, k-1)$ during training task $k$ so they are not interfered with. Denote the accumulated mask by
\begin{align}
    a_{l}^{<k} = \max(a_{l}^{<k-1}, a_{l}^{k-1})
\end{align}
where $\max$ is element-wise maximum and the initial mask $a_{l}^{0}$ is a zero vector. 
It is a collection of mask values at layer $l$ where a neuron has value 1 if it has ever been activated previously.
The gradient of parameter $w_{ij, l}$ is modified as
\begin{align}
    \nabla w_{ij, l}' = \left( 1 - \min\left( a_{i,l}^{<k}, a_{j, l-1}^{<k} \right) \right) \nabla w_{ij, l} \label{eq:grad_mod}
\end{align}
where $a_{i,l}^{<k}$ is the $i$th unit of $a_{l}^{<k}$. The gradient flow is blocked if both neurons $i$ in the current layer and $j$ in the previous layer have been activated. We apply the mask for all layers except the last layer. The parameters in the last layer do not need to be protected as they are task-specific parameters.

A regularization is introduced to encourage sparsity in $a_{l}^{k}$ and parameter sharing with $a_{l}^{<k}$. The capacity of a network depletes when $a_{l}^{<k}$ becomes 1-vector in all layers. Despite a set of new neurons can be added to the network at any point in training for more capacity, we utilize resources more efficiently by minimizing the loss
\begin{align}
    \mathcal{L}_r = \lambda \frac{\sum_l \sum_i a_{i,l}^{k}\left( 1 - a_{i,l}^{<k} \right)}{\sum_{l} \sum_{i} \left(1 - a_{i, l}^{<k} \right)} \label{eq.hat_reg}
\end{align}
where $\lambda$ is a hyper-parameter. The final objective of training a comprehensive task network without forgetting is
\begin{align}
    \mathcal{L} = \mathcal{L}_{ce} + \mathcal{L}_{r} \label{eq.loss_hat}
\end{align}
where $\mathcal{L}_{ce}$ is the cross-entropy loss. The overall framework of the algorithm is shown in Figure~\ref{fig.diagrams}(a).

Note that for TIL, HAT needs the task-id for each test instance in order to choose the right task model for prediction or classification. However, by replacing the original model building method for each task in HAT with the OOD detection
method in CSI (more specifically, $\mathcal{L}_{c}$)
during training, HAT+CSI does not require to know the task-id
of each test instance at inference, which makes HAT+CSI suitable for CIL (class incremental learning). We will see the detailed prediction/classification method in Section~\ref{sec.experiment1}.

\subsubsection{SupSup: Supermasks in Superposition} \label{sec.sup}
SupSup (Sup)~\citep{supsup2020} is also a highly effective method that can overcome forgetting in the TIL setting. Sup trains supermasks by Edge Popup algorithm in~\citep{ramanujan2020s}. Specifically, given the initial weights of a base network $\mathbf{W}$, find binary masks $\mathbf{M}_k$ for task $k$ to minimize the cross-entropy loss.
\begin{align}
    \mathcal{L} = - \frac{1}{|\mathbf{X}_{k}|} \sum \log p(y | x, k), \label{eq.loss_sup}
\end{align}
where $\mathbf{X}_k$ is the training data for task $k$, and
\begin{align}
    p(y | x, k) = f(h(x; \mathbf{W} \otimes \mathbf{M}_{k})),
\end{align}
where $\otimes$ indicates element-wise product. The masks are obtained by selecting the top $p$\% of entries in the score matrices $\mathbf{V}$. The $p$ value determines the sparsity of the mask $\mathbf{M}_{k}$. The subnetwork found by Edge Popup algorithm is indicated by different colors in Figure~\ref{fig.diagrams}(b).

Like HAT, Sup is also for TIL and needs the task-id $k$ of each test instance at inference. With $k$, the system (which is referred to as Sup GG in the original Sup paper) uses the task-specific mask $\mathbf{M}_k$ to obtain the classification output. Like HAT+CSI, by replacing the cross-entropy loss in mask finding with the OOD detection loss in CSI, Sup+CSI also does not require the task-id of each test instance, which makes Sup+CSI applicable to CIL (class incremental learning). We will discuss the detailed prediction/classification method in Section~\ref{sec.experiment1}.

}

\subsubsection{CSI: Contrasting Shifted Instances for OOD Detection} \label{sec.csi}

The OOD detection method CSI is based on contrastive learning~\citep{chen2020simple,khosla2020supervised}, data and class augmentations, and results ensembling~\citep{tack2020csi}. 
The OOD training process is similar to that of contrastive learning. It consists of two steps: Step 1 learns {the feature representation by} the composite $g \circ h$, where $h$ is the feature extractor and $g$ is the projection to contrastive representation, and Step 2 learns/fine-tunes the linear classifier $f$, mapping the feature representation of $h$ to the label space (the classifier is the OOD model)). This two-step training process is outlined in Figure~\ref{fig.diagrams}(b). In the following,
we first describe the two-step training process and then explain how to make
a prediction based on an ensemble method to further improve the prediction. 

\textbf{Step 1 (Contrastive Loss for Feature Learning).} 
Supervised contrastive learning is used to try to repel data of different classes and
align data of the same class more closely to make it easier to classify them. A key operation is data
augmentation via transformations.

{Given a batch of $N$ samples, each sample ${x}$ is first duplicated. Each version then goes through \textit{three initial augmentations}
(horizontal flip, color changes, and Inception crop~\citep{inception}) to generate two different views ${x}^{1}$ and ${x}^{2}$ (they keep the same class label as ${x}$).}
Denote the augmented batch by $\mathcal{B}$, which now has $2N$ samples. In~\citep{hendrycks2019using} and \citep{tack2020csi}, it was shown that using image rotations is effective in learning OOD detection models because such rotations can effectively serve as out-of-distribution (OOD) training data.  
For each augmented sample ${x} \in \mathcal{B}$ with class $y$ of a task, we rotate ${x}$ by $90^{\circ}, 180^{\circ}, 270^{\circ}$ to create three images, which are assigned \textit{three new classes} $y_1, y_2$, and $y_3$, respectively.
{This results in a larger augmented batch $\tilde{\mathcal{B}}$. Since we generate three new images from each ${x}$, 
the size of $\tilde{\mathcal{B}}$ is $8N$. For each original class, we now have 4 classes. For a sample ${x} \in \tilde{\mathcal{B}}$, let $\mathcal{\tilde{B}}({x}) = \mathcal{\tilde{B}} \backslash \{ {x} \}$
and 
let $P({x}) \subset \tilde{\mathcal{B}} \backslash \{ {x} \}$ 
be a set consisting of the data of the same class as ${x}$ distinct from ${x}$.
The contrastive representation of a sample ${x}$ is ${z}_{x} = g(h({x, k})) / \| g(h({x, k})) \|$, where $k$ is the current task.
In learning, we minimize the supervised contrastive loss.
\begin{align}
    \mathcal{L}_{c}
    &= \frac{1}{8N} \sum_{ {x} \in \tilde{\mathcal{B}}} \frac{-1}{| P({x}) |} \sum_{{p} \in P({x})} \log{ \frac{ \text{exp}( {z}_{{x}} \cdot {z}_{{p}} / \tau)}{\sum_{{x}'  \in \tilde{\mathcal{B}}({x}) } \text{exp}( {z}_{{x}} \cdot {z}_{{x}'} / \tau) } }, \label{eq.modsupclr}
\end{align}
where 
$\tau$ is a scalar temperature, $\cdot$ is dot product, and $\times$ is  multiplication. 
The loss is reduced by repelling ${z}$ of different classes and aligning ${z}$ of the same class more closely.
$\mathcal{L}_{c}$ basically trains a feature extractor with good 
representations for learning an  OOD classifier.} 

Since the feature extractor is shared across tasks in continual learning, protection is needed to prevent catastrophic forgetting. HAT and Sup use their respective techniques to protect their feature extractor from forgetting. Therefore, the losses $\mathcal{L}$ of Eq.~\ref{eq.loss_sup} and $\mathcal{L}_{ce}$ of Eq.~\ref{eq.loss_hat} are replaced by Eq.~\ref{eq.modsupclr} while the forgetting prevention mechanisms still hold.

\textbf{Step 2 (Fine-tuning the Classifier).} {Given the feature extractor $h$ trained with the loss in Eq.~\ref{eq.modsupclr}, we {\textit{freeze $h$} and} only \textit{fine-tune} the linear classifier $f$, which is trained to predict the classes of task $k$ \textit{and} the augmented rotation classes.} $f$ maps the feature representation to {the label space in} $\mathcal{R}^{4|\mathcal{C}^{k}|}$, where $4$ is the number of rotation classes including the original data with $0^{\circ}$ rotation and $|\mathcal{C}^{k}|$ is the number of {original} classes in task $k$. We minimize the cross-entropy loss,
\begin{align}
    \mathcal{L}_{\text{ft}} = - \frac{1}{|\tilde{\mathcal{B}} |} \sum_{({x}, y) \in \tilde{\mathcal{B}}} 
    \log \tilde{p}(y | {x}, k), 
    \label{3obj}
\end{align}
where $\text{ft}$ indicates fine-tune,
and 
\begin{align}
    \tilde{p}(y | {x}, k) = \text{softmax} \left( f(h({x}, k))
    \right) \label{probrotation}
\end{align}
where
$f(h({x, k})) \in \mathcal{
R}^{4|\mathcal{C}^{k}|}$. The output $f(h({x, k}))$
includes the rotation classes. The linear classifier is trained to predict the original \textit{and} the rotation classes. Since an individual classifier is trained for each task and the feature extractor is frozen, no protection is necessary.

\vspace{2mm}
\noindent
\textbf{Ensemble Class Prediction.} We now discuss the prediction of class label $y$ for a test sample ${x}$.
Note that the network $f\circ h$ in Eq.~\ref{probrotation} returns logits for rotation classes (including the original task classes). Note also for each original class label $j_k \in \mathcal{C}^{k}$ (original classes) of a task $k$, we created three additional rotation classes. For class $j_k$, the classifier $f$ will produce four output values from its four rotation class logits, i.e., $f_{j_k,0}(h({x_0, k}))$, $f_{j_k,90}(h({x_{90}, k}))$, $f_{j_k,180}(h({x_{180}, k}))$, and $f_{j_k,270}(h({x_{270}, k}))$, where 0, 90, 180, and 270 represent $0^{\circ}, 90^{\circ}, 180^{\circ}$, and $270^{\circ}$ rotations respectively and ${x}_0$ is the original ${x}$. 
We compute an ensemble output $f_{j_k}(h({x}))$ for each class $j_k \in \mathcal{C}^{k}$ of task $k$, 
\begin{align}
    f(h({x, k}))_{j_k} = \frac{1}{4} \sum_{\text{deg}} f (h({x}_{\text{deg}}, k))_{j_k,\text{deg}} \label{eq:ensemblelogit}.
\end{align}

\subsection{Experiments}
\label{sec.experiment1}
We now present the experimental results of the combination techniques HAT+CSI and Sup+CSI for \textit{class incremental learning} (CIL). We will also use another OOD detection method ODIN~\citep{liang2018enhancing} to show that a better OOD detection method leads to better CIL results. We do not conduct extensive experiments on ODIN as it is much weaker than CSI in terms of ODD detection. Note that we will not report the ODD detection results for HAT+CSI and Sup+CSI in the open world in the continual learning process as the proposed method MORE in the next section performs better. 

\subsubsection{Experimental Datasets and Baselines}\label{sec.baselines}

\vspace{+2mm}
\noindent
\textbf{Datasets and CIL tasks}: We use three standard image classification benchmark datasets and construct five different CIL experiments.

\textbf{1.} \textbf{CIFAR-10}~\citep{Krizhevsky2009learning}: This dataset consists of 32x32 color images of 10 classes with 50,000 training and 10,000 testing samples. We construct an experiment (\textbf{C10-5T)} of 5 tasks with 2 classes per task.

\textbf{2.} \textbf{CIFAR-100}~\citep{Krizhevsky2009learning}: This dataset consists of 32x32 color images of 100 classes with 50,000 training and 10,000 testing samples. We construct two experiments of 10 tasks (\textbf{C100-10T}) and 20 tasks (\textbf{C100-20T}), where each task has 10 classes and 5 classes, respectively.

\textbf{3.} \textbf{Tiny-ImageNet}~\citep{Le2015TinyIV}: This is an image classification dataset with 64x64 color images of 200 classes with 100,000 training and 10,000 validation samples. Since the dataset does not provide labels for testing data, we use the validation data for testing. We construct two experiments of 5 tasks (\textbf{T-5T}) and 10 tasks (\textbf{T-10T}) with 40 classes per task and 20 classes per task, respectively.

\vspace{+2mm}
\noindent
\textbf{Baselines}: We use 18 diverse continual learning baselines: 

\textbf{1.} One projection method (\textbf{OWM}~\citep{zeng2019continuous}).

\textbf{2.}~Two exemplar-free (no replay data is saved) regularization methods (\textbf{MUC}~
\citep{Liu2020} and \textbf{PASS}~
\citep{Zhu_2021_CVPR_pass}). 

\textbf{3.} Nine replay-based methods (\textbf{LwF}~\citep{Li2016LwF}, \textbf{iCaRL}~\citep{Rebuffi2017}, \textbf{A-GEM}~\citep{chaudhry2018efficient}, \textbf{EEIL}~\citep{castro2018end_eeil}, \textbf{GD}~\citep{lee2019overcoming_gd}, \textbf{Mnemonics}~\citep{Liu_2020_CVPR}, \textbf{BiC}~\citep{wu2019large}, \textbf{DER++}~\citep{NEURIPS2020_b704ea2c_derpp}], and \textbf{HAL}~\citep{Chaudhry_Gordo_Dokania_Torr_Lopez-Paz_2021_hal}).

\textbf{4.} Three parameter-isolation methods (\textbf{HAT}~\citep{Serra2018overcoming}, \textbf{HyperNet}~\citep{von2019continual}, and \textbf{SupSup}~\citep{supsup2020}). 

\textbf{5.} Additionally, we report the accuracies of replay-based method \textbf{Co$^2$L} \citep{Cha_2021_ICCV_co2l} and parameter isolation methods \textbf{CCG} \citep{abati2020conditional} and \textbf{PR-Ent} \citep{henning2021posterior} from their original papers as CCG has not released the code and we are unable to run Co$^2$L and PR-Ent on our machines.

\subsubsection{Training Details and Evaluation Metrics}
\label{sec.training}

\textbf{Training Details.} \label{sec:training_details}
For the backbone structure, we follow \citep{supsup2020,Zhu_2021_CVPR_pass,NEURIPS2020_b704ea2c_derpp} and use
ResNet-18~\citep{he2016deep}. For CIFAR-100 and Tiny-ImageNet, the number of channels is doubled to fit more classes. For all baselines, the same ResNet-18 backbone architecture is employed except for OWM and HyperNet, for which we use their original architectures. OWM uses AlexNet. It is not obvious how to apply its orthogonal projection technique to the ResNet structure. HyperNet uses ResNet-32 and we are unable to replace it due to model initialization arguments unexplained in the original paper.
For the replay methods, we use the memory buffer of 200 for 
CIFAR-10 and 2000 for CIFAR-100 and Tiny-ImageNet as in \citep{Rebuffi2017,NEURIPS2020_b704ea2c_derpp}. We use the hyper-parameters suggested by the authors. If we cannot reproduce any result, we use 10\% of the training data as a validation set to grid-search for good hyper-parameters. For our proposed methods, we report the hyper-parameters in \ref{apx:hyper_params}.
All the results are averages over 5 runs with random seeds.

\textbf{Evaluation Metrics.} 

\textbf{1.} \textit{Average classification accuracy} over all classes after learning the last task. The final class prediction depends on \textit{prediction methods} (see below). We also report \textit{forgetting rate} 
in \ref{apx:forgetting}.

\textbf{2.} \textit{Average AUC} (Area Under the ROC Curve) over all task models for the evaluation of OOD detection. AUC is the main measure used in OOD detection papers. Using this measure, we show that a better OOD detection method will result in a better CIL performance. Let $\textit{AUC}_{k}$ be the AUC score of task $k$. It is computed by using only the model (or classes) of task $k$ to score the test data of task $k$ as the in-distribution (IND) data and the test data from other tasks as the out-of-distribution (OOD) data. The average AUC score is: $AUC = \sum_{k} \textit{AUC}_{k}/n$, where $n$ is the number of tasks.

It is not straightforward to change existing CL algorithms to include a new OOD detection method that needs training, e.g., CSI, except for TIL (task incremental learning) methods like HAT and Sup. For HAT and Sup, we can simply switch their methods for learning each task with CSI (see Section~\ref{sec.hat} and Section~\ref{sec.sup}).

\textbf{Prediction Methods.} The theoretical result in Section~\ref{sec.theorem} states that we use Eq.~\ref{eq:cil_in_til_and_tp} to perform the final prediction. The first probability (WP) in Eq.~\ref{eq:cil_in_til_and_tp} is easy to get as we can simply use the softmax values of the classes in each task. However, the second probability (TP) in Eq.~\ref{eq:cil_in_til_and_tp} is tricky as each task is learned without the data of other tasks. There can be many options. 
We take the following approaches for prediction (which are a special case of Eq.~\ref{eq:cil_in_til_and_tp}, see below):

\textbf{1.} For those approaches that use a single classification head to include all classes learned so far, we predict as follows (which is also the approach taken by the existing papers.) 
\begin{align}
    \hat{y} = \argmax f(x)
\end{align}
where $f(x)$ is the logit output of the network. 

\textbf{2.} For multi-head methods (e.g., HAT, HyperNet, and Sup), which use one head for each task, we use the concatenated output as
\begin{align}
    \hat{y} = \argmax \bigoplus_{k} f(x)_{k} \label{eq:cil_pred}
\end{align}
where $\bigoplus$ indicate concatenation and $f(x)_k$ is the output of task $k$.\footnote{The {Sup} paper proposed a one-shot task-id prediction assuming that the test instances come in a batch and all belong to the same task like iTAML. We assume a single test instance per batch. Its task-id prediction results in an accuracy of 50.2 on C10-5T, which is much lower than 62.6 by using Eq.~\ref{eq:cil_pred}. The task-id prediction of {HyperNet} also works poorly. The accuracy of its task-id prediction is 49.34 on C10-5T while it is 53.4 using Eq.~\ref{eq:cil_pred}. {PR} uses entropy to find task-id. Among many variations of PR, we use the variations that perform the best for each dataset with exemplar-free and single sample per batch at testing (i.e., no PR-BW).}

These methods (in fact, they are the same method used in two different settings) are a special case of Eq.~\ref{eq:cil_in_til_and_tp} if we define $OOD_k$ as $\sigma(\max f(x)_k )$, where $\sigma$ is the sigmoid. Hence, the theoretical results in Section~\ref{sec.theorem} are still applicable. We present a detailed explanation of this prediction method and some other options in \ref{apx:diff_tp}. These two approaches work quite well.

\subsubsection{Better OOD Detection Produces Better CIL Performance}\label{sec.betterOOD}
The key theoretical result in Section \ref{sec.theorem} is that better OOD detection will produce better CIL performance. We compare a weaker OOD method ODIN with the strong CSI. ODIN is a post-processing method for OOD detection \citep{liang2018enhancing}. Note that it does not always improve the OOD detection performance compared to without the ODIN post-processing (see below).

\begin{table}[t]
\centering
\caption{Performance Comparison between the Original Output and ODIN. Note that ODIN does not apply to iCaRL and Mnemonics as they are not based on softmax but some distance functions. As mentioned earlier, for \textbf{Co$^2$L}, \textbf{CCG}, and \textbf{PR-Ent}, they either have no code, or their codes do not run on our machine.
The results for other datasets are in \ref{apx:additional_odin}.}
\resizebox{0.55\columnwidth}{!}{
\begin{tabular}{lcccc}
\toprule
& \multicolumn{2}{c}{AUC} & \multicolumn{2}{c}{CIL} \\
Method & Original & ODIN & Original & ODIN \\
\midrule
OWM & 71.31 & 70.06 & 28.91 & 28.88 \\
\hline
MUC & 72.69 & 72.53 & 30.42 & 29.79 \\
PASS & 69.89 & 69.60 & 33.00 & 31.00 \\
\hline
LwF & 88.30 & 87.11 & 45.26 & 51.82 \\
A-GEM & 78.01 & 79.00 & 9.29 & 13.48 \\
EEIL & 83.37 & 79.73 & 48.99 & 41.74 \\
GD & 85.37 & 82.98 & 49.67 & 47.28 \\
BiC & 87.89 & 86.73 & 52.92 & 48.65 \\
DER++ & 85.99 & 88.21 & 53.71 & 55.29 \\
HAL & 64.21 & 64.83 & 15.59 & 21.01 \\
\hline
HAT & 77.72 & 77.80 & 41.06 & 41.21 \\
HyperNet & 71.82 & 72.32 & 30.23 & 30.83 \\
Sup & 79.16 & 80.58 & 44.58 & 46.74 \\
\bottomrule
\end{tabular}
}
\label{Tab:odin}
 \end{table}

\textbf{Applying ODIN.} 
We first train the baseline models using their original algorithms, and then apply temperature scaling and input noise of ODIN at testing for each task (no training data needed).
More precisely, the output of class $j$ in task $k$ changes by temperature scaling factor $\tau_{k}$ of task $k$ as
\begin{align}
    s(x; \tau_k)_j = e^{f(x)_{kj} / \tau_k } / \sum_{j} e^{f(x)_{kj} / \tau_{k}} \label{eq:odin_softmax}
\end{align}
and the input changes by the noise factor $\epsilon_k$ as
\begin{align}
    \tilde{x} = x - \epsilon_k \text{sign} (-\nabla_x \log s (x; \tau_{k})_{\hat{y}} ) \label{eq:odin_perturbation}
\end{align}
where $\hat{y}$ is the class with the maximum output value in task $k$. This is a positive adversarial example inspired by \citep{goodfellow2015explaining}. The values $\tau_k$ and $\epsilon_k$ are hyper-parameters and we use the same values for all tasks except for PASS, for which we use a validation set to tune $\tau_k$ (see~\ref{apx:additional_odin}).

Table~\ref{Tab:odin} gives the results for C100-10T. The CIL results clearly show that the CIL performance increases if the AUC increases with ODIN. For instance, the CIL of DER++ and Sup improves from 53.71 to 55.29 and 44.58 to 46.74, respectively, as the AUC increases from 85.99 to 88.21 and 79.16 to 80.58.
It shows that when this method is incorporated into each task model in the existing trained CIL network, the CIL performance of the original method improves. We note that ODIN does not always improve the average AUC.
For those experiencing a decrease in AUC, the CIL performance also decreases except LwF. The inconsistency of LwF is due to its severe classification bias towards later tasks as discussed in BiC~\citep{wu2019large}. The temperature scaling in ODIN has a similar effect as 
the bias correction in BiC, and the CIL of LwF becomes close to that of BiC after the correction. Regardless of whether ODIN improves AUC or not, the positive correlation between AUC and CIL (except LwF) verifies the efficacy of Theorem~\ref{thm:cil_with_op_and_ood}, indicating better OOD detection results in better CIL performances.

\begin{table}[t]
\centering
\caption{Average CIL and AUC of HAT and Sup after applying OOD detection methods ODIN and CSI. ODIN is a traditional OOD detection method while CSI is a recent OOD detection method known to be better than ODIN. As CL methods produce better OOD detection performance by CSI, their CIL performances are better than the ODIN counterparts.}
\resizebox{0.95\columnwidth}{!}{
\begin{tabular}{l l c c c c c c c c c c}
\toprule
\multicolumn{1}{c}{CL} & \multicolumn{1}{c}{OOD} & \multicolumn{2}{c}{C10-5T} & \multicolumn{2}{c}{C100-10T} &  \multicolumn{2}{c}{C100-20T} & \multicolumn{2}{c}{T-5T} & \multicolumn{2}{c}{T-10T} \\
{} & {} & AUC & CIL & AUC & CIL & AUC & CIL & AUC & CIL & AUC & CIL \\
\midrule
\multirow{2}{*}{HAT} & 
ODIN &
82.5 & 62.6 &
77.8 & 41.2 &
75.4 & 25.8 &
72.3 & 38.6 &
71.8 & 30.0 \\
{} & 
CSI &
91.2 & 87.8 & %
84.5 & 63.3 & %
86.5 & 54.6 & %
76.5 & 45.7 &
78.5 & 47.1 \\ %
\hline
\multirow{2}{*}{Sup} & 
ODIN &
82.4 & 62.6 &
80.6 & 46.7 &
81.6 & 36.4 &
74.0 & 41.1 &
74.6 & 36.5 \Tstrut \\
{} & 
CSI &
91.6 & 86.0 & %
86.8 & 65.1 & %
88.3 & 60.2 & %
77.1 & 48.9 & 
79.4 & 45.7 \\ 
\bottomrule
\end{tabular}
}
\label{Tab:odin_csi}
\vspace{-3mm}
\end{table}

\textbf{Applying CSI.} We now apply the OOD detection method CSI. Due to its sophisticated data augmentation, supervised contrastive learning, and results ensemble, it is hard to apply CSI to other baselines without fundamentally changing them except for HAT and Sup (SupSup) as these methods are parameter isolation-based TIL methods. We can simply replace their model for training each task with CSI wholesale. As mentioned earlier, both HAT and Sup as TIL methods have almost no forgetting.

Table~\ref{Tab:odin_csi} reports the results of using CSI and ODIN. ODIN is a weaker OOD method than CSI. 
Both HAT and Sup improve greatly as the systems are equipped with a better OOD detection method CSI.
These experiment results empirically demonstrate the efficacy of Theorem~\ref{thm:cil_with_op_and_ood}, i.e., the CIL performance can be improved if a better OOD detection method is used.

\subsubsection{Full Comparison of HAT+CSI and Sup+CSI with Baselines} \label{sec.HAT+CSI}

\begin{table}[t]
\centering
\caption{Average accuracy (CIL) of all methods after all tasks are learned. The baselines are grouped into (a), (b), (c), and (d) for projection, regularization, replay, and parameter-isolation methods, respectively. Our proposed methods are grouped into (e). Exemplar-free methods are italicized. $\dagger$ indicates that in their original papers, PASS and Mnemonics are pre-trained with the first half of the classes. Their results with pre-train are 50.1 and 53.5 on C100-10T, respectively, which are still much lower than the proposed HAT+CSI and Sup+CSI without pre-training. We do not use pre-training in our experiment for fairness. $*$ indicates that iCaRL and Mnemonics report average incremental accuracy in their original papers. We report average accuracy over all classes after all tasks are learned. The last column Avg. shows the average CIL accuracy of each method over all datasets.
}
\resizebox{0.80\columnwidth}{!}{
\begin{tabular}{l l c c c c c c}
\toprule
& \multirow{1}{*}{Method}  &  \multicolumn{1}{c}{C10-5T}  &  \multicolumn{1}{c}{C100-10T} &  \multicolumn{1}{c}{C100-20T} &  \multicolumn{1}{c}{T-5T} & \multicolumn{1}{c}{T-10T} & \multicolumn{1}{c}{Avg.} \\
\midrule
\multirow{1}{*}{(a)} & \textit{OWM} & 51.8\scalebox{1.0}{$\pm$0.05} & 28.9\scalebox{1.0}{$\pm$0.60} & 24.1\scalebox{1.0}{$\pm$0.26} & 10.0\scalebox{1.0}{$\pm$0.55} & 8.6\scalebox{1.0}{$\pm$0.42} & 24.7 \\
\hline
\multirow{2}{*}{(b)} & \textit{MUC} & 52.9\scalebox{1.0}{$\pm$1.03} & 30.4\scalebox{1.0}{$\pm$1.18} & 14.2\scalebox{1.0}{$\pm$0.30} & 33.6\scalebox{1.0}{$\pm$0.19} & 17.4\scalebox{1.0}{$\pm$0.17} & 29.7 \\
& \textit{PASS}$^{\dagger}$ & 47.3\scalebox{1.0}{$\pm$0.98} & 33.0\scalebox{1.0}{$\pm$0.58} & 25.0\scalebox{1.0}{$\pm$0.69} & 28.4\scalebox{1.0}{$\pm$0.51} & 19.1\scalebox{1.0}{$\pm$0.46} & 30.6 \\
\hline
\multirow{10}{*}{(c)} & LwF & 54.7\scalebox{1.0}{$\pm$1.18} & 45.3\scalebox{1.0}{$\pm$0.75} & 44.3\scalebox{1.0}{$\pm$0.46} & 32.2\scalebox{1.0}{$\pm$0.50} & 24.3\scalebox{1.0}{$\pm$0.26} & 40.2 \\
& iCaRL$^*$  & 63.4\scalebox{1.0}{$\pm$1.11} & 51.4\scalebox{1.0}{$\pm$0.99} & 47.8\scalebox{1.0}{$\pm$0.48} & 37.0\scalebox{1.0}{$\pm$0.41} & 28.3\scalebox{1.0}{$\pm$0.18} & 45.6 \\
& A-GEM & 20.0\scalebox{1.0}{$\pm$0.37} & 9.3\scalebox{1.0}{$\pm$0.17} & 4.1\scalebox{1.0}{$\pm$0.89} & 13.5\scalebox{1.0}{$\pm$0.08} & 7.7\scalebox{1.0}{$\pm$0.07} & 10.9 \\
& EEIL & 57.1\scalebox{1.0}{$\pm$0.28} & 49.0\scalebox{1.0}{$\pm$1.27} & 33.5\scalebox{1.0}{$\pm$0.08} & 14.7\scalebox{1.0}{$\pm$0.40} & 9.8\scalebox{1.0}{$\pm$0.19} & 32.8 \\
& GD & 58.7\scalebox{1.0}{$\pm$0.31} & 49.7\scalebox{1.0}{$\pm$0.33} & 38.9\scalebox{1.0}{$\pm$0.02} & 16.4\scalebox{1.0}{$\pm$1.40} & 11.7\scalebox{1.0}{$\pm$0.25} & 35.1 \\
& Mnemonics$^{\dagger *}$ & 64.1\scalebox{1.0}{$\pm$1.47} & 51.0\scalebox{1.0}{$\pm$0.34} & 47.6\scalebox{1.0}{$\pm$0.74} & 37.1\scalebox{1.0}{$\pm$0.46} & 28.5\scalebox{1.0}{$\pm$0.72} & 45.7 \\
& BiC & 61.4\scalebox{1.0}{$\pm$1.74} & 52.9\scalebox{1.0}{$\pm$0.64} & 48.9\scalebox{1.0}{$\pm$0.54} & 41.7\scalebox{1.0}{$\pm$0.74} & 33.8\scalebox{1.0}{$\pm$0.40} & 47.7 \\
& DER++ & 66.0\scalebox{1.0}{$\pm$1.20} & 53.7\scalebox{1.0}{$\pm$1.30} & 46.6\scalebox{1.0}{$\pm$1.44} & 35.8\scalebox{1.0}{$\pm$0.77} & 30.5\scalebox{1.0}{$\pm$0.47} & 46.5 \\
& HAL & 32.8\scalebox{1.0}{$\pm$2.17} & 15.6\scalebox{1.0}{$\pm$0.31} & 13.5\scalebox{1.0}{$\pm$1.53} & 3.4\scalebox{1.0}{$\pm$0.35} & 3.4\scalebox{1.0}{$\pm$0.38} & 13.7 \\
& Co$^2$L & 65.6 &  &  &  &  \\
\hline
\multirow{6}{*}{(d)} & \textit{CCG}  & 70.1 &  &  &  &  & \\
& \textit{HAT} & 62.7\scalebox{1.0}{$\pm$1.45} & 41.1\scalebox{1.0}{$\pm$0.93} & 25.6\scalebox{1.0}{$\pm$0.51} & 38.5\scalebox{1.0}{$\pm$1.85} & 29.8\scalebox{1.0}{$\pm$0.65} & 39.5 \\
& \textit{HyperNet} & 53.4\scalebox{1.0}{$\pm$2.19} & 30.2\scalebox{1.0}{$\pm$1.54} & 18.7\scalebox{1.0}{$\pm$1.10} & 7.9\scalebox{1.0}{$\pm$0.69} & 5.3\scalebox{1.0}{$\pm$0.50} & 23.1 \\
& \textit{Sup} & 62.4\scalebox{1.0}{$\pm$1.45} & 44.6\scalebox{1.0}{$\pm$0.44} & 34.7\scalebox{1.0}{$\pm$0.30} & 41.8\scalebox{1.0}{$\pm$1.50} & 36.5\scalebox{1.0}{$\pm$0.36} & 44.0 \\
& \textit{PR-Ent} & 61.9 & 45.2 & & & \\
\hline
\multirow{4}{*}{(e)} &  \textit{HAT+CSI} & 87.8\scalebox{1.0}{$\pm$0.71} & 63.3\scalebox{1.0}{$\pm$1.00} & 54.6\scalebox{1.0}{$\pm$0.92} & 45.7\scalebox{1.0}{$\pm$0.26} & 47.1\scalebox{1.0}{$\pm$0.18} & 59.7 \\
&  \textit{Sup+CSI} & 86.0\scalebox{1.0}{$\pm$0.41} & 65.1\scalebox{1.0}{$\pm$0.39} & 60.2\scalebox{1.0}{$\pm$0.51} & 48.9\scalebox{1.0}{$\pm$0.25} & 45.7\scalebox{1.0}{$\pm$0.76} & 61.2 \\
& HAT+CSI+c & 88.0\scalebox{1.0}{$\pm$0.48} & 65.2\scalebox{1.0}{$\pm$0.71} & 58.0\scalebox{1.0}{$\pm$0.45} & 51.7\scalebox{1.0}{$\pm$0.37} & 47.6\scalebox{1.0}{$\pm$0.32} & 62.1 \\
& Sup+CSI+c & 87.3\scalebox{1.0}{$\pm$0.37} & 65.2\scalebox{1.0}{$\pm$0.37} & 60.5\scalebox{1.0}{$\pm$0.64} & 49.2\scalebox{1.0}{$\pm$0.28} & 46.2\scalebox{1.0}{$\pm$0.53} & 61.7  \\ 
\bottomrule
\end{tabular}
}
\label{Tab:maintable_resnet}
 \vspace{-3mm}
\end{table}

\begin{table}[t]
\centering
\caption{TIL (WP) accuracy results of 3 best-performing baselines and our methods. The full results are given in \ref{apx:til_results}. The calibrated versions (+c) of our methods are omitted as calibration does not affect TIL performances.}
\resizebox{0.9\columnwidth}{!}{
\begin{tabular}{l c c c c c c}
\toprule
\multirow{1}{*}{Method}  &  \multicolumn{1}{c}{C10-5T}  &  \multicolumn{1}{c}{C100-10T} &  \multicolumn{1}{c}{C100-20T} &  \multicolumn{1}{c}{T-5T} & \multicolumn{1}{c}{T-10T} & \multicolumn{1}{c}{Avg.} \\
\midrule
BiC & 95.4\scalebox{1.0}{$\pm$0.35} & 84.6\scalebox{1.0}{$\pm$0.48} & 88.7\scalebox{1.0}{$\pm$0.19} & 61.5\scalebox{1.0}{$\pm$0.60} & 62.2\scalebox{1.0}{$\pm$0.45} & 78.5 \\
HAT & 96.7\scalebox{1.0}{$\pm$0.18} & 84.0\scalebox{1.0}{$\pm$0.23} & 85.0\scalebox{1.0}{$\pm$0.98} & 61.2\scalebox{1.0}{$\pm$0.72} & 63.8\scalebox{1.0}{$\pm$0.41} & 78.1 \\
Sup & 96.6\scalebox{1.0}{$\pm$0.21} & 87.9\scalebox{1.0}{$\pm$0.27} & 91.6\scalebox{1.0}{$\pm$0.15} & 64.3\scalebox{1.0}{$\pm$0.24} & 68.4\scalebox{1.0}{$\pm$0.22} & 81.8 \\
\hline
HAT+CSI & 98.7\scalebox{1.0}{$\pm$0.06} & 92.0\scalebox{1.0}{$\pm$0.37} & 94.3\scalebox{1.0}{$\pm$0.06} & 68.4\scalebox{1.0}{$\pm$0.16} & 72.4\scalebox{1.0}{$\pm$0.21} & 85.2 \\
Sup+CSI & 98.7\scalebox{1.0}{$\pm$0.07} & 93.0\scalebox{1.0}{$\pm$0.13} & 95.3\scalebox{1.0}{$\pm$0.20} & 65.9\scalebox{1.0}{$\pm$0.25} & 74.1\scalebox{1.0}{$\pm$0.28} & 85.4 \\
\bottomrule
\end{tabular}
}
\label{Tab:til}
\vspace{-2mm}
\end{table}

We now make a full comparison of the two proposed systems HAT+CSI and Sup+CSI designed based on the theoretical results with baselines. 
Since HAT and Sup are exemplar-free CL methods, HAT+CSI and Sup+CSI do not need to save any previous task data for replaying. Table~\ref{Tab:maintable_resnet} shows that HAT and Sup equipped with CSI outperform the baselines by large margins.
DER++, the best replay method, achieves 66.0 and 53.7 on C10-5T and C100-10T, respectively, while HAT+CSI achieves 87.8 and 63.3 {and Sup+CSI achieves 86.0 and 65.1}. The large performance gap remains consistent in more challenging problems, T-5T and T-10T. 

Due to the definition of OOD in the prediction method and the fact that each task is trained separately in HAT and Sup, the outputs $f(x)_k$ from different tasks can be in different scales, which will result in incorrect predictions. To deal with the problem, we can calibrate the output as $\alpha_k f(x)_k + \beta_k$ and use $OOD_k = \sigma ( \alpha_k f(x)_k + \beta_k )$. The optimal $\alpha_k^*$ and $\beta_k^*$ for each task $k$ can be found by optimization with a memory buffer to save a very small number of training examples from previous tasks like that in the replay-based methods. We refer to the calibrated methods as HAT+CSI+c and Sup+CSI+c. They are trained by using a memory buffer of the same size as the replay methods (see Section \ref{sec.training}). Table~\ref{Tab:maintable_resnet} shows that the calibration improves from their memory-free versions, i.e., without calibration. We provide the details about how to train the calibration parameters $\alpha_k$ and $\beta_k$ in \ref{apx:calibration}.

We note that CSI uses extensive data augmentations in its OOD detection. However, the baseline systems do not. To be fair, we added the same data augmentations to the three top-performing baselines, Mnemonics, BiC, and DER++. The average accuracy values over the five CIL experiments are 36.66, 35.75, and 18.43 for Mnemonics, BiC, and DER++, respectively. Our methods, HAT+CSI, Sup+CSI, HAT+CSI+c, and Sup+CSI+c, achieve accuracy values of 59.7, 61.2, 62.1, and 61.7, respectively, which are significantly better. In fact, with the augmentations, the three baselines perform worse than their original versions.
We believe the reason is that while augmentations improve the performance of the current task as they help learn finer-grained and more task-specific features, they also cause more model updates in learning a task due to the additional augmented data, which leads to significantly more forgetting of prior tasks. However, our technique incorporating robust TIL mechanisms prevents forgetting while also concurrently benefiting from the strong OOD detection performance for each task model of CSI, which exploits data augmentations.

Finally, as shown in Theorem~\ref{thm:ce}, the CIL performance also depends on the TIL (WP) performance. We compare the TIL accuracies of the baselines and our methods in Table~\ref{Tab:til}. Our systems again outperform the baselines by large margins on more challenging datasets (e.g., CIFAR100 and Tiny-ImageNet).

\section{Proposed Approach 2: Out-of-Distribution Replay} 
\label{sec.more}
{The approach presented above does not save any training data from previous tasks except for the optional step of calibration. 
The method presented in this section is based on the replay approach to solving CIL, which saves a small number of training data from each previous task. The proposed method is called \textit{M}ulti-head model for continual learning via \textit{O}OD \textit{RE}play (MORE). As mentioned in Section~\ref{sec.clom}, the OOD detection method used in this section is a closed-world method as it uses the saved samples from previous tasks as OOD samples in learning each new task.

\subsection{The Proposed MORE Technique}
\label{sec.ood-reply}

Recall a replay-based method for continual learning works by memorizing or saving a small subset of the training samples from each previous task in a memory buffer. The saved data is called the \textit{replay data}. In learning a new task, the new task data and the replay data are trained jointly to update the model. Clearly, using the replay data can partially deal with the inter-class separation (ICS) problem because the model sees some data from all classes learned so far. However, it cannot solve the ICS problem completely because the amount of replay data is often very small. 

Unlike existing replay-based CIL methods, which simply use the replay data to update the decision boundaries between the old class and the new classes (in the new task), the proposed method uses the replay data to build an OOD detection model for each task in continual learning, which gives the name of the proposed method, i.e., \textit{out-of-distribution replay}. Further, unlike existing OOD detection methods, which usually do not use any OOD data in training, the proposed method uses the replay data from previous tasks as the OOD data for the current new task in building its OOD detection model. 

Unlike HAT+CSI and Sup+CSI, which do not use a pre-trained network, MORE trains a multi-head network as an adapter \citep{houlsby2019parameter_adapter} to a pre-trained network (see Figure~\ref{illustration}(b)). 
Note that using a pre-trained transformer network and adapter modules is a common practice in existing continual learning methods in the natural language processing community \citep{ke2021achieving,ke2022continual}. Here we also leverage this approach for image classification tasks.
In continual learning, the pre-trained network is frozen, only the adapters and the norm layers are trainable. Similar to HAT+CSI, a hard attention mask (HAT) is again employed to protect each task model or classifier to avoid forgetting. Each head is also an OOD detection model for a task, but, as mentioned above, MORE uses the replay data as the OOD data to build an OOD detection model.  Since HAT has been described in Section~\ref{sec.hat}, we will not discuss it further except to state that we need to use  $\mathcal{L}_{ood}$ in Eq.~\ref{ood_obj} to replace $\mathcal{L}_{ce}$  in Eq.~\ref{eq.loss_hat} after incorporating the trainable embedding $\ve^{k}$. We describe the whole training and prediction process in \ref{appendix:pseudo}.

\subsubsection{Training an OOD Detection Model}\label{sec.training_ood}

At task $k$, the system receives the training data $\mathcal{D}_k=\{(x_k^i, y_k^i)_{i=1}^{n_k}\}$, where $n_k$ is the number of samples, and $x_k^i \in \mathbf{X}_k$ is an input sample and $y_k^i \in \mathbf{Y}_k$ (the set of all classes of task $k$) is its class label.
We train the feature extractor $z = h(x, k; \theta)$ and task-specific classifier $f(z; \phi_k)$ using $\mathcal{D}_{k}$ and the samples in the memory buffer $\mathcal{M}$. We treat the buffer data as OOD data to encourage the network to learn the current task and also detect ODD samples (the models or classifiers of the previous tasks are not touched). We achieve it by maximizing $p(y | x, k) = \text{softmax} f(h(x, k; \theta); \phi_k)$ for an IND sample $x \in \mathbf{X}_k$ and maximizing $p(ood|x, k)$ for an OOD sample $x \in \mathcal{M}$. The additional label $ood$ is reserved for previous and possible future unseen classes.
Figure~\ref{illustration}(a) shows the overall idea of the proposed approach. We formulate the problem as follows.
\begin{figure}
\centering
\includegraphics[width=1.00\linewidth]{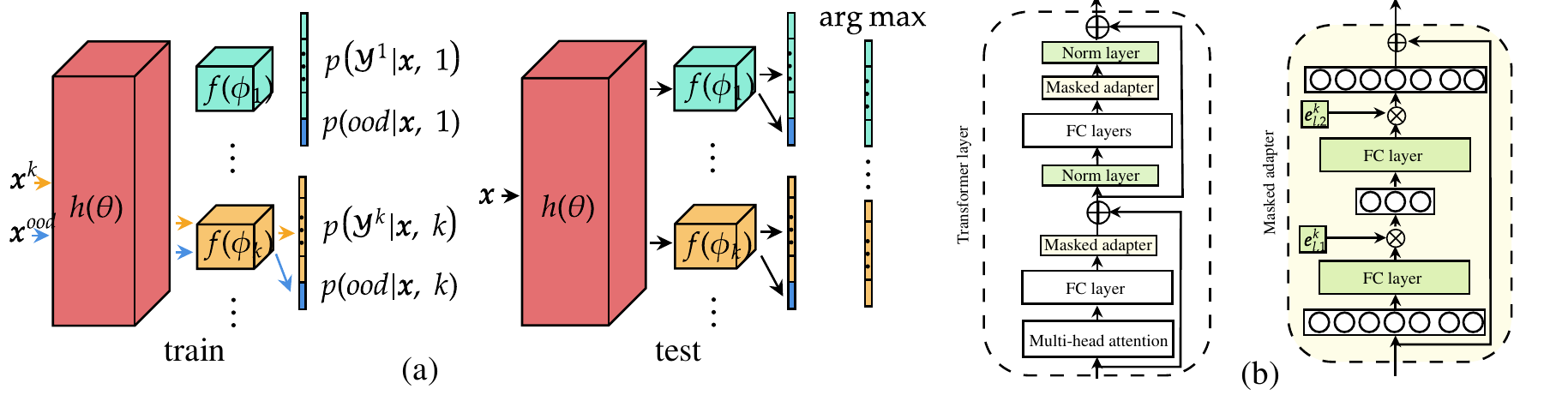}
\caption{
(a) We train the feature extractor and the task classifier $k$ at task $k$.
The output values of the classifier correspond to $|\mathbf{Y}_k| + 1$ classes, in which the last class is for OOD (i.e., representing previous and unseen future classes).
At inference/testing, the probability values of each task model without the OOD class are concatenated and the system chooses the class with the maximum score. (b) Transformer and adapter module. The masked adapter network consists of 2 fully connected layers and task-specific masks. During training, only the masked adapters and norm layers are updated and the other parts in the transformer layers remain unchanged.}
\label{illustration}
\end{figure}

Given the training data $\mathbf{X}_k$
of size $n_k$ at task $k$ and the memory buffer $\mathcal{M}$ of size $M$, we minimize the loss
\begin{align}
    \begin{split}
        \mathcal{L}_{ood} (\theta, \phi_k) = - \frac{1}{M + N} \left( \sum_{(x, y) \in \mathcal{M}} \log p(ood | x, k) + \sum_{(x, y) \in \mathcal{D}^{k}} \log p(y | x, k) \right) \label{ood_obj}
    \end{split}
\end{align}
It is the sum of two cross-entropy losses. The first loss is for learning OOD samples while the second loss is for learning the classes from the current task. We optimize the shared parameter $\theta$ in the feature extractor. The task-specific classification parameters $\phi_k$ are independent of other tasks. 
The learned representation of the current data should be robust to OOD data. The classifier thus can classify both IND and OOD data.

In testing, we perform prediction by comparing the softmax probability output values using all the task classifiers from task 1 to $k$ without the OOD class as
\begin{align}
    \hat{y} = \argmax \bigoplus_{1 \leq j \leq k} p(\mathbf{Y}_j| x, j) \label{base_prediction}
\end{align}
where $\bigoplus$ is the concatenation over the output space. Figure~\ref{illustration}(a) shows the prediction rule. 
We are basically choosing the class with the highest softmax probability over all classes from all learned tasks.

\subsubsection{Back-Updating the Previous OOD Models}\label{sec.backward}
Each task model works better if more diverse OOD data is provided during training. As in a replay-based approach, MORE saves an equal number of samples per class after each task~\citep{chaudhry2019continual_er}. The saved samples in the memory are used as OOD samples for each new task. 
Thus, in the beginning of continual learning when the system is trained on only a small number of tasks, the classes of samples in the memory are less diverse than after more tasks are learned. This makes the performance of OOD detection stronger for later tasks, but weaker in earlier tasks. To prevent this asymmetry, we update the model of each previous task so that it can also identify the samples from subsequent classes (which were unseen during the training of the previous task) as OOD samples.

At task $k$, we update the previous task models $(j=1, \cdots, k-1)$ as follows. Denote the samples of task $j$ in memory $\mathcal{M}$ by $\tilde{\mathbf{X}}_j$. We construct a new dataset using the current task dataset and the samples in the memory buffer. We randomly select $|\mathcal{M}|$ samples from the training data $\mathcal{D}_{k}$ and pool them with the remaining samples in $\mathcal{M}$ after removing the IND samples $\tilde{\mathcal{D}_{j}}$ of task $j$ from $\mathcal{M}$. We do not use the entire training data $\mathcal{D}_{k}$ as we do not want a large sample imbalance between IND and OOD.
Denote the new dataset by $\tilde{\mathcal{M}}$.
Using the data, we update only the parameters $\phi_j$ of the classifier for task $j$ with the feature representations frozen by minimizing the loss
\begin{align}
    \mathcal{L}(\phi_{j}) = - \frac{1}{2M} \left( \sum_{(x, y) \in \tilde{\mathcal{M}}} \log p(ood | x, j) + \sum_{(x, y) \in \tilde{\mathcal{D}}_{j}} \log p(y | x, j) \right) \label{backward}
\end{align}
We reduce the loss by updating the parameters of classifier $j$ to maximize the probability of the class if the sample belongs to task $j$ and maximize the OOD probability otherwise. 

\subsubsection{Improving Prediction Performance by a Distance Based Technique}\label{sec.ensemble_scores}
We further improve the prediction in Eq.~\ref{base_prediction} by introducing a distance-based factor used as a coefficient to the softmax probabilities in Eq.~\ref{base_prediction}. It is quite intuitive that if a test instance is close to a class, it is more likely to belong to the class. We thus propose to combine this new distance factor and the softmax probability output of the task $j$ model to make the final prediction decision. In some sense, this can be considered as an ensemble of the two methods.

We define the distance-based coefficient $s_{j}(x)$ of task $j$ for the test instance $x$ by the maximum of inverse Mahalanobis distance~\citep{lee2018simple} between the feature of $x$ and the Gaussian distributions of the classes in task $j$ parameterized by the mean $\mu_j^i$ of the class $i$ in task $j$ and the sample covariance $S_{j}$. They are estimated by the features of class $i$'s training data for each class $i$ in task $j$.
If a test instance is from the task, its feature should be close to the distribution that the instance belongs to. Conversely, if the instance is OOD to the task, its feature should not be close to any of the distributions of the classes in the task.
More precisely, for task $j$ with class $y_1, \cdots, y_{|\mathbf{Y}_{j}|}$ (where ${|\mathbf{Y}_{j}|}$ represents the number of classes in task $k$), we define the coefficient $s_{j}(x)$ as 
\begin{align}
    s_{j}(x) = \max \left[ 1/\text{MD}(x; \mu_j^{y_{1}}, S_{j}), \cdots, 1/\text{MD}(x; \mu_j^{y_{|\mathbf{Y}_{j}|}}, S_{j}) \right] \label{eq.ensemble}
\end{align}
{$\text{MD}(x; \mu_j^{i}, S_{j})$ is the Mahalanobis distance.
The coefficient is large if at least one of the Mahalanobis distances is small but the coefficient is small if all the distances are large (i.e. the feature is far from all the distributions of the task).
The parameters $\mu_j^{i}$ and $S_{j}$ can be computed and saved when each task is learned. The mean $\mu_j^{i}$ 
is computed using the training samples $\mathcal{D}^i_{j}$ of class $i$ as follows,
\begin{align}
    \mu_j^{i} = \sum_{x \in \mathcal{D}^{i}_{j}} h(x, i) / |\mathcal{D}^{i}_{j}| \label{eq.compute_mu}
\end{align}
and the covariance $S_{j}$ of task $j$ is the mean of covariances of the classes in task $j$,
\begin{align}
    S_{j} = \sum_{i \in \mathbf{Y}_{j}} S^{i}_{j} / |\mathbf{Y}_{j}| \label{eq.compute_sigma}
\end{align}
where $S^{i}_{j} = \sum_{x \in \mathcal{D}^{i}_{j}} (x - \mu_j^{i})^{T}(x - \mu_j^{i}) / |\mathcal{D}^{i}_{j}|$ is the sample covariance of class $i$. 
By multiplying the coefficient $s_{j}(x)$ to the original softmax probabilities $p(\mathbf{Y}_{j} | x, j)$, the task output $p(\mathbf{Y}_{j} | x, j) s_{j}(x)$ increases if $x$ is from task $j$ and decreases otherwise. The final prediction is made by (which replaces Eq.~\ref{base_prediction})
\begin{align}
    y = \argmax \bigoplus_{1\leq j \leq k}  p(\mathbf{Y}_{j} | x, j) s_{j}(x), \label{final_prediction}
\end{align}
where $k$ is the last task that we have learned.  
}

\subsection{Experiments}
\label{sec.experiment2}
We now report the experiment results of the proposed method MORE. For \textbf{experimental datasets}, we use the same three image classification benchmark datasets as in Section~\ref{sec.baselines}.
For \textbf{baselines}, the same systems are used as well (see Section~\ref{sec.baselines}) except Mnemonics, HyperNet, CCG, {Co$^2$L}, and {PR-Ent}. Mnemonics requires optimization of training instances and it is not clear how to implement it for images after interpolation for a given input size of a pre-trained model (see below). For HyperNet, it is due to the reason explained in Training Details in Section~\ref{sec.training}. For CCG,  {Co$^2$L}, and {PR-Ent}, CCG has no code and the codes of {Co$^2$L}, and {PR-Ent} do not run in our environment and thus we could not convert their codes to use a pre-trained model. Finally, we are left with 13 baselines. Note that 
HAT-CSI and Sup+CSI are not included as they are much weaker (up to 15\% lower than MORE in accuracy) as CSI's approach of using contrastive learning and data augmentations does not work well with a pre-trained
model.

\textbf{Evaluation Metrics.} We still use the same evaluation measures as we used in Section~\ref{sec.training}. 
\textbf{(1).} \textit{Average classification accuracy} over all classes after learning the last task.
\textbf{(2).} \textit{Average AUC} (Area Under the ROC Curve) for evaluating OOD detection performance of continual learning in the open world. See Section \ref{sec.ood} for more details. 
}

\subsubsection{Pre-trained Network}
\label{sec.pre-train}

We pre-train a vision transformer~\citep{touvron2021training_deit} using a subset of the ImageNet data~\citep{russakovsky2015imagenet} and apply the pre-trained network/model to all baselines and our method. To ensure that there is no overlapping of data between ImageNet and our experimental datasets, we manually removed 389 classes from the original 1000 classes in ImageNet that are similar/identical to the classes in CIFAR-10, CIFAR-100, or Tiny-ImageNet. We pre-train the network with the remaining subset of 611 classes of ImageNet.

Using the pre-trained network, both our system and the baselines improve dramatically compared to their versions without using the pre-trained network. For instance, the two best baselines (DER++ and PASS) in our experiments achieved the average classification accuracy of 66.89 and 68.25 (after the final task) with the pre-trained network over 5 experiments while they achieved only 46.88 and 32.42 without using the pre-train network.

We insert an adapter module at each transformer layer to exploit the pre-trained transformer network in continual learning. During training, the adapter module and the layer norm are trained while the transformer parameters are unchanged to prevent forgetting in the pre-trained network. 

\subsubsection{Training Details}

For all experiments, {we use the same backbone architecture DeiT-S/16 \citep{touvron2021training_deit} with a 2-layer adapter \citep{houlsby2019parameter_adapter} at each transformer layer}, and the same class order for both baselines and our method. The first fully connected layer in the adapter maps from dimension 384 to the bottleneck. 
The second fully connected layer following ReLU activation function maps from bottleneck to 384. The bottleneck dimension is the same for all adapters in a model. For our method, we use SGD with a momentum value 0.9.  The back-updating method in Section \ref{sec.backward} is also a hyper-parameter choice. If we apply it, we train each classifier for 10 epochs by SGD with a learning rate 0.01, batch size 16, and momentum value 0.9. We choose 500 for $s$ in Eq.~\ref{eq.smax} and 0.75 for $\lambda$ in Eq.~\ref{eq.hat_reg} 
as recommended in \citep{Serra2018overcoming}. 
We find a good set of learning rates and number of epochs on the validation set made of 10\% of the training data.
We follow~\citep{chaudhry2019continual_er} and save an equal number of random samples per class in the replay memory. Following the experiment settings in~\citep{Rebuffi2017, Zhu_2021_CVPR_pass}, we fix the size of the memory buffer and reduce the saved samples to accommodate a new set of samples after a new task is learned. We use the class order protocol in \citep{Rebuffi2017,NEURIPS2020_b704ea2c_derpp} by generating random class orders for the experiments. The baselines and our method use the same class ordering. We also report the size of memory required for each experiment in \ref{appendix:size_of_memory}.

For CIFAR-10, we split 10 classes into 5 tasks (2 classes per task).
The bottleneck size in each adapter is 64. Following \citep{NEURIPS2020_b704ea2c_derpp}, we use the memory size 200, and train for 20 epochs with a learning rate 0.005, and apply the back-updating method in Section~\ref{sec.backward}.

For CIFAR-100, we conducted 10 tasks and 20 tasks experiments, where each task has 10 classes and 5 classes, respectively.
We double the bottleneck size of the adapter to learn more classes. We use the memory size 2000 {following \citep{Rebuffi2017}} and train for 40 epochs with learning rates 0.001 and 0.005 for 10 tasks and 20 tasks, respectively, and apply the back-updating method in Section~\ref{sec.backward}.

For Tiny-ImageNet, two experiments are conducted. We split 200 classes into 5 and 10 tasks, where each task has 40 classes and 20 classes per task, respectively. We use the bottleneck size 128, and save 2000 samples in memory. We train with the learning rate 0.005 for 15 and 10 epochs for 5 tasks and 10 tasks, respectively.
There is no need to use the back-updating method as the earlier tasks already have diverse OOD classes.

\subsubsection{Accuracy and Forgetting Rate Results and Analysis}\label{sec:result_comparison}

\begin{table*}[t]
\centering
\caption{Average accuracy after the final task. `-XT' means X number of tasks. Our system MORE and all baselines use the pre-trained network. The baselines are grouped into (a), (b), (c), and (d) for projection, regularization, replay, and parameter-isolation methods, respectively. The last column shows the average accuracy 
of each method over all datasets and experiments. We highlight the best results in each column in bold. }
\resizebox{1.0\columnwidth}{!}{
\begin{tabular}{l l c c c c c c}
\toprule
& \multirow{1}{*}{Method}  & \multicolumn{1}{c}{C10-5T}  &  \multicolumn{1}{c}{C100-10T} &  \multicolumn{1}{c}{C100-20T} &  \multicolumn{1}{c}{T-5T} & \multicolumn{1}{c}{T-10T} & \multicolumn{1}{c}{Avg.}\\
\midrule
\multirow{1}{*}{(a)} & OWM             &  41.69\scalebox{1.0}{$\pm$6.34}  & 21.39\scalebox{1.0}{$\pm$3.18} & 16.98\scalebox{1.0}{$\pm$4.44} & 24.55\scalebox{1.0}{$\pm$2.48} & 17.52\scalebox{1.0}{$\pm$3.45} & 24.43 \\
\hline
\multirow{2}{*}{(b)} & MUC & 73.95\scalebox{1.0}{$\pm$7.24} & 57.87\scalebox{1.0}{$\pm$1.11} & 43.98\scalebox{1.0}{$\pm$2.68} & 62.47\scalebox{1.0}{$\pm$0.34} & 55.79\scalebox{1.0}{$\pm$0.49} & 58.81 \\
& PASS          &  86.21\scalebox{1.0}{$\pm$1.10}  & 68.90\scalebox{1.0}{$\pm$0.94} & 66.77\scalebox{1.0}{$\pm$1.18} & 61.03\scalebox{1.0}{$\pm$0.38} & 58.34\scalebox{1.0}{$\pm$0.42} & 68.25 \\ 
\hline
\multirow{8}{*}{(c)} & LwF & 67.59\scalebox{1.0}{$\pm$4.27} & 66.50\scalebox{1.0}{$\pm$1.93} & 67.54\scalebox{1.0}{$\pm$0.97} & 33.51\scalebox{1.0}{$\pm$4.36} & 36.85\scalebox{1.0}{$\pm$4.46} & 54.40 \\
& iCaRL         & 87.55\scalebox{1.0}{$\pm$0.99} & 68.90\scalebox{1.0}{$\pm$0.47} & 69.15\scalebox{1.0}{$\pm$0.99} & 53.13\scalebox{1.0}{$\pm$1.04} & 51.88\scalebox{1.0}{$\pm$2.36} & 66.12 \\
& A-GEM         & 56.33\scalebox{1.0}{$\pm$7.77} & 25.21\scalebox{1.0}{$\pm$4.00} & 21.99\scalebox{1.0}{$\pm$4.01} & 30.53\scalebox{1.0}{$\pm$3.99} & 21.90\scalebox{1.0}{$\pm$5.52} & 31.20 \\ 
& EEIL & 82.34\scalebox{1.0}{$\pm$3.13} & 68.08\scalebox{1.0}{$\pm$0.51} & 63.79\scalebox{1.0}{$\pm$0.66} & 53.34\scalebox{1.0}{$\pm$0.54} & 50.38\scalebox{1.0}{$\pm$0.97} & 63.59 \\
& GD & \textbf{89.16}\scalebox{1.0}{$\pm$0.53} & 64.36\scalebox{1.0}{$\pm$0.57} & 60.10\scalebox{1.0}{$\pm$0.74} & 53.01\scalebox{1.0}{$\pm$0.97} & 42.48\scalebox{1.0}{$\pm$2.53} & 61.82 \\
& BiC & 67.44\scalebox{1.0}{$\pm$3.93} & 64.47\scalebox{1.0}{$\pm$1.30} & 67.69\scalebox{1.0}{$\pm$1.97} & 38.78\scalebox{1.0}{$\pm$1.26} & 40.98\scalebox{1.0}{$\pm$2.39} & 55.87 \\
& DER++         & 84.63\scalebox{1.0}{$\pm$2.91} & 69.73\scalebox{1.0}{$\pm$0.99} & 70.03\scalebox{1.0}{$\pm$1.46} & 55.84\scalebox{1.0}{$\pm$2.21} & 54.20\scalebox{1.0}{$\pm$3.28} & 66.89 \\
& HAL           & 84.38\scalebox{1.0}{$\pm$2.70} & 67.17\scalebox{1.0}{$\pm$1.50} & 67.37\scalebox{1.0}{$\pm$1.45} & 52.80\scalebox{1.0}{$\pm$2.37} & 55.25\scalebox{1.0}{$\pm$3.60} & 65.39 \\
\hline
\multirow{2}{*}{(d)} & HAT          & 83.30\scalebox{1.0}{$\pm$1.54} & 62.34\scalebox{1.0}{$\pm$0.93} & 56.72\scalebox{1.0}{$\pm$0.44} & 57.91\scalebox{1.0}{$\pm$0.72} & 53.12\scalebox{1.0}{$\pm$0.94} & 62.68 \\ 
& Sup & 80.91\scalebox{1.0}{$\pm$2.99} & 62.49\scalebox{1.0}{$\pm$0.49} & 57.32\scalebox{1.0}{$\pm$1.11} & 58.43\scalebox{1.0}{$\pm$0.67} & 54.52\scalebox{1.0}{$\pm$0.45} & 62.74 \\
\hline
& MORE & \textbf{89.16}\scalebox{1.0}{$\pm$0.96} & \textbf{70.23}\scalebox{1.0}{$\pm$2.27} & \textbf{70.53}\scalebox{1.0}{$\pm$1.09} & \textbf{64.97}\scalebox{1.0}{$\pm$1.28}  & \textbf{63.06}\scalebox{1.0}{$\pm$1.26} & \textbf{71.59} \Tstrut \\  
\bottomrule
\end{tabular}
}
\label{Tab:maintable}
\vspace{-3mm}
\end{table*}

\textbf{Average Accuracy.}
Table~\ref{Tab:maintable} shows that our method MORE consistently outperforms the baselines. All the reported results are the averages of 5 runs. The last column gives the average of each row. We compare with the replay-based methods first. The best replay-based method on average over all the datasets is DER++. Our method MORE achieves an accuracy of 71.59, much better than 66.89 of DER++. This demonstrates that the existing replay-based methods utilizing the replay samples to update all learned classes are inferior to our MORE method using samples for OOD learning. The best baseline is the generative method PASS. Its average accuracy over all the datasets is 68.25, which is still poorer than our method's performance of 71.59. The performance of the multi-head method HAT~\citep{Serra2018overcoming} using task-id prediction is only 62.68, which is lower than many other baselines. Its performance is particularly low in experiments where the number of classes per task is small. For instance, its accuracy on C100-20T is 56.72, much lower than our method of 70.53 trained based on OOD detection. 

\textbf{Accuracy with Smaller Memory Sizes.}
For all the datasets, we run additional experiments with half of the original memory size and show that our method is even stronger with a smaller memory. The new memory sizes are 100, 1000, and 1000 for CIFAR-10, CIFAR-100, and Tiny-ImageNet, respectively. Table~\ref{Tab:smaller_memory} shows that
MORE has experienced almost no performance drop
with the reduced memory size while the memory-based baselines suffer from major performance reduction. The accuracy of the best memory-based baseline (DER++) has decreased the accuracy from 66.89 to 62.16, while MORE only decreases from 71.59 to 71.44, which shows that a small number of OOD samples is enough to enable the system to produce a robust OOD detection model. 
\begin{table*}[t]
\centering
\caption{Average accuracy of the baselines and our method MORE with smaller memory sizes. We reduce the size of the memory buffer by half. The new sizes are 100, 1000, and 1000 for CIFAR10, CIFAR100, and Tiny-ImageNet. Numbers in bold are the best results in each column.}
\resizebox{1.0\columnwidth}{!}{
\begin{tabular}{l l c c c c c c}
\toprule
& \multirow{1}{*}{Method}  & \multicolumn{1}{c}{C10-5T}  &  \multicolumn{1}{c}{C100-10T} &  \multicolumn{1}{c}{C100-20T} &  \multicolumn{1}{c}{T-5T} & \multicolumn{1}{c}{T-10T} & \multicolumn{1}{c}{Avg.}\\
\midrule
\multirow{1}{*}{(a)} & OWM             & 41.69\scalebox{1.0}{$\pm$6.34} & 21.39\scalebox{1.0}{$\pm$3.18} & 16.98\scalebox{1.0}{$\pm$4.44} & 24.55\scalebox{1.0}{$\pm$2.48} & 17.52\scalebox{1.0}{$\pm$3.45} & 24.43 \\
\hline
\multirow{2}{*}{(b)} & MUC & 73.95\scalebox{1.0}{$\pm$7.24} & 57.87\scalebox{1.0}{$\pm$1.11} & 43.98\scalebox{1.0}{$\pm$2.68} & 62.47\scalebox{1.0}{$\pm$0.34} & 55.79\scalebox{1.0}{$\pm$0.49} & 58.81 \\
& PASS          & 86.21\scalebox{1.0}{$\pm$1.10} & 68.90\scalebox{1.0}{$\pm$0.94} & 66.77\scalebox{1.0}{$\pm$1.18} & 61.03\scalebox{1.0}{$\pm$0.38} & 58.34\scalebox{1.0}{$\pm$0.42} & 68.25 \\
\hline
\multirow{8}{*}{(c)} & LwF & 63.01\scalebox{1.0}{$\pm$4.19} & 56.76\scalebox{1.0}{$\pm$3.72} & 63.53\scalebox{1.0}{$\pm$2.86} & 26.79\scalebox{1.0}{$\pm$2.36} & 28.08\scalebox{1.0}{$\pm$4.88} & 47.63 \\
& iCaRL         & 86.08\scalebox{1.0}{$\pm$1.19}  &  66.96\scalebox{1.0}{$\pm$2.08} & 68.16\scalebox{1.0}{$\pm$0.71} & 47.27\scalebox{1.0}{$\pm$3.22} & 49.51\scalebox{1.0}{$\pm$1.87} & 63.60 \\ 
& A-GEM         & 56.64\scalebox{1.0}{$\pm$4.29} & 23.18\scalebox{1.0}{$\pm$2.54} & 20.76\scalebox{1.0}{$\pm$2.88} & 31.44\scalebox{1.0}{$\pm$3.84} & 23.73\scalebox{1.0}{$\pm$6.27} & 31.15 \\
& EEIL & 77.44\scalebox{1.0}{$\pm$3.04} & 62.95\scalebox{1.0}{$\pm$0.68} & 57.86\scalebox{1.0}{$\pm$0.74} & 48.36\scalebox{1.0}{$\pm$1.38} & 44.59\scalebox{1.0}{$\pm$1.72} & 58.24 \\
& GD & 85.96\scalebox{1.0}{$\pm$1.64} & 57.17\scalebox{1.0}{$\pm$1.06} & 50.30\scalebox{1.0}{$\pm$0.58} & 46.09\scalebox{1.0}{$\pm$1.77} & 32.41\scalebox{1.0}{$\pm$2.75} & 54.39 \\
& BiC & 56.28\scalebox{1.0}{$\pm$3.31} & 58.42\scalebox{1.0}{$\pm$2.48} & 62.19\scalebox{1.0}{$\pm$1.20} & 33.29\scalebox{1.0}{$\pm$2.65} & 28.44\scalebox{1.0}{$\pm$2.41} & 47.72 \\
& DER++         & 80.09\scalebox{1.0}{$\pm$3.00} & 64.89\scalebox{1.0}{$\pm$2.48} & 65.84\scalebox{1.0}{$\pm$1.46} & 50.74\scalebox{1.0}{$\pm$2.41} & 49.24\scalebox{1.0}{$\pm$5.01} & 62.16 \\
& HAL           & 79.16\scalebox{1.0}{$\pm$4.56} & 62.65\scalebox{1.0}{$\pm$0.83} & 63.96\scalebox{1.0}{$\pm$1.49} & 48.17\scalebox{1.0}{$\pm$2.94} & 47.11\scalebox{1.0}{$\pm$6.00} & 60.21 \\
\hline
\multirow{2}{*}{(d)} & HAT          & 83.30\scalebox{1.0}{$\pm$1.54}  & 62.34\scalebox{1.0}{$\pm$0.93} & 56.72\scalebox{1.0}{$\pm$0.44} & 57.91\scalebox{1.0}{$\pm$0.72} & 53.12\scalebox{1.0}{$\pm$0.94} & 62.68 \\ 
& Sup & 80.91\scalebox{1.0}{$\pm$2.99} & 62.49\scalebox{1.0}{$\pm$0.49} & 57.32\scalebox{1.0}{$\pm$1.11} & 58.43\scalebox{1.0}{$\pm$0.67} & 54.52\scalebox{1.0}{$\pm$0.45} & 62.74 \\
\hline
& MORE  & \textbf{88.13}\scalebox{1.0}{$\pm$1.16} & \textbf{71.69}\scalebox{1.0}{$\pm$0.11} & \textbf{71.29}\scalebox{1.0}{$\pm$0.55} & \textbf{64.17}\scalebox{1.0}{$\pm$0.77} & \textbf{61.90}\scalebox{1.0}{$\pm$0.90} & \textbf{71.44} \Tstrut \\  
\bottomrule
\end{tabular}
}
\label{Tab:smaller_memory}
\end{table*}

\begin{figure}
    \centering
    \includegraphics[width=2.5in]{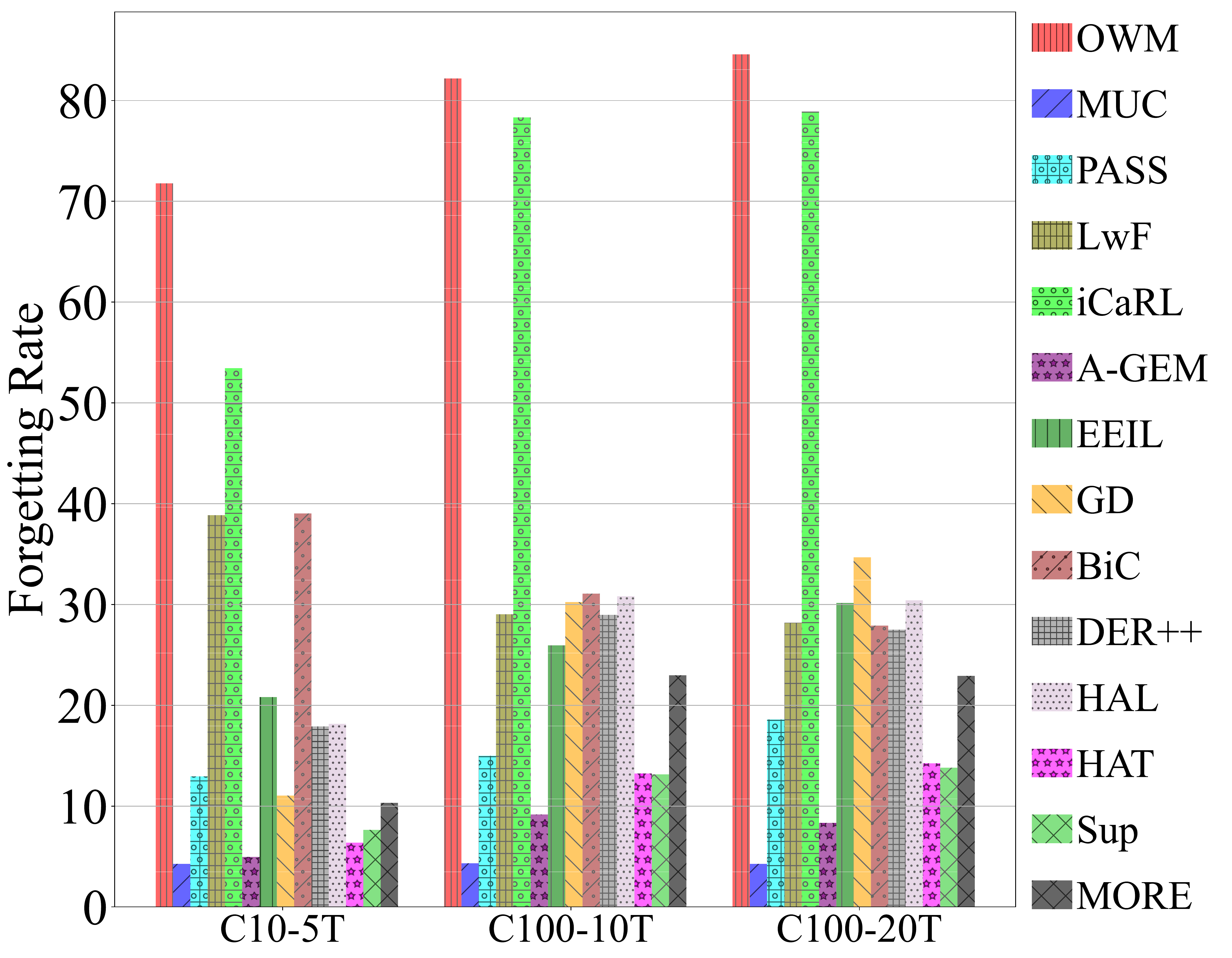}
    \caption{Average forgetting rate (\%). The lower the rate, the better the method is.}
    \label{forgetting}
\end{figure}
\textbf{Average Forgetting Rate.} 
The average forgetting rate is defined as follows~\citep{Liu_2020_CVPR}: $\mathcal{F}^{t} = \sum_{k=1}^{t-1} (\mathcal{A}^{\text{init}}_{k} - \mathcal{A}^{t}_{k}) / (t-1)$, where $\mathcal{A}_{k}^{\text{init}}$ is the classification accuracy on samples of task $k$ right after learning the task $k$. We do not consider the task $t$ as it is the last task. 

We compare the forgetting rate of our method MORE against the baselines using C10-5T, C100-10T, and C100-20T. Figure~\ref{forgetting} shows that the performance drop of our method as more tasks are learned is relatively lower than many baselines. MUC, A-GEM, HAT, and Sup achieve lower drops than our method. {However, they are not able to adapt to new tasks well as the accuracy values of the four methods on C100-10T are 57.87, 25.21, 62.34, and 62.49, respectively, while our method MORE achieves 70.23. The performance gaps remain consistent in the other dataset.}
PASS experiences a smaller drop in performance on C100-10T and C100-20T than our method, but its accuracy of 68.90 and 66.77 are significantly lower than 70.23 and 70.53 of our MORE.

\subsubsection{Out-of-Distribution Detection Results}\label{sec.ood}
As we explained in the introduction section, since our method MORE is based on OOD detection in building each task model, our method can naturally be used to detect test samples that are out-of-distribution for all the classes or tasks learned thus far. We are not aware of any existing continual learning system that has done such an evaluation. We evaluate the performance of the baselines and our system in this out-of-distribution scenario, which is also called the open set setting.

This OOD detection ability is highly desirable for a continual learning system
because in a real-life environment in the open world, the system can be exposed to not only seen classes but also unseen classes. When the test sample is from one of the seen classes, the system should be able to predict its class. If the sample does not belong to any of the training classes seen so far (i.e., the sample is out-of-distribution), the system should detect it.

We formulate the performance of OOD detection of a continual learning system as follows. A continual learning system accepts and classifies a test sample after training task $k$ if the test sample is from one of the classes in tasks $1, \cdots, k$. If it is from one of the classes of the future tasks $k + 1, \cdots, t$, it should be rejected as OOD (where $t$ is the last task in each evaluation). 

\begin{table*}[t]
\centering
\caption{AI-AUC Results. Numbers in bold are the best results in each column}
\resizebox{1.0\columnwidth}{!}{
\begin{tabular}{l l c c c c c c}
\toprule
& \multirow{1}{*}{Method}  & \multicolumn{1}{c}{C10-5T}  &  \multicolumn{1}{c}{C100-10T} &  \multicolumn{1}{c}{C100-20T} &  \multicolumn{1}{c}{T-5T} & \multicolumn{1}{c}{T-10T} & \multicolumn{1}{c}{Avg.}\\
\midrule
\multirow{1}{*}{(a)} & OWM & 70.02\scalebox{1.0}{$\pm$3.59} & 63.17\scalebox{1.0}{$\pm$1.06} & 59.42\scalebox{1.0}{$\pm$1.26} & 67.24\scalebox{1.0}{$\pm$0.92} & 62.17\scalebox{1.0}{$\pm$0.35} & 64.41 \\
\hline
\multirow{2}{*}{(b)} & MUC & 85.47\scalebox{1.0}{$\pm$3.97} & 79.28\scalebox{1.0}{$\pm$1.15} & 74.82\scalebox{1.0}{$\pm$1.91} & \textbf{83.91}\scalebox{1.0}{$\pm$0.54} & \textbf{81.42}\scalebox{1.0}{$\pm$0.47} & 80.98 \\
& PASS & 84.57\scalebox{1.0}{$\pm$1.54} & 77.74\scalebox{1.0}{$\pm$1.40} & 77.42\scalebox{1.0}{$\pm$1.44} & 77.07\scalebox{1.0}{$\pm$2.14} & 74.79\scalebox{1.0}{$\pm$2.36} & 78.32 \\
\hline
\multirow{8}{*}{(c)} & LwF & 72.18\scalebox{1.0}{$\pm$4.15} & 74.95\scalebox{1.0}{$\pm$0.39} & 75.40\scalebox{1.0}{$\pm$0.64} & 66.44\scalebox{1.0}{$\pm$1.14} & 65.52\scalebox{1.0}{$\pm$0.64} & 70.90 \\
& iCaRL & 82.12\scalebox{1.0}{$\pm$5.38} & 77.42\scalebox{1.0}{$\pm$0.45} & 76.91\scalebox{1.0}{$\pm$1.30} & 71.86\scalebox{1.0}{$\pm$1.57} & 74.24\scalebox{1.0}{$\pm$1.66} & 76.06 \\ 
& A-GEM & 74.92\scalebox{1.0}{$\pm$5.62} & 64.19\scalebox{1.0}{$\pm$0.86} & 60.23\scalebox{1.0}{$\pm$0.95} & 67.88\scalebox{1.0}{$\pm$1.28} & 63.08\scalebox{1.0}{$\pm$1.12} & 66.06 \\ 
& EEIL & 87.19\scalebox{1.0}{$\pm$2.31} & 78.89\scalebox{1.0}{$\pm$1.32} & 77.69\scalebox{1.0}{$\pm$1.40} & 74.82\scalebox{1.0}{$\pm$0.79} & 73.45\scalebox{1.0}{$\pm$1.33} & 78.39 \\
& GD & \textbf{89.71}\scalebox{1.0}{$\pm$1.85} & 77.31\scalebox{1.0}{$\pm$1.03} & 75.19\scalebox{1.0}{$\pm$0.87} & 75.36\scalebox{1.0}{$\pm$0.78} & 70.90\scalebox{1.0}{$\pm$1.75} & 77.69 \\
& BiC & 71.29\scalebox{1.0}{$\pm$3.57} & 74.49\scalebox{1.0}{$\pm$0.72} & 75.71\scalebox{1.0}{$\pm$0.60} & 67.45\scalebox{1.0}{$\pm$0.89} & 66.63\scalebox{1.0}{$\pm$0.77} & 71.11 \\
& DER++ & 84.61\scalebox{1.0}{$\pm$2.64} & 78.42\scalebox{1.0}{$\pm$0.64} & 78.37\scalebox{1.0}{$\pm$0.42} & 74.80\scalebox{1.0}{$\pm$1.72} & 74.86\scalebox{1.0}{$\pm$1.93} & 78.09 \\
& HAL & 84.09\scalebox{1.0}{$\pm$3.30} & 77.37\scalebox{1.0}{$\pm$0.55} & 77.66\scalebox{1.0}{$\pm$0.31} & 74.52\scalebox{1.0}{$\pm$1.93} & 75.47\scalebox{1.0}{$\pm$2.35} & 77.82 \\
\hline
\multirow{2}{*}{(d)} & HAT & 87.83\scalebox{1.0}{$\pm$2.44} & 79.57\scalebox{1.0}{$\pm$0.29} & 77.20\scalebox{1.0}{$\pm$0.74} & 79.78\scalebox{1.0}{$\pm$1.59} & 78.25\scalebox{1.0}{$\pm$1.68} & 80.53 \\ 
& Sup & 87.06\scalebox{1.0}{$\pm$3.68} & 80.54\scalebox{1.0}{$\pm$0.12} & 77.81\scalebox{1.0}{$\pm$0.66} & 80.01\scalebox{1.0}{$\pm$0.71} & 78.96\scalebox{1.0}{$\pm$0.64} & 80.87 \\
\hline
& MORE & 88.06\scalebox{1.0}{$\pm$1.84} & \textbf{81.67}\scalebox{1.0}{$\pm$1.27} & \textbf{80.97}\scalebox{1.0}{$\pm$0.80} & 80.72\scalebox{1.0}{$\pm$3.38} & 79.73\scalebox{1.0}{$\pm$2.97} & \textbf{82.23} \Tstrut \\
\bottomrule
\end{tabular}
}
\label{Tab:maintable_auc}
 \vspace{-3mm}
\end{table*}

We use maximum softmax probability (MSP)~\citep{hendrycks2016baseline_msp} as the OOD score of a test sample for the baselines and use maximum output with coefficient in Eq.~\ref{final_prediction} for our method MORE. We employ
Area Under the ROC Curve (AUC) to measure the performance of OOD detection as AUC is the standard metric used in OOD detection papers~\citep{yang2021generalized}. 
We report \textit{average incremental AUC} (AI-AUC) which is the average AUC result at all tasks except the last one as there is no more OOD data after the last task.

Table~\ref{Tab:maintable_auc} shows that our method MORE outperforms all baselines consistently except MUC and GD. For MUC, it performs better than MORE on Tiny-ImageNet, but on average, it is poorer (80.98) than MORE (82.23). For GD, it outperforms MORE on C10-5T, but its overall performance is much lower as it achieves an average of only 77.69 over the 5 experiments.
The best baselines based on accuracy are DER++ and PASS. Their average AI-AUC scores over the experiments shown in the last column are 78.09 for DER++ and 78.32 for PASS, but our method MORE achieves 82.23. 
In~\ref{appendix:different_ood}, we report the OOD detection performances of the models on novel classes drawn from datasets that are completely different from the datasets used in the continual learning process.

\subsubsection{Ablation Study}
We conduct an ablation study to measure the performance gain by each proposed technique, \textbf{back-updating}
in previous models in Section \ref{sec.backward} and the \textbf{distance-based coefficient} in Section \ref{sec.ensemble_scores},  using three experiments. The back-updating by Eq.~\ref{backward} is to improve the earlier task models as they are trained with less diverse OOD data than later models. The modified output with coefficient in Eq.~\ref{eq.ensemble} is to improve the classification accuracy by combining the softmax probability of the task networks and the inverse Mahalanobis distances.

Table~\ref{ablation:tab} compares 
accuracy obtained after applying each method. Both distance-based coefficient and back-updating show large improvements from the original method without any of the two techniques. Although the performance is already competitive with either technique, the performance improves further after applying them together.

\begin{table}
\centering
\caption{Ablation Study}
\resizebox{3.3in}{!}{
\begin{tabular}{l c c c}
\toprule
{} & \multicolumn{1}{c}{C10-5T} & \multicolumn{1}{c}{C100-10T} & \multicolumn{1}{c}{C100-20T}\\
\midrule
\multicolumn{1}{c}{Original} & 91.01\scalebox{1.0}{$\pm$2.48} & 76.93\scalebox{1.0}{$\pm$1.58} & 75.76\scalebox{1.0}{$\pm$2.35}  \\
\multicolumn{1}{c}{Coefficient (C)} & 93.86\scalebox{1.0}{$\pm$1.12} & 80.31\scalebox{1.0}{$\pm$1.02} & 80.77\scalebox{1.0}{$\pm$1.36}  \\
\multicolumn{1}{c}{Back (B)} & 93.36\scalebox{1.0}{$\pm$0.79} & 80.35\scalebox{1.0}{$\pm$1.08} & 80.32\scalebox{1.0}{$\pm$0.82} \\
\midrule
\multicolumn{1}{c}{C + B} & 94.23\scalebox{1.0}{$\pm$0.82} & 81.24\scalebox{1.0}{$\pm$1.24} & 81.59\scalebox{1.0}{$\pm$0.98} \\
\bottomrule
\end{tabular}
}
\caption*{
Performance gains with the proposed techniques. The row Original indicates the method without the coefficient and back-updating and the row Back means the back-updating method.
}
\label{ablation:tab}
\end{table}

\section{Conclusion}
This paper studied open-world continual learning (OWCL), which is necessary for learning in the open world. An open-world learning algorithm first needs to detect novel or out-of-distribution (OOD) items and then learn them continually or incrementally. This incremental learning of new items or classes is referred to as \textit{class incremental learning} (CIL), which is a challenging setting of \textit{continual learning}. Since the traditional CIL does not do OOD detection, we call it the \textit{closed-world CIL}. 
In existing research, novelty/OOD detection and CIL are regarded as two completely different problems. This paper theoretically showed that the two problems can be unified. In particular, we 
first decomposed the closed-world CIL problem into \textit{within-task prediction} (WP) and \textit{task-id prediction} (TP), and then proved that TP is correlated with \textit{closed-world OOD detection}. It then showed that a good performance of the two is both \textit{necessary} and \textit{sufficient} for good CIL performances. We then generalized the traditional closed-world CIL to the open-world CIL (or CIL$^+$). CIL$^+$ does both (open-world) OOD detection and CIL. 
Our theory thus connects and unifies open-world OOD/novelty detection and CIL in continual learning. This combination gives us the paradigm of open-world continual learning or CIL$^+$. 
The theoretical result also provides principled guidance for designing better continual learning algorithms. Based on the result, several new CIL methods have been designed. They outperform strong CIL baselines by a large margin and also perform novelty (or OOD) detection well in the open-world continual learning setting. 

Several interesting future directions are worth pursuing. The central goal of these directions is to achieve learning autonomy. It means that the agent should discover new tasks to learn by itself and also take the initiative to acquire the ground-truth training data for learning. In the current research, the training data for each task is assumed to be provided by human engineers who have collected a large amount of labeled training data for each task. To achieve autonomy, the agent has to interact with human users, other agents, and/or the application environment. This requires an interactive module. To interact with humans, a natural language dialogue system is needed, which must also improve itself continually. Since new tasks are the discovered unknowns, an effective algorithm is also needed to decide whether and how to interact with human users to acquire the ground truth labels and training data when it is uncertain about an unfamiliar object that it sees. Another interesting direction is few-shot continual learning as human users usually cannot provide a large number of training examples. Few-shot continual learning is highly challenging as it has to deal with the difficulties of few-shot learning, catastrophic forgetting, and inter-task class separation. To address these challenges, a powerful foundation model with strong reasoning capabilities is likely to be necessary.  

\section*{Author Contributions}

Gyuhak Kim developed the ideas and algorithms, conducted the experiments, and led the writing of the paper. Changnan Xiao created the mathematical framework and derived the proofs. Tatsuya Konishi and Zixuan Ke contributed to discussions and assisted with editing. Bing Liu supervised the project, contributed to the development of the ideas and algorithms, and helped with writing and revising the paper.

\section*{Acknowledgements}
We would like to express our sincere gratitude to the reviewers for their valuable and constructive feedback, which has significantly improved the rigor and precision of the theory presented in the paper. The work of Gyuhak Kim, Zixuan Ke, and Bing Liu was supported in part by four NSF grants (IIS-2229876, IIS-1910424, IIS-1838770, and CNS-2225427), a research contract from DARPA (HR001120C0023), and a research contract from KDDI. 

\newpage

\appendix

\section{Proof of Theorems and Corollaries}
\subsection{Proof of Theorem~\ref{thm:ce}}
\label{prf:ce}
\begin{proof}
Since 
$$
\begin{aligned}
    H_{CIL}(x) &= H(y, \{\mathbf{P}(x \in \mathbf{X}_{k, j} | D)\}_{k, j}) \\
    &= - \sum_{k, j} y_{k, j} \log \mathbf{P}(x \in \mathbf{X}_{k, j} | D) \\
    &= - \log \mathbf{P}(x \in \mathbf{X}_{k_0, j_0} | D),
\end{aligned}
$$
$$
\begin{aligned}
    H_{WP}(x) &= H(\Tilde{y}, \{\mathbf{P}(x \in \mathbf{X}_{k_0, j} | x \in \mathbf{X}_{k_0}, D)\}_{j}) \\
    &= - \sum_{j} y_{k_0, j} \log \mathbf{P}(x \in \mathbf{X}_{k_0, j} | x \in \mathbf{X}_{k_0}, D) \\
    &= - \log \mathbf{P}(x \in \mathbf{X}_{k_0, j_0} | x \in \mathbf{X}_{k_0}, D),
\end{aligned}
$$
and 
$$
\begin{aligned}
    H_{TP}(x) &= H(\Bar{y}, \{\mathbf{P}(x \in \mathbf{X}_k | D)\}_{k}) \\
    &= - \sum_{k} \Bar{y}_{k} \log \mathbf{P}(x \in \mathbf{X}_{k} | D) \\
    &= - \log \mathbf{P}(x \in \mathbf{X}_{k_0} | D),
\end{aligned}
$$
we have 
$$
\begin{aligned}
    H_{CIL}(x) &= - \log \mathbf{P}(x \in \mathbf{X}_{k_0, j_0} | D) \\ 
    &= - \log \mathbf{P}(x \in \mathbf{X}_{k_0, j_0} | x \in \mathbf{X}_{k_0}, D) - \log \mathbf{P}(x \in \mathbf{X}_{k_0} | D) \\
    &= H_{WP}(x) + H_{TP}(x) \\
    &\leq \epsilon + \delta. 
\end{aligned}
$$
\end{proof}

\subsection{Proof of Corollary~\ref{cor:expectation}.}
\label{prf:expectation}
\begin{proof}
By proof of Theorem~\ref{thm:ce}, we have 
$$
H_{CIL} (x) = H_{WP} (x) + H_{TP} (x).
$$
Taking expectations on both sides, we have i)
$$
\begin{aligned}
    \mathbb{E}_{x\sim U(\mathbf{X})} [H_{CIL} (x)] &= \mathbb{E}_{x \sim U(\mathbf{X})} [H_{WP} (x)] +\mathbb{E}_{x\sim U(\mathbf{X})} [H_{TP} (x)] \\
    &\leq \mathbb{E}_{x \sim U(\mathbf{X})} [H_{WP} (x)] + \delta.
\end{aligned}
$$
and ii)
$$
\begin{aligned}
    \mathbb{E}_{x\sim U(\mathbf{X})} [H_{CIL} (x)] &= \mathbb{E}_{x \sim U(\mathbf{X})} [H_{WP} (x)] + \mathbb{E}_{x\sim U(\mathbf{X})} [H_{TP} (x)] \\
    &\leq \epsilon + \mathbb{E}_{x\sim U(\mathbf{X})} [H_{TP} (x)].
\end{aligned}
$$
\end{proof}

\subsection{Proof of Theorem~\ref{thm:tp_ood}.}
\label{prf:tp_ood}
\begin{proof}
i) Assume $x \in \mathbf{X}_{k_0}$. For $k = k_0$, we have
$$
\begin{aligned}
    H_{OOD, k_0} (x) &= - \log \mathbf{P}'_{k_0} (x \in \mathbf{X}_{k_0} | D) \\
    &= - \log \mathbf{P}(x \in \mathbf{X}_{k_0} | D) \\
    &= H_{TP} (x) \leq \delta.
\end{aligned}
$$
For $k \neq k_0$, we have
$$
\begin{aligned}
    H_{OOD, k} (x) &= - \log \mathbf{P}'_k (x \in \mathbf{X} \backslash \mathbf{X}_k | D) \\
    &= - \log (1 - \mathbf{P}'_k (x \in \mathbf{X}_k | D))\\
    &= - \log (1 - \mathbf{P} (x \in \mathbf{X}_k | D))\\
    &= - \log \mathbf{P}(x \in \cup_{k' \neq k} \mathbf{X}_{k'} | D) \\
    &\leq - \log \mathbf{P}(x \in \mathbf{X}_{k_0} | D) \\
    &= H_{TP} (x) \leq \delta.
\end{aligned}
$$

ii) Assume $x \in \mathbf{X}_{k_0}$. For $k = k_0$, by $H_{OOD, {k_0}} (x) \leq \delta_{k_0}$, we have 
$$- \log \mathbf{P}'_{k_0} (x \in \mathbf{X}_{k_0} | D) \leq \delta_{k_0},$$ 
which means 
$$\mathbf{P}'_{k_0} (x \in \mathbf{X}_{k_0} | D) \geq e^{-\delta_{k_0}}.$$
For $k \neq k_0$, by $H_{OOD, {k}} (x) \leq \delta_{k}$, we have 
$$- \log \mathbf{P}'_{k} (x \in \mathbf{X} \backslash \mathbf{X}_{k} | D) \leq \delta_{k},$$
which means 
$$\mathbf{P}'_{k} (x \in \mathbf{X}_{k} | D) \leq 1 - e^{-\delta_{k}}.$$
Therefore, we have
$$
\begin{aligned}
    \mathbf{P} (x \in \mathbf{X}_{k_0} | D) &= \frac{\mathbf{P}'_{k_0} (x \in \mathbf{X}_{k_0} |D)}{\sum_{k'} \mathbf{P}'_{k'} (x \in \mathbf{X}_{k'} |D)} \\
    &\geq \frac{e^{-\delta_{k_0}}}{1 + \sum_{k\neq k_0} 1 - e^{-\delta_k}} \\
    &= \frac{e^{-\delta_{k_0}}}{e^{-\delta_{k_0}} + \sum_{k} 1 - e^{-\delta_k}} \\
    &= \frac{1}{1 + e^{\delta_{k_0}}\sum_{k} 1 - e^{-\delta_k}}.
\end{aligned}
$$
Hence, 
$$
\begin{aligned}
    H_{TP} (x) &= -\log \mathbf{P} (x \in \mathbf{X}_{k_0} | D) \\
    &\leq - \log \frac{1}{1 + e^{\delta_{k_0}}\sum_{k} 1 - e^{-\delta_k}} \\
    &= \log (1 + e^{\delta_{k_0}}\sum_{k} 1 - e^{-\delta_k}) \\
    &\leq e^{\delta_{k_0}} (\sum_k 1 - e^{-\delta_k}) \\
    &= (\sum_k \mathbf{1}_{x \in \mathbf{X}_k} e^{\delta_{k}}) (\sum_k 1 - e^{-\delta_k}).
\end{aligned}
$$

\end{proof}

\subsection{Proof of Corollary~\ref{thm:tp_ood_to_open}.}
\label{prf:tp_ood_to_open}
\begin{proof}
i) Assume $x \in \mathbf{X}_{k_0}$. For $k = k_0$, we have
$$
\begin{aligned}
    H_{OOD^+, k_0} (x) &= - \log \mathbf{P}'_{k_0} (x \in \mathbf{X}_{k_0} | D) \\
    &= - \log \mathbf{P}(x \in \mathbf{X}_{k_0} | D) \\
    &= H_{TP^+} (x) \leq \delta.
\end{aligned}
$$
For $k \neq k_0$, we have
$$
\begin{aligned}
    H_{OOD^+, k} (x) &= - \log \mathbf{P}'_k (x \in (\mathbf{X} \cup \mathbf{X}^+) \backslash \mathbf{X}_k | D) \\
    &= - \log (1 - \mathbf{P}'_k (x \in \mathbf{X}_k | D))\\
    &= - \log (1 - \mathbf{P} (x \in \mathbf{X}_k | D))\\
    &= - \log \mathbf{P}(x \in (\cup_{k' \neq k} \mathbf{X}_{k'}) \cup X^+ | D) \\
    &\leq - \log \mathbf{P}(x \in \mathbf{X}_{k_0} | D) \\
    &= H_{TP^+} (x) \leq \delta.
\end{aligned}
$$

ii.a) Assume $x \in \mathbf{X}_{k_0}$. For $k = k_0$, by $H_{OOD^+, {k_0}} (x) \leq \delta_{k_0}$, we have 
$$- \log \mathbf{P}'_{k_0} (x \in \mathbf{X}_{k_0} | D) \leq \delta_{k_0},$$ 
which means 
$$\mathbf{P}'_{k_0} (x \in \mathbf{X}_{k_0} | D) \geq e^{-\delta_{k_0}}.$$
For $k \neq k_0$, by $H_{OOD^+, {k}} (x) \leq \delta_{k}$, we have 
$$- \log \mathbf{P}'_{k} (x \in (\mathbf{X} \cup \mathbf{X}^+) \backslash \mathbf{X}_{k} | D) \leq \delta_{k},$$
which means 
$$\mathbf{P}'_{k} (x \in \mathbf{X}_{k} | D) \leq 1 - e^{-\delta_{k}}.$$
Therefore, we have
$$
\begin{aligned}
    \mathbf{P} (x \in \mathbf{X}_{k_0} | D) &= \frac{\mathbf{P}'_{k_0} (x \in \mathbf{X}_{k_0} |D)}{\sum_{k'} \mathbf{P}'_{k'} (x \in \mathbf{X}_{k'} |D) + \prod_{k'} (1 - \mathbf{P}'_{k'} (x \in \mathbf{X}_{k'} |D))} \\
    &\geq \frac{e^{-\delta_{k_0}}}{1 + \sum_{k\neq k_0} 1 - e^{-\delta_k} + (1 - e^{-\delta_{k_0}}) \prod_{k' \neq k} 1} \\
    &= \frac{e^{-\delta_{k_0}}}{e^{-\delta_{k_0}} + \sum_{k} 1 - e^{-\delta_k} + 1 - e^{-\delta_{k_0}}} \\
    &= \frac{1}{1 + e^{\delta_{k_0}}(1 - e^{-\delta_{k_0}} + \sum_{k} 1 - e^{-\delta_k})}.
\end{aligned}
$$
Hence, 
$$
\begin{aligned}
    H_{TP^+} (x) &= -\log \mathbf{P} (x \in \mathbf{X}_{k_0} | D) \\
    &\leq - \log \frac{1}{1 + e^{\delta_{k_0}}(1 - e^{-\delta_{k_0}} + \sum_{k} 1 - e^{-\delta_k})} \\
    &= \log (1 + e^{\delta_{k_0}}(1 - e^{-\delta_{k_0}} + \sum_{k} 1 - e^{-\delta_k})) \\
    &\leq e^{\delta_{k_0}}(1 - e^{-\delta_{k_0}} + \sum_{k} 1 - e^{-\delta_k}) \\
    &= (\sum_k \mathbf{1}_{x \in \mathbf{X}_k} e^{\delta_{k}}) (\sum_k (1 + \mathbf{1}_{x \in \mathbf{X}_k})(1 - e^{-\delta_k})).
\end{aligned}
$$

ii.b) Assume $x \in \mathbf{X}^+$.
For $k = 1, \dots, T$, by $H_{OOD^+, {k}} (x) \leq \delta_{k}$, we have 
$$- \log \mathbf{P}'_{k} (x \in (\mathbf{X} \cup \mathbf{X}^+) \backslash \mathbf{X}_{k} | D) \leq \delta_{k},$$
which means 
$$\mathbf{P}'_{k} (x \in \mathbf{X}_{k} | D) \leq 1 - e^{-\delta_{k}}.$$
By definition, we have
$$
\begin{aligned}
    \mathbf{P} (x \in \mathbf{X}^+ | D) 
    &= \frac{\prod_{k'} (1 - \mathbf{P}'_{k'} (x \in \mathbf{X}_{k'} |D))}{\sum_{k'} \mathbf{P}'_{k'} (x \in \mathbf{X}_{k'} |D) + \prod_{k'} (1 - \mathbf{P}'_{k'} (x \in \mathbf{X}_{k'} |D))}. \\
\end{aligned}
$$
Hence, 
$$
\begin{aligned}
    H_{TP^+} (x) &= -\log \mathbf{P} (x \in \mathbf{X}^+ | D) \\
    &= \log (1 + \frac{\sum_{k'} \mathbf{P}'_{k'} (x \in \mathbf{X}_{k'} |D)}{\prod_{k'} (1 - \mathbf{P}'_{k'} (x \in \mathbf{X}_{k'} |D))}) \\
    &\leq \frac{\sum_{k'} \mathbf{P}'_{k'} (x \in \mathbf{X}_{k'} |D)}{\prod_{k'} (1 - \mathbf{P}'_{k'} (x \in \mathbf{X}_{k'} |D))} \\
    &\leq \frac{\sum_{k'} 1 - e^{-\delta_{k'}}}{\prod_{k'} e^{-\delta_{k'}}} \\
    &= \prod_{k} e^{\delta_{k}} \sum_{k} 1 - e^{-\delta_{k}}.
\end{aligned}
$$

\end{proof}

\subsection{Proof of Theorem~\ref{thm:cil_with_op_and_ood}.}
\label{prf:cil_with_op_and_ood}
\begin{proof}
Using Theorem~\ref{thm:ce} and \ref{thm:tp_ood},
$$
\begin{aligned}
    H_{CIL}(x) &= - \log \mathbf{P}(x \in \mathbf{X}_{k_0, j_0} | D) \\ 
    &= - \log \mathbf{P}(x \in \mathbf{X}_{k_0, j_0} | x \in \mathbf{X}_{k_0}, D) - \log \mathbf{P}(x \in \mathbf{X}_{k_0} | D) \\
    &= H_{WP}(x) + H_{TP}(x) \\
    &\leq \epsilon + H_{TP}(x) \\
    &\leq \epsilon + (\sum_k \mathbf{1}_{x \in \mathbf{X}_k}  e^{\delta_{k}}) (\sum_k 1 - e^{-\delta_k})
\end{aligned}
$$
\end{proof}

\subsection{Proof of Theorem~\ref{thm:necessary_condition}.}
\label{prf:necessary_condition}
\begin{proof}
i) Assume $x \in \mathbf{X}_{k_0, j_0} \subset \mathbf{X}_{k_0}$. Define $\mathbf{P} (x \in \mathbf{X}_{k, j} | x \in \mathbf{X}_k, D) = \mathbf{P} (x \in \mathbf{X}_{k, j} | D)$. According to proof of Theorem~\ref{thm:ce}, 
$$
\begin{aligned}
H_{WP} (x) &= -\log \mathbf{P} (x \in \mathbf{X}_{k_0, j_0} | x \in \mathbf{X}_{k_0}, D), \\
H_{CIL} (x) &= -\log \mathbf{P} (x \in \mathbf{X}_{k_0, j_0} | D).
\end{aligned}
$$
Hence, we have
$$
\begin{aligned}
H_{WP} (x) &= -\log \mathbf{P} (x \in \mathbf{X}_{k_0, j_0} | x \in \mathbf{X}_{k_0}, D) \\
&= -\log \mathbf{P} (x \in \mathbf{X}_{k_0, j_0} | D) \\ 
&= H_{CIL} (x) \leq \eta.
\end{aligned}
$$

ii) Assume $x \in \mathbf{X}_{k_0, j_0} \subset \mathbf{X}_{k_0}$. Define $\mathbf{P} (x \in \mathbf{X}_{k} | D) = \sum_j \mathbf{P} (x \in \mathbf{X}_{k, j} | D)$. According to proof of Theorem~\ref{thm:ce}, 
$$
\begin{aligned}
H_{TP} (x) &= -\log \mathbf{P} (x \in \mathbf{X}_{k_0} | D), \\
H_{CIL} (x) &= -\log \mathbf{P} (x \in \mathbf{X}_{k_0, j_0} | D).
\end{aligned}
$$
Hence, we have 
$$
\begin{aligned}
H_{TP} (x) &= -\log \mathbf{P} (x \in \mathbf{X}_{k_0} | D) \\
&= -\log \sum_j \mathbf{P} (x \in \mathbf{X}_{k_0, j} | D) \\
&\leq -\log \mathbf{P} (x \in \mathbf{X}_{k_0, j_0} | D) \\
&= H_{CIL} (x) \leq \eta.
\end{aligned}
$$

iii) Assume $x \in \mathbf{X}_{k_0, j_0} \subset \mathbf{X}_{k_0}$. Define $\mathbf{P}'_i (x \in \mathbf{X}_{k} | D) = \mathbf{P} (x \in \mathbf{X}_{k} | D) = \sum_j \mathbf{P} (x \in \mathbf{X}_{k, j} | D)$. According to proof of Theorem~\ref{thm:necessary_condition} ii), we have
$$
H_{TP} (x) \leq \eta.
$$
According to proof of Theorem~\ref{thm:tp_ood} i), we have 
$$
H_{OOD, i} (x) \leq H_{TP} (x).
$$
Therefore, 
$$
H_{OOD, i} (x) \leq H_{TP} (x) \leq \eta.
$$

\end{proof}

\section{Additional Results and Explanation Regarding Table 1 in the Main Paper} \label{apx:additional_odin}
In Section~\ref{sec.betterOOD}, we showed that a better OOD detection improves CIL performance. For the post-processing method ODIN, we only reported the results on C100-10T. Table~\ref{Tab:odin_additional} shows the results on the other datasets.
\begin{table}
    \centering
    \caption{Performance comparison between the original output and output post-processed with OOD detection technique ODIN. Note that ODIN does not apply to iCaRL and Mnemonics as they are not based on softmax but some distance functions. The results for C100-10T are reported in the main paper.}
    \resizebox{5.3in}{!}{
    \begin{tabular}{ll cc cc cc cc cc}
    \toprule
    {} & {} & \multicolumn{2}{c}{C10-5T} & \multicolumn{2}{c}{C100-20T} & \multicolumn{2}{c}{T-5T} & \multicolumn{2}{c}{T-10T} \\
    Method & CIL & AUC & CIL & AUC & CIL & AUC & CIL & AUC & CIL \\
    \midrule
    \multirow{2}{*}{OWM} & Original & 81.33 & 51.79 & 71.90 & 24.15 & 58.49 & 10.00 & 59.48 & 8.57 \\
    {} & ODIN & 71.72 & 40.65 & 68.52 & 23.05 & 58.46 & 10.77 & 59.38 & 9.52 \\
    \hline
    \multirow{2}{*}{MUC} & Original & 79.49 & 52.85 & 66.20 & 14.19 & 68.42 & 33.57 & 62.63 & 17.39 \\
    {} & ODIN & 79.54 & 53.22 & 65.72 & 14.11 & 68.32 & 33.45 & 62.17 & 17.27 \\
    \hline
    \multirow{2}{*}{PASS} & Original & 66.51 & 47.34 & 70.26 & 24.99 & 65.18 & 28.40 & 63.27 & 19.07 \\
    {} & ODIN & 63.08 & 35.20 & 69.81 & 21.83 & 65.93 & 29.03 & 62.73 & 17.78 \\
    \hline
    \multirow{2}{*}{LwF} & Original & 89.39 & 54.67 & 89.84 & 44.33 & 78.20 & 32.17 & 79.43 & 24.28 \\
    {} & ODIN & 88.94 & 63.04 & 88.68 & 47.56 & 76.83 & 36.20 & 77.02 & 28.29 \\
    \hline
    \multirow{2}{*}{A-GEM} & Original & 85.93 & 20.03 & 74.48 & 4.14 & 72.33 & 13.52 & 76.42 & 7.66 \\
    {} & ODIN & 86.43 & 34.03 & 75.12 & 6.99 & 72.46 & 14.69 & 76.75 & 8.50 \\
    \hline
    \multirow{2}{*}{EEIL} & Original & 89.72 & 57.09 & 85.96 & 33.46 & 64.82 & 14.67 & 64.87 & 9.79 \\
    {} & ODIN & 89.20 & 59.47 & 85.46 & 35.16 & 57.01 & 11.92 & 55.42 & 6.88 \\
    \hline
    \multirow{2}{*}{GD} & Original & 91.23 & 58.69 & 86.76 & 38.83 & 68.63 & 16.36 & 69.61 & 11.73 \\
    {} & ODIN & 90.39 & 60.53 & 86.64 & 42.33 & 60.75 & 13.43 & 63.92 & 11.83 \\
    \hline
    \multirow{2}{*}{BiC} & Original & 90.89 & 61.41 & 89.46 & 48.92 & 80.17 & 41.75 & 80.37 & 33.77 \\
    {} & ODIN & 91.86 & 64.29 & 87.89 & 47.40 & 74.54 & 37.40 & 76.27 & 29.06 \\
    \hline
    \multirow{2}{*}{DER++} & Original & 90.16 & 66.04 & 85.44 & 46.59 & 71.80 & 35.80 & 72.41 & 30.49 \\
    {} & ODIN & 87.08 & 63.07 & 87.72 & 49.26 & 73.92 & 37.87 & 72.91 & 32.52 \\
    \hline
    \multirow{2}{*}{HAL} & Original & 86.16 & 32.82 & 65.59 & 13.51 & 53.00 & 3.42 & 57.87 & 3.36 \\
    {} & ODIN & 76.27 & 44.75 & 64.46 & 17.40 & 53.26 & 4.80 & 58.13 & 4.74 \\
    \hline
    \multirow{2}{*}{HAT} & Original & 82.47 & 62.67 & 75.35 & 25.64 & 72.28 & 38.46 & 71.82 & 29.78 \\
    {} & ODIN & 82.45 & 62.60 & 75.36 & 25.84 & 72.31 & 38.61 & 71.83 & 30.01 \\
    \hline
    \multirow{2}{*}{HyperNet} & Original & 78.54 & 53.40 & 72.04 & 18.67 & 54.58 & 7.91 & 55.37 & 5.32 \\
    {} & ODIN & 79.39 & 56.72 & 73.89 & 23.8 & 54.60 & 8.64 & 55.53 & 6.91 \\
    \hline
    \multirow{2}{*}{Sup} & Original & 79.16 & 62.37 & 81.14 & 34.70 & 74.13 & 41.82 & 74.59 & 36.46 \\
    {} & ODIN & 82.38 & 62.63 & 81.48 & 36.35 & 73.96 & 41.10 & 74.61 & 36.46 \\
    \bottomrule
    \end{tabular}
    }
    \label{Tab:odin_additional}
\end{table}

A continual learning method with a better AUC shows a better CIL performance than other methods with lower AUC. 
For instance, original HAT achieves AUC of 82.47 while HyperNet achieves 78.54 on C10-5T. The CIL for HAT is 62.67 while it is 53.40 for HyperNet. However, there are some exceptions that this comparison does not hold. An example is LwF. Its AUC and CIL are 89.39 and 54.67 on C10-5T. Although its AUC is better than HAT, the CIL is lower. This is due to the fact that CIL improves with WP and TP according to Theorem~\ref{thm:ce}. The contraposition of Theorem~\ref{thm:necessary_condition} also says 
if the cross-entropy of TIL is large, that of CIL is also large. Indeed, the average within-task prediction (WP) accuracy for LwF on C10-5T is 95.2 while the same for HAT is 96.7. Improving WP is also important in achieving good CIL performances.

For PASS, we had to tune $\tau_k$ using a validation set. This is because the softmax in Eq.~\ref{eq:odin_softmax} 
improves AUC by making the IND (in-distribution) and OOD scores more separable within a task, but deteriorates the final scores across tasks. To be specific, the test instances are predicted as one of the classes in the first task after softmax because the relative values between classes in task 1 are larger than the other tasks in PASS. Therefore, larger $\tau_1$ and smaller $\tau_k$, for $k > 1$, are chosen to compensate for the relative values.

\section{Definitions of TP} \label{apx:diff_tp}
As noted in the main paper, the class prediction in Eq.~\ref{eq:cil_in_til_and_tp} varies by the definition of WP and TP. The precise definition of WP and TP depends on implementation. Due to this subjectivity, we follow the prediction method in the existing approaches in continual learning, which is the $\argmax$ over the output. In this section, we show that the $\argmax$ over output is a special case of Eq.~\ref{eq:cil_in_til_and_tp}. We also provide CIL results using different definitions of TP.

We first establish another theorem. This is an extension of Theorem~\ref{thm:tp_ood} and connects the standard prediction method to our analysis.
\begin{theorem}[Extension of Theorem~\ref{thm:tp_ood}]
\label{thm:tp_ood_extension}

i) If $H_{TP} (x) \leq \delta$, let $\mathbf{P}'_k (x \in \mathbf{X}_k | D) = \mathbf{P} (x \in \mathbf{X}_k | D)^{1 / \tau_k}$, $\forall \tau_k > 0$, then $H_{OOD, k} (x) \leq \max (\delta /\tau_k, - \log (1 - (1 - e^{-\delta})^{1 / \tau_k}), \forall\, k = 1, \dots, T$. 

ii) If $H_{OOD, k} (x) \leq \delta_k, k=1,\dots,T$, let $\mathbf{P} (x \in \mathbf{X}_k | D) = \frac{\mathbf{P}_k' (x \in \mathbf{X}_k |D)^{1 / \tau_k}}{\sum_j \mathbf{P}_j' (x \in \mathbf{X}_j |D)^{1 / \tau_j}}$, $\forall \tau_k > 0$, then $H_{TP} (x) \leq \sum_k \frac{\mathbf{1}_{x \in \mathbf{X}_k} \delta_{k}}{\tau_{k}} + \frac{\sum_{k} (1 - e^{-\delta_k})^{1 / \tau_k}}{\sum_k \mathbf{1}_{x \in \mathbf{X}_k} (1 - (1 - e^{-\delta_{k}})^{1 / \tau_{k}})}$, where $\mathbf{1}_{x \in \mathbf{X}_k}$ is an indicator function.
\end{theorem}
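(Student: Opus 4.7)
The plan is to follow the structure of the proof of Theorem~\ref{thm:tp_ood} but thread the temperature parameters $\tau_k$ through each step. Both the parametrization $\mathbf{P}'_k=\mathbf{P}^{1/\tau_k}$ in part~(i) and the tempered softmax in part~(ii) are monotone in the underlying probabilities, so every monotonicity argument from the $\tau_k\equiv 1$ case survives with a $1/\tau_k$ exponent attached; the new expressions $(1-e^{-\delta})^{1/\tau_k}$ and $1-(1-e^{-\delta_k})^{1/\tau_k}$ appearing in the statement are exactly the tempered analogues of $(1-e^{-\delta})$ and $e^{-\delta_k}$ in the original proof. A useful sanity check throughout will be that collapsing $\tau_k\equiv 1$ recovers Theorem~\ref{thm:tp_ood} in both directions.

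For part~(i), fix $x\in\mathbf{X}_{k_0}$ and split into the cases $k=k_0$ and $k\neq k_0$. When $k=k_0$, unfold
\[
H_{OOD,k_0}(x)=-\log\mathbf{P}(x\in\mathbf{X}_{k_0}\mid D)^{1/\tau_{k_0}}=\frac{H_{TP}(x)}{\tau_{k_0}}\leq\frac{\delta}{\tau_{k_0}},
\]
which supplies the first argument of the $\max$. When $k\neq k_0$, start from $\mathbf{P}(x\in\mathbf{X}_k\mid D)\leq 1-\mathbf{P}(x\in\mathbf{X}_{k_0}\mid D)\leq 1-e^{-\delta}$, raise both sides to the $1/\tau_k$ power using that $t\mapsto t^{1/\tau_k}$ is monotone on $[0,1]$, and take $-\log$ of the complement to pick up the second argument of the $\max$. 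Taking a uniform $\max$ over the two cases delivers the claimed bound for every $k$.

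For part~(ii), again fix $x\in\mathbf{X}_{k_0}$ and abbreviate $a_k=\mathbf{P}'_k(x\in\mathbf{X}_k\mid D)^{1/\tau_k}$. From $H_{OOD,k_0}(x)\leq\delta_{k_0}$ I get $a_{k_0}\geq e^{-\delta_{k_0}/\tau_{k_0}}$, and from $H_{OOD,k}(x)\leq\delta_k$ for $k\neq k_0$ I get $a_k\leq(1-e^{-\delta_k})^{1/\tau_k}$. Now decompose
\[
H_{TP}(x)=-\log a_{k_0}+\log\sum_{k}a_k=\frac{H_{OOD,k_0}(x)}{\tau_{k_0}}+\log\!\Big(1+\tfrac{\sum_{k\neq k_0}a_k}{a_{k_0}}\Big),
\]
bound the first summand by $\delta_{k_0}/\tau_{k_0}$ (this is exactly the $\sum_k\mathbf{1}_{x\in\mathbf{X}_k}\delta_k/\tau_k$ piece of the target), and apply $\log(1+y)\leq y$ to the second summand. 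Substituting the OOD upper bounds for the $a_k$ with $k\neq k_0$ in the numerator, inflating that sum to run over all $k$ to preserve symmetry, and inserting the corresponding intermediate lower bound for $a_{k_0}$ in the denominator produces the ratio $\sum_k(1-e^{-\delta_k})^{1/\tau_k}\,/\,\sum_k\mathbf{1}_{x\in\mathbf{X}_k}(1-(1-e^{-\delta_k})^{1/\tau_k})$ in the target.

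The main obstacle is the second summand in part~(ii). When $\tau_k=1$, the original proof collapses to a clean $e^{\delta_{k_0}}\sum_k(1-e^{-\delta_k})$ factor because $a_{k_0}\geq e^{-\delta_{k_0}}$ meshes directly with the numerator, but for general $\tau_k$ one must choose between the direct bound $a_{k_0}\geq e^{-\delta_{k_0}/\tau_{k_0}}$ (simple but yielding a different functional form) and the looser candidate $a_{k_0}\geq 1-(1-e^{-\delta_{k_0}})^{1/\tau_{k_0}}$ that matches the denominator quoted in the statement. Picking and justifying the right intermediate inequality, and then verifying that the resulting expression still dominates $\log(1+\sum_{k\neq k_0}a_k/a_{k_0})$ uniformly over the feasible region of the $\mathbf{P}'_k$, will be where the bulk of the care is required; once that calibration is made, the remaining steps are straightforward algebraic substitutions.
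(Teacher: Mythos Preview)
Your plan for part~(i) matches the paper's proof. For part~(ii) there are two problems.

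First, the displayed decomposition contains a false equality: since $-\log a_{k_0}+\log\sum_k a_k=\log\bigl(1+\sum_{k\neq k_0}a_k/a_{k_0}\bigr)$ already, writing it as $\tfrac{H_{OOD,k_0}(x)}{\tau_{k_0}}+\log(1+\cdots)$ double-counts $-\log a_{k_0}$. This does not destroy the eventual inequality (the extra nonnegative term only loosens the bound), but it is not the equality you claim, and it obscures where the $\delta_{k_0}/\tau_{k_0}$ term really comes from.

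Second, and more seriously, the ``main obstacle'' you flag is genuine and your proposed resolution does not work. You frame the issue as choosing between two lower bounds on $a_{k_0}$, namely $e^{-\delta_{k_0}/\tau_{k_0}}$ and the ``looser candidate'' $1-(1-e^{-\delta_{k_0}})^{1/\tau_{k_0}}$, so that the denominator of $\sum_{k\neq k_0}a_k/a_{k_0}$ matches the target. But the second candidate is \emph{not} a consequence of the hypothesis $H_{OOD,k_0}(x)\leq\delta_{k_0}$: with $u=e^{-\delta_{k_0}}$ and $p=1/\tau_{k_0}$ it would require $u^p+(1-u)^p\geq 1$, which fails for $p>1$ (i.e.\ $\tau_{k_0}<1$). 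So neither lower bound on $a_{k_0}$ delivers the stated denominator uniformly in $\tau_{k_0}>0$, and no calibration between them will.

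The paper sidesteps this entirely: in the sum $\sum_k a_k$ it bounds $a_{k_0}$ \emph{from above} by $1$, obtaining $\sum_k a_k\leq 1+\sum_{k\neq k_0}(1-e^{-\delta_k})^{1/\tau_k}$, and then adds and subtracts $(1-e^{-\delta_{k_0}})^{1/\tau_{k_0}}$ to rewrite this as $\bigl(1-(1-e^{-\delta_{k_0}})^{1/\tau_{k_0}}\bigr)+\sum_{k}(1-e^{-\delta_k})^{1/\tau_k}$. Factoring out $1-(1-e^{-\delta_{k_0}})^{1/\tau_{k_0}}$ and taking $-\log$ of the full fraction yields three terms: $\delta_{k_0}/\tau_{k_0}$ from the numerator bound $a_{k_0}\geq e^{-\delta_{k_0}/\tau_{k_0}}$, a non-positive term $\log\bigl[1-(1-e^{-\delta_{k_0}})^{1/\tau_{k_0}}\bigr]$ that is simply dropped, and $\log(1+y)\leq y$ applied to the remaining factor. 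The target denominator therefore arises from an add-and-subtract algebraic rewriting, not from any lower bound on $a_{k_0}$.
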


In Theorem \ref{thm:tp_ood_extension} (proof appears later), we can observe that $\delta / \tau_k$ decreases with the increase of $\tau_k$, while $- \log (1 - (1 - e^{-\delta})^{1 / \tau_k})$ increases. Hence, when TP is given, let $\delta = H_{TP} (x)$, we can find the optimal $\tau_i$ to define OOD by solving $\delta /\tau_k = - \log (1 - (1 - e^{-\delta})^{1 / \tau_k})$. 
Similarly, given OOD, let $\delta_k = H_{OOD, k} (x)$, we can find the optimal $\tau_1, \dots, \tau_T$ to define TP by finding the global minima of $\sum_k \frac{\mathbf{1}_{x \in \mathbf{X}_k} \delta_{k}}{\tau_{k}} + \frac{\sum_{k} (1 - e^{-\delta_k})^{1 / \tau_k}}{\sum_k \mathbf{1}_{x \in \mathbf{X}_k} (1 - (1 - e^{-\delta_{k}})^{1 / \tau_{k}})}$. The optimal $\tau_k$ can be found using a memory buffer to save a small number of previous data like that in a replay-based continual learning method.

In Theorem~\ref{thm:tp_ood_extension} (ii), let $\mathbf{P}'_k (x \in \mathbf{X}_k |D) = \sigma ( \max f(x)_k)$, where $\sigma$ is the sigmoid and $f(x)_k$ is the output of task $k$ and choose $\tau_k \approx 0$  for each $k$. Then $\mathbf{P}(x \in \mathbf{X}_k | D)$ becomes approximately 1 for the task $k$ where the maximum logit value appears and 0 for the rest tasks. Therefore, Eq.~\ref{eq:cil_in_til_and_tp} in the paper
$$
\begin{aligned}
    \mathbf{P}(x \in \mathbf{X}_{k, j} | D) = \mathbf{P}(x \in \mathbf{X}_{k, j} | x \in \mathbf{X}_k, D) \mathbf{P}(x \in \mathbf{X}_k | D)
\end{aligned}
$$
is zero for all classes in tasks $k' \neq k$. Since only the probabilities of classes in task $k$ are non-zero, taking $\argmax$ over all class probabilities gives the same class as $\argmax$ over output logits.

We have also tried another definition of WP and TP. The considered WP is
\begin{align}
    \mathbf{P}(x \in \mathbf{X}_{k, j} | x \in \mathbf{X}_k, D) = \frac{e^{f(x)_{kj} / \nu_k }}{\sum_j e^{f(x)_{kj} / \nu_k }}, \label{eq:wip_max_softmax}
\end{align}
where $\nu_k$ is a temperature scaling parameter for task $k$, and the TP is
\begin{align}
    \mathbf{P}(x \in \mathbf{X}_k |D) = \frac{\mathbf{P}_k'(x \in \mathbf{X}_k | D) }{\sum_k \mathbf{P}_k' (x \in \mathbf{X}_k | D)}, \label{eq:tp_max_softmax}
\end{align}
where $\mathbf{P}_k'(x \in \mathbf{X}_k | D) =  \max_j e^{ f(x)_{kj} / \tau_k } / \sum_j e^{f(x)_{kj} / \tau_k }$
and $\tau_k$ is a temperature scaling parameter. 
This is the maximum softmax of task $k$.
We choose $\nu_k=0.1$ and $\tau_k=5$ for all $k$. A good $\tau$ and $\nu$ can be found using grid search on a validation set. However, one can also find the optimal values by optimization {using some past data saved for the memory buffer.} The CIL results for the new prediction method is in Table~\ref{Tab:maintable_diff}.
\begin{table}[t]
\centering
\caption{Average Accuracy with a Different Prediction Method}
\resizebox{0.95\columnwidth}{!}{
\begin{tabular}{l c c c c c}
\toprule
\multirow{1}{*}{Method}  & \multicolumn{1}{c}{C10-5T} & \multicolumn{1}{c}{C100-10T} & \multicolumn{1}{c}{C100-20T} & \multicolumn{1}{c}{T-5T} & \multicolumn{1}{c}{T-10T} \\
\midrule
\textit{OWM} & 40.6\scalebox{1.0}{$\pm$0.47} & 28.6\scalebox{1.0}{$\pm$0.82} & 22.9\scalebox{1.0}{$\pm$0.32} & 10.4\scalebox{1.0}{$\pm$0.54} & 9.2\scalebox{1.0}{$\pm$0.35} \Tstrut \\
\textit{MUC} & 53.2\scalebox{1.0}{$\pm$1.32} & 30.6\scalebox{1.0}{$\pm$1.21} & 14.0\scalebox{1.0}{$\pm$0.12} & 33.1\scalebox{1.0}{$\pm$0.18} & 17.2\scalebox{1.0}{$\pm$0.13} \\
\textit{PASS}$^{\dagger}$ & 33.6\scalebox{1.0}{$\pm$0.71} & 18.5\scalebox{1.0}{$\pm$1.85} & 20.8\scalebox{1.0}{$\pm$0.85} & 21.4\scalebox{1.0}{$\pm$0.44} & 13.0\scalebox{1.0}{$\pm$0.55} \\
LwF & 63.0\scalebox{1.0}{$\pm$0.34} & 51.9\scalebox{1.0}{$\pm$0.88} & 47.5\scalebox{1.0}{$\pm$0.62} & 35.9\scalebox{1.0}{$\pm$0.32} & 27.8\scalebox{1.0}{$\pm$0.29} \\
iCaRL$^*$ & 65.3\scalebox{1.0}{$\pm$0.83} & 52.9\scalebox{1.0}{$\pm$0.39} & 48.2\scalebox{1.0}{$\pm$0.70} & 34.8\scalebox{1.0}{$\pm$0.34} & 27.3\scalebox{1.0}{$\pm$0.17} \\
A-GEM & 34.0\scalebox{1.0}{$\pm$1.86} & 14.5\scalebox{1.0}{$\pm$0.55} & 7.3\scalebox{1.0}{$\pm$1.78} & 15.4\scalebox{1.0}{$\pm$0.24} & 9.0\scalebox{1.0}{$\pm$0.30} \\
EEIL & 59.5\scalebox{1.0}{$\pm$0.41} & 41.8\scalebox{1.0}{$\pm$0.78} & 37.9\scalebox{1.0}{$\pm$6.11} & 15.1\scalebox{1.0}{$\pm$0.00} & 7.5\scalebox{1.0}{$\pm$0.19} \\
GD & 68.0\scalebox{1.0}{$\pm$0.75} & 47.2\scalebox{1.0}{$\pm$0.33} & 41.8\scalebox{1.0}{$\pm$0.25} & 15.7\scalebox{1.0}{$\pm$2.08} & 12.2\scalebox{1.0}{$\pm$0.14} \\
Mnemonics$^{\dagger *}$ & 65.6\scalebox{1.0}{$\pm$1.55} & 50.7\scalebox{1.0}{$\pm$0.72} & 47.9\scalebox{1.0}{$\pm$0.71} & 36.3\scalebox{1.0}{$\pm$0.30} & 27.7\scalebox{1.0}{$\pm$0.78} \\
BiC & 65.5\scalebox{1.0}{$\pm$0.81} & 50.8\scalebox{1.0}{$\pm$0.69} & 47.2\scalebox{1.0}{$\pm$0.71} & 37.0\scalebox{1.0}{$\pm$0.58} & 29.1\scalebox{1.0}{$\pm$0.34} \\
DER++ & 63.1\scalebox{1.0}{$\pm$1.12} & 54.6\scalebox{1.0}{$\pm$1.21} & 48.9\scalebox{1.0}{$\pm$1.18} & 37.4\scalebox{1.0}{$\pm$0.72} & 32.1\scalebox{1.0}{$\pm$0.44} \\
HAL & 43.0\scalebox{1.0}{$\pm$3.10} & 20.0\scalebox{1.0}{$\pm$1.15} & 17.0\scalebox{1.0}{$\pm$0.83} & 4.6\scalebox{1.0}{$\pm$0.58} & 4.8\scalebox{1.0}{$\pm$0.50} \\
\textit{HAT} & 62.6\scalebox{1.0}{$\pm$1.31} & 41.5\scalebox{1.0}{$\pm$0.80} & 25.9\scalebox{1.0}{$\pm$0.56} & 38.9\scalebox{1.0}{$\pm$1.62} & 30.1\scalebox{1.0}{$\pm$0.52} \Tstrut \\
\textit{HyperNet} & 56.7\scalebox{1.0}{$\pm$1.23} & 32.4\scalebox{1.0}{$\pm$1.07} & 24.5\scalebox{1.0}{$\pm$1.12} & 8.9\scalebox{1.0}{$\pm$0.58} & 7.0\scalebox{1.0}{$\pm$0.52} \\
\textit{Sup} & 62.6\scalebox{1.0}{$\pm$1.11} & 46.8\scalebox{1.0}{$\pm$0.34} & 36.0\scalebox{1.0}{$\pm$0.32} & 41.5\scalebox{1.0}{$\pm$1.17} & 35.7\scalebox{1.0}{$\pm$0.40} \\
\hline
\textit{HAT+CSI} & 85.2\scalebox{1.0}{$\pm$0.92} & 62.9\scalebox{1.0}{$\pm$1.07} & 53.6\scalebox{1.0}{$\pm$0.84} & 47.0\scalebox{1.0}{$\pm$0.38} & 46.2\scalebox{1.0}{$\pm$0.30} \\
\textit{Sup+CSI} & 87.4\scalebox{1.0}{$\pm$0.40} & 66.6\scalebox{1.0}{$\pm$0.23} & 60.5\scalebox{1.0}{$\pm$0.89} & 47.7\scalebox{1.0}{$\pm$0.30} & 46.3\scalebox{1.0}{$\pm$0.30} \\
HAT+CSI+c & 85.2\scalebox{1.0}{$\pm$0.94} & 63.6\scalebox{1.0}{$\pm$0.69} & 55.4\scalebox{1.0}{$\pm$0.79} & 51.4\scalebox{1.0}{$\pm$0.38} & 46.5\scalebox{1.0}{$\pm$0.26} \\
Sup+CSI+c & 86.2\scalebox{1.0}{$\pm$0.79} & 67.0\scalebox{1.0}{$\pm$0.14} & 60.4\scalebox{1.0}{$\pm$1.04} & 48.2\scalebox{1.0}{$\pm$0.35} & 46.1\scalebox{1.0}{$\pm$0.32} \\ 
\bottomrule
\end{tabular}
}
\caption*{
Average classification accuracy. The results are based on the class prediction method defined with WP and TP in Eq.~\ref{eq:wip_max_softmax} and Eq.~\ref{eq:tp_max_softmax}, respectively. The results can be improved by finding optimal temperature scaling parameters.
}
\label{Tab:maintable_diff}
\end{table}

\begin{proof}
[Proof of Theorem \ref{thm:tp_ood_extension}.]
\label{prf:tp_ood_extension}
i) Assume $x \in \mathbf{X}_{k_0}$. 

For $k = k_0$, we have
$$
\begin{aligned}
    H_{OOD, k_0} (x) &= - \log \mathbf{P}'_{k_0} (x \in \mathbf{X}_{k_0} | D) \\ 
    &= - \frac{1}{\tau_{k_0}} \log \mathbf{P}(x \in \mathbf{X}_{k_0} | D) \\
    &= \frac{1}{\tau_{k_0}} H_{TP} (x) \leq \frac{\delta}{\tau_{k_0}}.
\end{aligned}
$$
For $k \neq k_0$, we have
$$
\begin{aligned}
    H_{OOD, k} (x) &= - \log \mathbf{P}'_k (x \notin \mathbf{X}_k | D) \\
    &= - \log (1 - \mathbf{P}'_k (x \in \mathbf{X}_k | D))\\
    &= - \log (1 - \mathbf{P} (x \in \mathbf{X}_k | D)^{1 / \tau_k})\\
    &= - \log (1 - (1 - \mathbf{P}(x \in \cup_{k' \neq k} \mathbf{X}_{k'} | D))^{1 / \tau_k})\\ 
    &\leq - \log (1 - (1 - \mathbf{P}(x \in \mathbf{X}_{k_0} | D))^{1 / \tau_k})\\
    &= - \log (1 - (1 - e^{-H_{TP}(x)})^{1 / \tau_k})\\
    &\leq - \log (1 - (1 - e^{-\delta})^{1 / \tau_k}).\\
\end{aligned}
$$

ii) Assume $x \in \mathbf{X}_{k_0}$. 

For $k = k_0$, by $H_{OOD, {k_0}} (x) \leq \delta_{k_0}$, we have 
$$- \log \mathbf{P}'_{k_0} (x \in \mathbf{X}_{k_0} | D) \leq \delta_{k_0},$$ 
which means 
$$\mathbf{P}'_{k_0} (x \in \mathbf{X}_{k_0} | D) \geq e^{-\delta_{k_0}}.$$
For $k \neq k_0$, by $H_{OOD, {k}} (x) \leq \delta_{k}$, we have 
$$- \log \mathbf{P}'_{k} (x \notin \mathbf{X}_{k} | D) \leq \delta_{k},$$
which means 
$$\mathbf{P}'_{k} (x \in \mathbf{X}_{k} | D) \leq 1 - e^{-\delta_{k}}.$$

Therefore, we have
$$
\begin{aligned}
    \mathbf{P} (x \in \mathbf{X}_{k_0} | D) &= \frac{\mathbf{P}'_{k_0} (x \in \mathbf{X}_{k_0} |D)^{1/\tau_{k_0}}}{\sum_k \mathbf{P}'_k (x \in \mathbf{X}_k |D)^{1/\tau_{k}}} \\
    &\geq \frac{e^{-\delta_{k_0} / \tau_{k_0}}}{1 + \sum_{k\neq k_0} (1 - e^{-\delta_k})^{1 / \tau_k}} \\
    &= \frac{e^{-\delta_{k_0} / \tau_{k_0}}}{1 - (1 - e^{-\delta_{k_0}})^{1 / \tau_{k_0}} + \sum_{k} (1 - e^{-\delta_k})^{1 / \tau_k}} \\
    &= \frac{e^{-\delta_{k_0} / \tau_{k_0}}}{1 - (1 - e^{-\delta_{k_0}})^{1 / \tau_{k_0}}} \cdot
    \frac{1}{1 + \frac{\sum_{k} (1 - e^{-\delta_k})^{1 / \tau_k}}{1 - (1 - e^{-\delta_{k_0}})^{1 / \tau_{k_0}}}}.
\end{aligned}
$$

Hence, 
$$
\begin{aligned}
    H_{TP} (x) &= -\log \mathbf{P} (x \in \mathbf{X}_{k_0} | D) \\
    &\leq - \log \frac{e^{-\delta_{k_0} / \tau_{k_0}}}{1 - (1 - e^{-\delta_{k_0}})^{1 / \tau_{k_0}}} \cdot
    \frac{1}{1 + \frac{\sum_{k} (1 - e^{-\delta_k})^{1 / \tau_k}}{1 - (1 - e^{-\delta_{k_0}})^{1 / \tau_{k_0}}}} \\
    &= \frac{\delta_{k_0}}{\tau_{k_0}} + \log [1 - (1 - e^{-\delta_{k_0}})^{1 / \tau_{k_0}}] 
    + \log \left[1 + \frac{\sum_{k} (1 - e^{-\delta_k})^{1 / \tau_k}}{1 - (1 - e^{-\delta_{k_0}})^{1 / \tau_{k_0}}}\right]\\
    &\leq \frac{\delta_{k_0}}{\tau_{k_0}} + \frac{\sum_{k} (1 - e^{-\delta_k})^{1 / \tau_k}}{1 - (1 - e^{-\delta_{k_0}})^{1 / \tau_{k_0}}} \\
    &= \sum_k \frac{\mathbf{1}_{x \in \mathbf{X}_k} \delta_{k}}{\tau_{k}} + \frac{\sum_{k} (1 - e^{-\delta_k})^{1 / \tau_k}}{\sum_k \mathbf{1}_{x \in \mathbf{X}_k} (1 - (1 - e^{-\delta_{k}})^{1 / \tau_{k}})}.
\end{aligned}
$$
\end{proof}

\section{Output Calibration} \label{apx:calibration}
In this section, we discuss the output calibration technique used in Section~\ref{sec.HAT+CSI} to improve the final prediction accuracy. Even if an OOD detection of each task was perfect (i.e. the model accepts and rejects IND and OOD samples perfectly), the system could make an incorrect class prediction if the magnitudes of outputs across different tasks are different. To ensure that the output values are comparable, we calibrate the outputs by scaling $\alpha_k$ and shifting $\beta_k$ for each task. The optimal parameters $(\alpha_k, \beta_k) \in R \times R$ can be found by solving the optimization problem using samples in the memory buffer. More precisely, denote the memory buffer $\mathcal{M}$ and calibration parameters $( \alpha, \beta ) \in R^{T} \times R^{T}$, where $T$ is the number of learned tasks. After training $T$th task, we find optimal calibration parameters by minimizing the cross-entropy loss,
\begin{align}
    \mathcal{L} = - \frac{1}{|\mathcal{M}|} \sum_{(x, y) \in \mathcal{M}} \log p(y | x)
\end{align}
where $p(c | x)$ is computed using the softmax,
\begin{align}
    \text{softmax} \bigoplus [ \alpha_k f(x)_k + \beta_k ]
\end{align}
where $\bigoplus$ indicates the concatenation and $f(x)_k$ is the output of task $k$ as Eq.~\ref{eq:cil_pred}. Given the optimal parameters $(\alpha^*, \beta^*)$, we make final prediction as
\begin{align}
    \hat{y} = \argmax \bigoplus [ \alpha_k^* f(x)_k + \beta_k^* ]
\end{align}
If we use $OOD_k = \sigma ( \alpha_k^* f(x)_k + \beta_k^* )$, where $\sigma$ is the sigmoid, and $TP_k = OOD_k / \sum_{k'} OOD_{k'}$, the theoretical results in Section~\ref{sec.theorem} hold.

\section{TIL (WP) Results} \label{apx:til_results}
The TIL (WP) results of all the systems are reported in Table~\ref{Tab:til_full}. HAT and Sup show strong performances compared to the other baselines as they leverage task-specific parameters. However, as shown in Theorem~\ref{thm:ce}, the CIL depends on TP (or OOD). Without an OOD detection mechanism in HAT or Sup, they perform poorly in CIL as shown in the main paper. The contrastive learning in CSI also improves the IND prediction (i.e., WP), and this along with OOD detection results in strong CIL performance.

\begin{table}[t]
\centering
\caption{The TIL Results of All the Systems.}
\resizebox{0.9\columnwidth}{!}{
\begin{tabular}{l c c c c c c}
\toprule
\multirow{1}{*}{Method}  & \multicolumn{1}{c}{C10-5T}  &  \multicolumn{1}{c}{C100-10T} &  \multicolumn{1}{c}{C100-20T} &  \multicolumn{1}{c}{T-5T} & \multicolumn{1}{c}{T-10T} & \multicolumn{1}{c}{Avg.} \\
\midrule
\textit{OWM} & 85.0\scalebox{1.0}{$\pm$0.07} & 59.6\scalebox{1.0}{$\pm$0.83} & 65.4\scalebox{1.0}{$\pm$0.48} & 22.4\scalebox{1.0}{$\pm$0.87} & 28.1\scalebox{1.0}{$\pm$0.55} & 52.1 \\
\textit{MUC} & 95.1\scalebox{1.0}{$\pm$0.10} & 77.3\scalebox{1.0}{$\pm$0.83} & 73.4\scalebox{1.0}{$\pm$9.16} & 55.9\scalebox{1.0}{$\pm$0.26} & 47.2\scalebox{1.0}{$\pm$0.22} & 69.8 \\
\textit{PASS}$^{\dagger}$ & 83.8\scalebox{1.0}{$\pm$0.68} & 72.1\scalebox{1.0}{$\pm$0.70} & 76.8\scalebox{1.0}{$\pm$0.32} & 49.9\scalebox{1.0}{$\pm$0.56} & 46.5\scalebox{1.0}{$\pm$0.39} & 65.8 \\
LwF & 95.2\scalebox{1.0}{$\pm$0.30} & 86.2\scalebox{1.0}{$\pm$1.00} & 89.0\scalebox{1.0}{$\pm$0.45} & 56.4\scalebox{1.0}{$\pm$0.48} & 55.3\scalebox{1.0}{$\pm$0.35} & 76.4 \\
iCaRL & 94.9\scalebox{1.0}{$\pm$0.34} & 84.2\scalebox{1.0}{$\pm$1.04} & 85.7\scalebox{1.0}{$\pm$0.68} & 54.5\scalebox{1.0}{$\pm$0.29} & 52.7\scalebox{1.0}{$\pm$0.37} & 74.4 \\
A-GEM & 82.5\scalebox{1.0}{$\pm$4.19} & 58.9\scalebox{1.0}{$\pm$2.14} & 56.4\scalebox{1.0}{$\pm$7.03} & 32.1\scalebox{1.0}{$\pm$0.90} & 30.1\scalebox{1.0}{$\pm$0.29} & 52.0 \\
EEIL & 93.4\scalebox{1.0}{$\pm$0.02} & 83.1\scalebox{1.0}{$\pm$3.13} & 88.4\scalebox{1.0}{$\pm$2.07} & 30.3\scalebox{1.0}{$\pm$0.89} & 25.9\scalebox{1.0}{$\pm$0.04} & 64.2 \\
GD & 94.4\scalebox{1.0}{$\pm$0.09} & 82.2\scalebox{1.0}{$\pm$0.18} & 85.7\scalebox{1.0}{$\pm$0.20} & 30.7\scalebox{1.0}{$\pm$1.79} & 32.2\scalebox{1.0}{$\pm$0.37} & 65.0 \\
Mnemonics$^{\dagger *}$ & 94.5\scalebox{1.0}{$\pm$0.46} & 82.3\scalebox{1.0}{$\pm$0.30} & 86.2\scalebox{1.0}{$\pm$0.46} & 54.8\scalebox{1.0}{$\pm$0.16} & 52.9\scalebox{1.0}{$\pm$0.66} & 74.1 \\
BiC & 95.4\scalebox{1.0}{$\pm$0.35} & 84.6\scalebox{1.0}{$\pm$0.48} & 88.7\scalebox{1.0}{$\pm$0.19} & 61.5\scalebox{1.0}{$\pm$0.60} & 62.2\scalebox{1.0}{$\pm$0.45} & 78.5 \\
DER++ & 92.0\scalebox{1.0}{$\pm$0.54} & 84.0\scalebox{1.0}{$\pm$9.43} & 86.6\scalebox{1.0}{$\pm$9.44} & 57.4\scalebox{1.0}{$\pm$1.31} & 60.0\scalebox{1.0}{$\pm$0.74} & 76.0 \\
HAL & 82.8\scalebox{1.0}{$\pm$1.94} & 49.5\scalebox{1.0}{$\pm$1.51} & 61.1\scalebox{1.0}{$\pm$1.43} & 13.2\scalebox{1.0}{$\pm$0.77} & 21.2\scalebox{1.0}{$\pm$0.41} & 26.2 \\
\textit{HAT} & 96.7\scalebox{1.0}{$\pm$0.18} & 84.0\scalebox{1.0}{$\pm$0.23} & 85.0\scalebox{1.0}{$\pm$0.98} & 61.2\scalebox{1.0}{$\pm$0.72} & 63.8\scalebox{1.0}{$\pm$0.41} & 78.1 \\
\textit{HyperNet} & 94.6\scalebox{1.0}{$\pm$0.37} & 76.8\scalebox{1.0}{$\pm$1.22} & 83.5\scalebox{1.0}{$\pm$0.98} & 23.9\scalebox{1.0}{$\pm$0.60} & 28.0\scalebox{1.0}{$\pm$0.69} & 61.4 \\
\textit{Sup} & 96.6\scalebox{1.0}{$\pm$0.21} & 87.9\scalebox{1.0}{$\pm$0.27} & 91.6\scalebox{1.0}{$\pm$0.15} & 64.3\scalebox{1.0}{$\pm$0.24} & 68.4\scalebox{1.0}{$\pm$0.22} & 81.8 \\
\hline
\textit{HAT+CSI} & 98.7\scalebox{1.0}{$\pm$0.06} & 92.0\scalebox{1.0}{$\pm$0.37} & 94.3\scalebox{1.0}{$\pm$0.06} & 68.4\scalebox{1.0}{$\pm$0.16} & 72.4\scalebox{1.0}{$\pm$0.21} & 85.2 \\
\textit{Sup+CSI} & 98.7\scalebox{1.0}{$\pm$0.07} & 93.0\scalebox{1.0}{$\pm$0.13} & 95.3\scalebox{1.0}{$\pm$0.20} & 65.9\scalebox{1.0}{$\pm$0.25} & 74.1\scalebox{1.0}{$\pm$0.28} & 85.4 \\
\bottomrule
\end{tabular}
}
\caption*{
{The calibrated versions (+c) of our methods are omitted as calibration does not affect TIL performance. Exemplar-free methods are italicized. The last column Avg. shows the average TIL accuracy of each method over all datasets.}
}
\label{Tab:til_full}
\end{table}

\section{Hyper-parameters}\label{apx:hyper_params}
Here we report the hyper-parameters that we did not report in the main paper due to space limitations. We mainly report the hyper-parameters of the proposed methods, HAT+CSI, Sup+CSI, and their calibrated versions. For all the experiments of the proposed methods, {we use the values chosen by the original CSI~\citep{tack2020csi}.}
We use LARS~\citep{you2017large} optimization with a learning rate 0.1 for training the feature extractor. We linearly increase the learning rate by 0.1 per epoch for the first 10 epochs. After that, we use cosine scheduler~\citep{loshchilov2016sgdr} without restart as in \citep{tack2020csi,chen2020simple}. After training the feature extractor, we train the linear classifier for 100 epochs with SGD with a learning rate 0.1 and reduce the rate by 0.1 at 60, 75, and 90 epochs. For all the experiments except MNIST, we train the feature extractor for 700 epochs with batch size 128. 

For the following hyper-parameters, we use 10\% of training data for validation to find a good set of values. For the number of epochs and batch size for MNIST, Sup+CSI trains for 1000 epochs with a batch size of 32 while HAT+CSI trains for 700 epochs with a batch size of 256.
The hard attention regularization penalty $\lambda_i$ in HAT is different by experiments and task $i$. For MNIST, we use $\lambda_1 = 0.25$, and $\lambda_2 = \cdots = \lambda_5 = 0.1$. For C10-5T, we use $\lambda_1=1.0$, and $\lambda_2 = \cdots = \lambda_5 = 0.75$. For C100-10T, $\lambda_1=1.5$, and $\lambda_2 = \cdots = \lambda_{10} = 1.0$ are used. For C100-20T, $\lambda_1 = 3.5$, and $\lambda_2 = \cdots = \lambda_{20} = 2.5$ are used. For T-5T, $\lambda_i = 0.75$ for all tasks, and lastly, for T-10T, $\lambda_1 = 1.0$, and $\lambda_2 = \cdots = \lambda_{10} = 0.75$ are used. We use larger $\lambda_1$ for the first task than the later tasks as we have found that the larger regularization on the first task results in better accuracy. This is by the definition of regularization in HAT. The earlier task gives a lower penalty than the later tasks. We manually give a larger penalty to the first task. We did not search hyper-parameter $\lambda_t$ for tasks $t \geq 2$. For sparsity in Sup+CSI, we simply choose the least sparsity value of 32 used in the original Sup paper without parameter search.

Calibration methods (HAT+CSI+c and Sup+CSI+c) are based on its memory-free versions (i.e. HAT+CSI and Sup+CSI). Therefore, the model training part uses the same hyper-parameters as their calibration-free counterparts. 
{For calibration training, 
we use SGD with a learning rate 0.01, 160 training iterations, and a batch size of 15 for HAT+CSI+c for all experiments. For Sup+CSI+c, we use the same values for all the experiments except for MNIST.
For MNIST,
we use a learning rate 0.05, batch size of 8, and run 280 iterations.
}

For the baselines, we use the hyper-parameters reported in the original papers or their code. If the hyper-parameters are unknown or the code does not reproduce the result (e.g., the baseline did not implement a particular dataset or the code had undergone significant version change), we search for the hyper-parameters as we did for HAT+CSI and Sup+CSI.

\section{Forgetting Rate}\label{apx:forgetting}
We discuss forgetting rate (i.e., backward transfer)~\citep{Lopez2017gradient}, which is defined for task $t$ as
\begin{align}
    \mathcal{F}^{t} = \frac{1}{t-1} \sum_{k=1}^{t-1} \mathcal{A}_{k}^{\text{init}} - \mathcal{A}_{k}^{t},
\end{align}
where $\mathcal{A}_{k}^{\text{init}}$ is the classification accuracy of task $k$'s data after learning it for the first time and $\mathcal{A}_{k}^{t}$ is the accuracy of task $k$'s data after learning task $t$. We report the forgetting rate after learning the last task.

\begin{figure}[h!]
    \centering
    \includegraphics[width=4in]{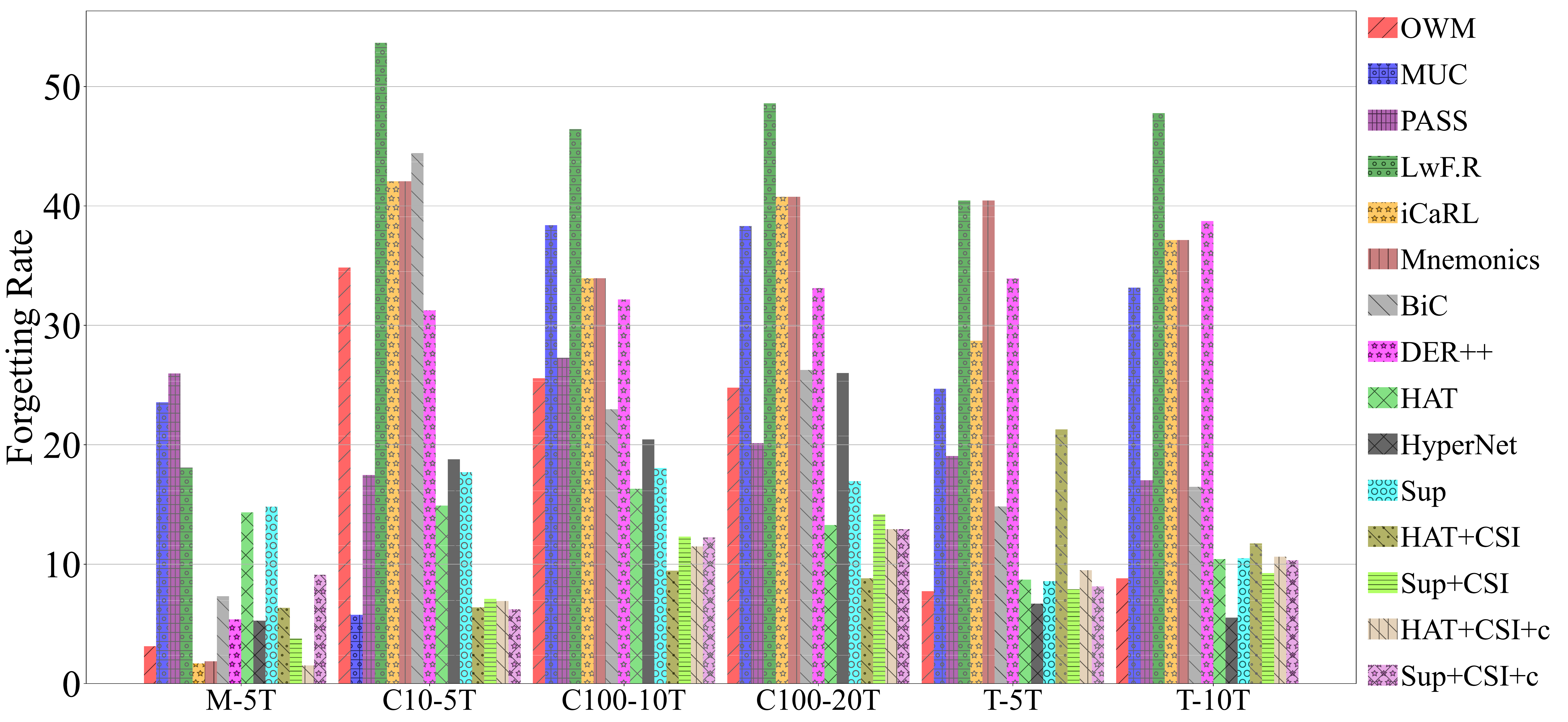}
    \caption{
    Average forgetting rate (\%). The lower the value, the better the method is on forgetting.
    }
    \label{fig:forget}
\end{figure}

Figure~\ref{fig:forget} shows the forgetting rates of each method. Some methods (e.g., OWM, iCaRL) experience less forgetting than the proposed methods HAT+CSI and Sup+CSI on M-5T. On this dataset, all the systems performed well. For instance, OWM and iCaRL achieve 95.8\% and 96.0\% accuracy while HAT+CSI and HAT+CSI+c achieve 94.4 and 96.9\% accuracy. As we have noted in the main paper, Sup+CSI and Sup+CSI+c achieve only 80.7 and 81.0 on M-5T although they have improved drastically from 70.1\% of the base method Sup.

OWM and HyperNet show lower forgetting rates than HAT+CSI+c and Sup+CSI+c on T-5T and T-10T. However, they are not able to adapt to new classes as OWM and HyperNet achieve the classification accuracy of only 10.0\% and 7.9\%, respectively, on T-5T and 8.6\% and 5.3\% on T-10T. HAT+CSI+c and Sup+CSI+c achieves 51.7\% and 49.2\%, respectively, on T-5T and 47.6\% and 46.2\% on T-10T.

In fact, the performance reduction (i.e., forgetting) in our proposed methods occurs not because the systems forget the previous task knowledge, but because the systems learn more classes and the classification naturally becomes harder. The continual learning mechanisms (HAT and Sup) used in the proposed methods experience little or no forgetting because they find an independent subset of parameters for each task, and the learned parameters are not interfered with during training. For the forgetting rate results in the TIL setting, refer to our earlier workshop paper \citep{kim2022continual}.

\section{Pseudo-Code} \label{appendix:pseudo}
For task $k$, Let $p(y | \vx, k) = \text{softmax} f(h(\vx, k; \theta, \ve^{k}); \phi_k)$, where $\theta$ is the parameters for the adapter, $\ve^{k}$ is the trainable embedding for hard attentions, and $\phi_{k}$ is the set of parameters of the classification head of task $k$. Algorithm~\ref{MOREtrain} and Algorithm~\ref{MOREpred} describe the training and testing processes, respectively. We add comments with the symbol ``//''.
\begin{algorithm}[H]
\caption{Training MORE}\label{MOREtrain}
\begin{algorithmic}[1]
    \Require Memory $\mathcal{M}$, learning rate $\lambda$, a sequence of tasks $\mathcal{D}=\{\mathcal{D}^k \}_{k=1}$, and parameters $\{\theta, \ve, \phi \}$, 
    where $\ve$ and $\phi$ are collections of task embeddings $\ve^{k}$ and task heads $\phi_{k}$
    \Statex // CL starts
    \For{each task data $\mathcal{D}^k \in \mathcal{D}$}
        \Statex \hspace{0.5cm} // Model training
        \For{a batch $(\mX_{i}^{k}, \vy)$ in $\mathcal{D}^k$, until converge}
            \State $\mX_s$ = $sample(\mathcal{M})$
            \State Compute loss (Eq.~\ref{ood_obj}$+$Eq.~\ref{eq.loss_hat}) and gradients of parameters
            \State Modify the model parameters $\nabla \theta \leftarrow \nabla \theta'$ using Eq.~\ref{eq:grad_mod}
            \State Update parameters as $\theta \leftarrow \theta - \lambda \nabla \theta, \ \ve^{k} \leftarrow \ve^{k} - \lambda \partial \mathcal{L}, \ \phi_{k} \leftarrow \phi_{k} - \lambda \partial \mathcal{L}$ 
        \EndFor
        \Statex \hspace{0.5cm} // Back-updating in Section~\ref{sec.backward}
        \State Randomly select $\tilde{\mathcal{D}} \subset \mathcal{D}^{k}$, where $|\tilde{\mathcal{D}}| = |\mathcal{M}|$
        \For{each task $j$, until converge}
            \State minimize $\mathcal{L}(\phi_{j})$ of Eq.~\ref{backward}
        \EndFor
        \Statex \hspace{0.5cm} // Obtain statistics in Section~\ref{sec.ensemble_scores}
        \State Compute $\vmu^k_{j}$ using Eq.~\ref{eq.compute_mu} and $\mS^{k}$ using Eq.~\ref{eq.compute_sigma}
    \EndFor
\end{algorithmic}
\end{algorithm}

\begin{algorithm}[H]
\caption{MORE Prediction}\label{MOREpred}
\begin{algorithmic}[1]
    \Require Test instance $\vx$ and parameters $\{\theta, \ve, \phi \}$
    \For {each task $k$}
    \State Obtain $p(\mathcal{Y}^{k} | \vx, k)$
    \State Obtain $s^{k}(\vx)$ using Eq.~\ref{eq.ensemble}
    \EndFor
    \Statex // Concatenate outputs for final prediction $y$ and OOD score $s$
    \State $y = \argmax \bigoplus_{1\leq k \leq t}  p(\mathcal{Y}^{k} | \vx, k) s^{k}(\vx)$ (i.e. Eq.~\ref{final_prediction})
    \State $s = \max \bigoplus_{1\leq k \leq t}  p(\mathcal{Y}^{k} | \vx, k) s^{k}(\vx)$
\end{algorithmic}
\end{algorithm}

\section{Size of Memory Required}\label{appendix:size_of_memory}
In this section, we report the memory size required by each method in Section~\ref{sec.experiment2}. The sizes include network size, replay buffer, and all other parameters or examples kept in memory simultaneously for a model to be functional.

We use an `entry' to refer to a parameter or element in a vector or matrix to calculate the total memory required to train and test. The pre-trained backbone uses 21.6 million (M) entries (parameters). The adapter modules use 1.2M entries for CIFAR10 and 2.4M for other datasets. The baselines and our method use 22.9M and 24.1M entries for the model on CIFAR10 and other datasets, respectively. The unique technique of each method may add additional entries for training and test/inference. 

The total memory required for each method without considering the replay memory buffer is reported in Table~\ref{Tab:memory}. Our method is competitive in memory consumption. Baselines such as OWM and A-GEM take considerably more memory than our system. iCaRL and DER++ take the least amount of memory, but the differences between our method and theirs are only 0.8M, 1.8M, 3.6M, 1.0M, and 1.8M for C10-5T, C100-10T, C100-20T, T-5T, and T-10T.

\begin{table*}[t]
\centering
\caption{Required Memory}
\resizebox{0.7\columnwidth}{!}{
\begin{tabular}{l c c c c c}
\toprule
\multirow{1}{*}{Method}  & \multicolumn{1}{c}{C10-5T}  &  \multicolumn{1}{c}{C100-10T} &  \multicolumn{1}{c}{C100-20T} &  \multicolumn{1}{c}{T-5T} & \multicolumn{1}{c}{T-10T} \\
\midrule
OWM & 26.6M & 28.1M & 28.1M & 28.2M & 28.2M \\
MUC & 22.9M & 24.1M & 24.1M & 24.1M & 24.1M \\
PASS & 22.9M & 24.2M & 24.2M & 24.3M & 24.4M \\
LwF & 22.9M & 24.1M & 24.1M & 24.1M & 24.1M \\
iCaRL & 22.9M & 24.1M & 24.1M & 24.1M & 24.1M \\
A-GEM & 26.5M & 31.4M & 31.4M & 31.5M & 31.5M \\
EEIL & 22.9M & 24.1M & 24.1M & 24.1M & 24.1M \\
GD & 22.9M & 24.1M & 24.1M & 24.1M & 24.1M \\
BiC & 22.9M & 24.1M & 24.1M & 24.1M & 24.1M \\
DER++ & 22.9M & 24.1M & 24.1M & 24.1M & 24.1M \\
HAL & 22.9M & 24.1M & 24.1M & 24.1M & 24.1M \\
HAT & 23.0M & 24.7M & 25.4M & 24.6M & 25.1M \\
Sup & 24.7M & 33.7M & 45.7M & 27.7M & 33.7M \\
MORE & 23.7M & 25.9M & 27.7M & 25.1M & 25.9M  \\
\bottomrule
\end{tabular}
}
\caption*{
Total memory (in entries) required for each method without the replay memory buffer.
}
\label{Tab:memory}
 \vspace{-3mm}
\end{table*}

Many replay-based methods (e.g., iCaRL, HAL) need to save the previous network for distillation during training. This requires an additional 1.2M or 2.4M entries for CIFAR10 or other datasets. Our method does not save the previous model as we do not use distillation.

Note that a large memory consumption usually comes from the memory buffer as the raw data is of size 32*32*3 or 64*64*3 for CIFAR and T-ImageNet. For a memory buffer of size 2000, a system needs 6.1M or 24.6M entries for CIFAR or T-ImageNet. Therefore, saving a smaller number of samples is important for reducing memory consumption. As we demonstrated in Table~\ref{Tab:maintable} and Table~\ref{Tab:smaller_memory} in the main paper, our method performs better than the baselines even with a smaller memory buffer. In Table~\ref{Tab:maintable}, we use large memory sizes (e.g., 200 and 2000 for CIFAR10 and other datasets). In Table~\ref{Tab:smaller_memory}, we reduce the memory size by half. When we compare the accuracy of our method in Table~\ref{Tab:smaller_memory} to those of the baselines in Table~\ref{Tab:maintable}, our method still outperforms them on all datasets. Our method with a smaller memory buffer achieves average classification accuracy of 88.13, 71.69, 71.29, 64.17, 61.90 on C10-5T, C100-10T, C100-20T, T-5T, and T-10T. On the other hand, the best baselines achieve 88.98, 69.73, 70.03, 61.03, 58.34 on the same experiments with a larger memory buffer.

\section{OOD Detection on Different Datasets}\label{appendix:different_ood}
\begin{figure}
    \centering
    \caption{AUC of the continually trained models following the final task: (a) We use all 10 classes learned from 5 tasks of CIFAR-10 as IND and consider LSUN, CIFAR-10, and CIFAR-100 as OOD. (b) We use all 200 classes learned from 10 tasks of Tiny-ImageNet as IND and consider LSUN, CIFAR-10, and CIFAR-100 as OOD.}
    \resizebox{0.6\columnwidth}{!}{
    \subfigure[]{
        \centering
        \captionsetup{justification=centering} 
        \begin{tabular}{l c c c c}
        \toprule
        & MUC & Sup & HAT & MORE \\
        \midrule
        LSUN & 92.60 & 93.60 & 92.88 & 95.61 \\
        CIFAR-100 & 84.23 & 86.64 & 87.88 & 89.56 \\
        Tiny-ImageNet & 93.43 & 94.78 & 91.90 & 95.45 \\
        \bottomrule
        \end{tabular}
    }
    }
    \resizebox{0.6\columnwidth}{!}{
    \subfigure[]{
        \centering
        \captionsetup{justification=centering} 
        \begin{tabular}{l c c c c}
        \toprule
        & MUC & Sup & HAT & MORE \\
        \midrule
        LSUN & 82.33 & 80.26 & 76.82 & 82.96 \\
        CIFAR-10 & 75.52 & 73.86 & 78.98 & 79.76 \\
        CIFAR-100 & 78.58 & 75.98 & 77.62 & 80.21 \\
        \bottomrule
        \end{tabular}
    }
    }
\label{tab:different_ood}
\end{figure}

In Section~\ref{sec.ood} of the main text, we used the classes from unseen tasks of the same dataset used in continual learning as OOD. Here, we also use classes drawn from a dataset that is completely different from the datasets used in the continual learning process. We use the continual learning models after the final task of C10-5T (with the least number of classes, 10) and T-ImageNet-10T (with the largest number of classes, 200) to detect novel samples from completely different datasets. We compare the novelty/OOD detection performance with the three best-performing baseline methods, MUC, HAT, and SupSup. Table~\ref{tab:different_ood} shows that our method, MORE, outperforms the baselines on the completely different OOD classes from different datasets.

\bibliographystyle{elsarticle-harv} 
\bibliography{elsarticle}

\end{document}